\def\*#1{\mathbf{#1}}
\crefname{section}{Sec.}{Secs.}
\Crefname{section}{Section}{Sections}
\Crefname{table}{Table}{Tables}
\crefname{table}{Tab.}{Tabs.}
\Crefname{equation}{Equation}{Equations}
\crefname{equation}{}{}
\begin{document}
\pagestyle{headings}
\mainmatter
\def\ECCVSubNumber{4857}  % Insert your submission number here

\title{Self-Supervised Classification Network} % Replace with your title

% INITIAL SUBMISSION 
\begin{comment}
\titlerunning{ECCV-22 submission ID \ECCVSubNumber} 
\authorrunning{ECCV-22 submission ID \ECCVSubNumber} 
\author{Anonymous ECCV submission}
\institute{Paper ID \ECCVSubNumber}
\end{comment}
%******************

% CAMERA READY SUBMISSION
% \begin{comment}
\titlerunning{Self-Supervised Classification Network}
% If the paper title is too long for the running head, you can set
% an abbreviated paper title here
%
\author{Elad Amrani\inst{1,2} \and
Leonid Karlinsky\inst{1} \and
Alex Bronstein\inst{2}}
\authorrunning{E. Amrani et al.}
% First names are abbreviated in the running head.
% If there are more than two authors, 'et al.' is used.
%
\institute{IBM Research-AI \and Technion}

% \end{comment}
%******************
\maketitle

\begin{abstract}
We present Self-Classifier -- a novel self-supervised end-to-end classification learning approach. Self-Classifier learns labels and representations simultaneously in a single-stage end-to-end manner by optimizing for same-class prediction of two augmented views of the same sample. To guarantee non-degenerate solutions (i.e., solutions where all labels are assigned to the same class) we propose a mathematically motivated variant of the cross-entropy loss that has a uniform prior asserted on the predicted labels. In our theoretical analysis, we prove that degenerate solutions are not in the set of optimal solutions of our approach. Self-Classifier is simple to implement and scalable. Unlike other popular unsupervised classification and contrastive representation learning approaches, it does not require any form of pre-training, expectation-maximization, pseudo-labeling, external clustering, a second network, stop-gradient operation, or negative pairs. Despite its simplicity, our approach sets a new state of the art for unsupervised classification of ImageNet; and even achieves comparable to state-of-the-art results for unsupervised representation learning. Code is available at \url{https://github.com/elad-amrani/self-classifier}. 
\keywords{Self-Supervised Classification, Representation Learning}
\end{abstract}

\begin{figure}[t]
\begin{center}
  \includegraphics[width=0.48\linewidth]{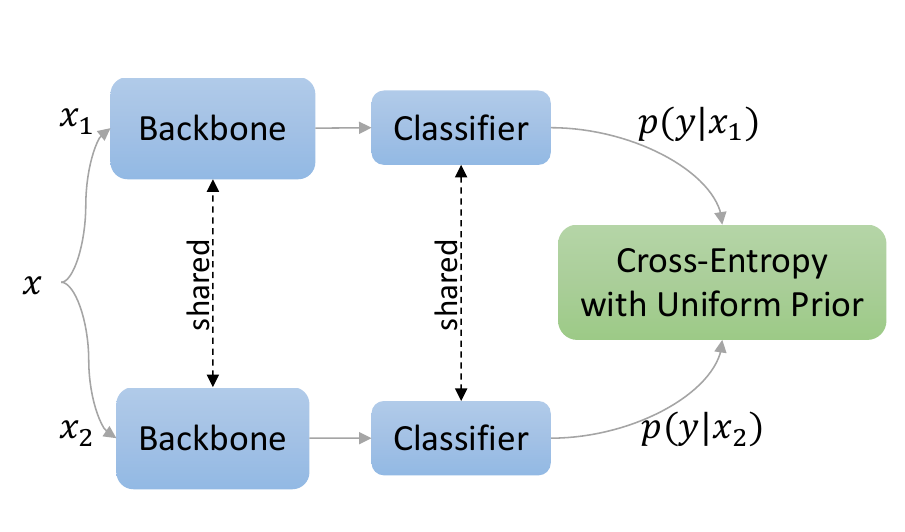}
\end{center}
  \caption{\textbf{\textit{Self-Classifier} architecture}. Two augmented views of the same image are processed by a shared network comprised of a backbone (e.g. CNN) and a classifier (e.g. projection MLP + linear classification head). The cross-entropy of the two views is minimized to promote same class prediction while avoiding degenerate solutions by asserting a uniform prior on class predictions. The resulting model learns representations and discovers the underlying classes in a single-stage end-to-end unsupervised manner. }
\label{fig:architecture}
\end{figure}

\section{Introduction}
Self-supervised visual representation learning has gained increasing interest over the past few years \cite{wu2018unsupervised,dosovitskiy2014discriminative,DBLP:journals/corr/abs-2002-05709,chen2020big,he2020momentum,chen2020improved,caron2020unsupervised,misra2020self}. The main idea is to define and solve a pretext task such that semantically meaningful representations can be learned without any human-annotated labels. The learned representations are later transferred to downstream tasks, e.g., by fine-tuning on a smaller dataset. Current state-of-the-art self-supervised models are based on contrastive learning (\cref{section:self_supervised_learning}). These models maximize the similarity between two different augmentations of the same image while simultaneously minimizing the similarity between different images, subject to different conditions.
Although they attain impressive overall performance, for some downstream tasks, such as unsupervised classification (\cref{section:unsupervised_image_classification}), the objective of the various proposed pretext tasks might not be sufficiently well aligned. For example, instance discrimination methods, such as \cite{he2020momentum,chen2020improved} used for pre-training in the current state-of-the-art unsupervised classification method \cite{van2020scan}, decrease similarity between all instances, even between those that belong to the same (unknown during training) class, thus potentially working against the set task. In contrast, in this paper we propose a classification-based pretext task whose objective is directly aligned with the end goal in this case.
Knowing only the number of classes $C$ we learn an unsupervised classifier (\textit{Self-Classifier}) such that two different augmentations of the same image are classified similarly. In practice, such a task is prone to degenerate solutions, where all samples are assigned to the same class. To avoid them, we assert a uniform prior on the standard cross-entropy loss function, such that a solution with an equipartition of the data is an optimal solution. In fact, we show that the set of optimal solutions no longer includes degenerate ones. 

Our approach can also be viewed as a form of deep unsupervised clustering (Section \ref{section:deep_unsupervised_clustering}) \cite{xie2016unsupervised,yang2016joint,chang2017deep,caron2018deep,haeusser2018associative,ji2019invariant,van2020scan,YM.2020Self-labelling} combined with contrastive learning. Similarly to deep clustering methods, we learn the parameters of a neural network and cluster (class) assignments simultaneously. Recently, clustering has been combined with contrastive learning in \cite{YM.2020Self-labelling,caron2020unsupervised} with great success, yet in both studies clustering was employed as a separate step used for pseudo-labeling. In contrast, in this work we learn representations and cluster labels in a single-stage end-to-end manner, using only minibatch SGD.

The key contributions of this paper are:
\begin{enumerate}
    \item A simple yet effective self-supervised single-stage end-to-end classification and representation learning approach. Unlike previous unsupervised classification works, our approach does not require any form of pre-training, expectation-maximization algorithm, pseudo-labeling, or external clustering. Unlike previous unsupervised representation learning works, our approach does not require a memory bank, a second network (momentum), external clustering, stop-gradient operation, or negative pairs.
    \item Although simple, our approach sets a new state of the art for unsupervised classification on ImageNet with 41.1\% top-1 accuracy, achieves results comparable to state of the art for unsupervised representation learning, and attains a significant ($\sim 2\%$ AP) improvement in transfer to COCO det/seg compared to other self. sup. methods.
    \item We are the first to provide quantitative analysis of self-supervised classification predictions alignment to a set of different class hierarchies (defined on ImageNet and its subpopulations), and show significant (up to $3.4\%$ AMI) improvement over previous state of the art in this new metric. 
\end{enumerate}

%------------------------------------------------------------------------

\section{Related Work}
\subsection{Self-Supervised Learning}
\label{section:self_supervised_learning}
Self-supervised learning methods learn compact semantic data representations by defining and solving a pretext task. In such tasks, naturally existing supervision signals are utilized for training. Many pretext tasks were proposed in recent years in the domain of computer vision, including colorization \cite{zhang2016colorful}, jigsaw puzzle \cite{noroozi2016unsupervised}, image inpainting \cite{pathak2016context}, context prediction \cite{doersch2015unsupervised}, rotation prediction \cite{DBLP:conf/iclr/GidarisSK18}, and contrastive learning \cite{wu2018unsupervised,dosovitskiy2014discriminative,DBLP:journals/corr/abs-2002-05709,chen2020big,he2020momentum,chen2020improved,caron2020unsupervised,misra2020self} just to mention a few.

Contrastive learning has shown great promise and has become a \emph{de facto} standard for self-supervised learning. Two of the earliest studies of contrastive learning are Exemplar CNN \cite{dosovitskiy2014discriminative}, and Non-Parametric Instance Discrimination (NPID) \cite{wu2018unsupervised}. Exemplar CNN \cite{dosovitskiy2014discriminative}, learns to discriminate between instances using a convolutional neural network classifier, where each class represents a single instance and its augmentations. While highly simple and effective, it does not scale to arbitrarily large amounts of unlabeled data since it requires a classification layer (softmax) the size of the dataset. NPID  \cite{wu2018unsupervised} tackles this problem by approximating the full softmax distribution with noise-contrastive estimation (NCE) \cite{gutmann2010noise} and utilizing a memory bank to store the recent representation of each instance to avoid computing the representations of the entire dataset at each time step of the learning process. Such approximation is effective since, unlike Exemplar CNN, it allows training with large amounts of unlabeled data. However, the proposed memory bank by NPID introduces a new problem - lack of consistency across representations stored in the memory bank, i.e., the representations of different samples in the memory bank are computed at multiple different time steps. Nonetheless, Exemplar CNN and NPID have inspired a line of studies of contrastive learning \cite{he2020momentum,chen2020improved,DBLP:journals/corr/abs-2002-05709,chen2020big,li2021prototypical,caron2020unsupervised}. 

One such recent study is SwAV \cite{caron2020unsupervised} which resembles the present work the most. SwAV takes advantage of contrastive methods without requiring to compute pairwise comparisons. More specifically, it simultaneously clusters the data while enforcing consistency between cluster assignments produced for different augmentations (or “views”) of the same image, instead of comparing features directly. To avoid a trivial solution where all samples collapse into a single cluster, SwAV alternates between representation learning using back propagation, and a separate clustering step using the Sinkhorn-Knopp algorithm. In contrast to SwAV, in this work we propose a model that allows learning both representations and cluster assignments in a single-stage end-to-end manner.  

\subsection{Deep Unsupervised Clustering}
\label{section:deep_unsupervised_clustering}
Deep unsupervised clustering methods simultaneously learn the parameters of a neural network and the cluster assignments of the resulting features using unlabeled data \cite{xie2016unsupervised,yang2016joint,chang2017deep,caron2018deep,haeusser2018associative,ji2019invariant,van2020scan,YM.2020Self-labelling}. Such a task is understandably vulnerable to degenerate solutions, where all samples are assigned to a single cluster. Many different solutions that were proposed to avoid the trivial outcome are based on one or few of the following: a) pre-training mechanism; b) Expectation-Maximization (EM) algorithm (i.e., alternating between representation learning and cluster assignment); c) pseudo-labeling; and d) external clustering algorithm such as $k$-means. 

Two of the earliest studies of deep clustering are DEC \cite{xie2016unsupervised} and JULE \cite{yang2016joint}. DEC \cite{xie2016unsupervised} initializes the parameters of its network using a deep autoencoder, and its cluster centroids using standard $k$-means clustering in the feature space. It then uses a form of EM algorithm, where it iterates between pseudo-labeling and learning from its own high confidence predictions. JULE \cite{xie2016unsupervised}, similarly to DEC, alternates between pseudo-labeling and learning from its own predictions. However, unlike DEC, JULE avoids a pre-training step and instead utilizes the prior on the input signal given by a randomly initialized ConvNet together with agglomerative clustering.

More recent approaches are SeLa \cite{YM.2020Self-labelling} and IIC \cite{ji2019invariant}. SeLa \cite{YM.2020Self-labelling} uses a form of EM algorithm, where it iterates between minimization of the cross entropy loss and pseudo-labeling by solving efficiently an instance of the {\it optimal transport problem} using the Sinkhorn-Knopp algorithm. IIC \cite{ji2019invariant} is a single-stage end-to-end deep clustering model conceptually similar to the approach presented in this paper. IIC maximizes the mutual information between predictions of two augmented views of the same sample. The two entropy terms constituting mutual information -- the entropy of a sample and its negative conditional entropy given the other sample compete with each other, with the entropy being maximal when the labels are uniformly distributed over the clusters, and the negative conditional entropy being maximal for sharp one-hot instance assignments. 

In this work, we follow a similar rationale for single-stage end-to-end classification without the use of any pseudo-labeling. Unlike IIC, our proposed loss is equivalent to the cross-entropy classification loss under a uniform label prior that guarantees non-degenerate, uniformly distributed optimal solution as explained in \cref{section:self-classifier}. Although many deep clustering approaches were proposed over the years, only two of them (SCAN \cite{van2020scan} and SeLa \cite{YM.2020Self-labelling}) have demonstrated scalability to large-scale datasets such as ImageNet. In fact, the task of unsupervised classification of large-scale datasets remains an open challenge.
%------------------------------------------------------------------------

\section{Self-Classifier}
\label{section:self-classifier}
Let $x_1, x_2$ denote two different augmented views of the same image sample $x$. Our goal is to learn a classifier $y \triangleq f(x_i) \in [C]$, where $C$ is the given number of classes,
such that two augmented views of the same sample are classified similarly, while avoiding degenerate solutions. 
A naive approach to this would be minimizing the following cross-entropy loss:
\begin{equation}
    \tilde{\ell}(x_1, x_2) = -\sum_{y \in [C]}{p(y|x_2)\log p(y|x_1)},
    \label{eq:ell_naive}
\end{equation}
where $p(y|x)$ is a row softmax with temperature $\tau_{row}$ \cite{wu2018unsupervised} of the matrix of logits $\mathcal{S}$ produced by our model (backbone $+$ classifier) for all classes (columns) and batch samples (rows). However, without additional regularization, an attempt to minimize \cref{eq:ell_naive} will quickly converge to a degenerate solution in which the network predicts a constant $y$ regardless of the $x$. In order to remedy this, we propose to invoke Bayes and total probability laws, obtaining:
\begin{equation}
    p(y|x_2) = \frac{p(y)p(x_2|y)}{p(x_2)} = \frac{p(y)p(x_2|y)}{\sum_{\tilde{y} \in [C]}{p(x_2|\tilde{y}) p(\tilde{y})}},
    \label{eq:full_bayes}
\end{equation}
\begin{equation}
    p(y|x_1) = \frac{p(y)p(y|x_1)}{p(y)} = \frac{p(y)p(y|x_1)}{\sum_{\tilde{x_1} \in B_1}{p(y|\tilde{x}_1) p(\tilde{x}_1)}},
    \label{eq:total_probability}
\end{equation}
where $B$ is a batch of $N$ samples ($B_1$ are the first augmentations of samples of $B$), and $p(x|y)$ is a column softmax of the aforementioned matrix of logits $\mathcal{S}$ with the temperature $\tau_{col}$. Now, assuming that $p(x_1)$ is uniform (under the reasonable assumption that the training samples are equi-probable), and, since we would like all classes to be used, assuming (an intuitive) uniform prior for $p(y)$, we obtain:
\begin{equation}
    \small
    \ell(x_1, x_2) = -\sum_{y \in [C]}{\frac{p(x_2|y)}{\sum_{\tilde{y}}{p(x_2|\tilde{y})}}\log \bigg(\frac{N}{C}\frac{p(y|x_1)}{\sum_{\tilde{x}_1}{p(y|\tilde{x}_1)}}\bigg)},
    \label{eq:ce_ours}
\end{equation}
where $p(y)$ and $p(\tilde{y})$ cancel out in \cref{eq:full_bayes}, and $p(y)/p(\tilde{x}_1)$ becomes $N/C$ in \cref{eq:total_probability}. In practice, we use a symmetric variant of this loss (that we empirically noticed to be better):
\begin{equation}
    \mathcal{L} = \frac{1}{2}\bigg(\ell(x_1, x_2) + \ell(x_2, x_1)\bigg).
    \label{eq:L_sym}
\end{equation}
Note that the naive cross entropy in \cref{eq:ell_naive} is in fact mathematically equivalent to our proposed loss function in \cref{eq:ce_ours}, under the assumption that $p(y)$ and $p(x)$ are uniform. Finally, despite being very simple (only few lines of PyTorch-like pseudocode in \cref{algo:pseudo_code}) our method sets a new state of the art in self-supervised classification (\cref{section:unsupervised_image_classification}).

\begin{figure}[ht]
\centering
\begin{minipage}{.87\linewidth}
    \lstset{language=Python, numbers=left, numberstyle=\tiny, stepnumber=0, numbersep=1pt, tabsize=1, basicstyle=\fontsize{9}{11}\ttfamily}
    \begin{algorithm}[H]
        \setstretch{0.1}
        \caption{{\it Self-Classifier} PyTorch-like Pseudocode}
        \begin{lstlisting}
# N: number of samples in batch
# C: number of classes
# t_r / t_c: row / column softmax temperatures
# aug(): random augmentations
# softmaxX(): softmax over dimension X
# normX(): L1 normalization over dimension X
for x in loader:
  s1, s2 = model(aug(x)), model(aug(x))

  log_y_x1 = log(N/C * norm0(softmax1(s1/t_r)))
  log_y_x2 = log(N/C * norm0(softmax1(s2/t_r)))

  y_x1 = norm1(softmax0(s1/t_c))
  y_x2 = norm1(softmax0(s2/t_c))

  l1 = - sum(y_x2 * log_y_x1) / N
  l2 = - sum(y_x1 * log_y_x2) / N
  L = (l1 + l2) / 2

  L.backward()
  optimizer.step()    
        \end{lstlisting}
        \label{algo:pseudo_code}
    \end{algorithm}
\end{minipage}
\end{figure}

\section{Theoretical Analysis}
\label{section:theoretical}
In this section, we show mathematically how \textit{Self-Classifier} avoids trivial solutions by design, i.e., a collapsing solution is not in the set of optimal solutions of our proposed loss function \cref{eq:ce_ours}. Proofs are provided in Supplementary.

\begin{theorem}[Non-Zero Posterior Probability]
\label{theorem:non_zero_prob}
Let $B$ be a batch of $N$ samples with two views per sample, $(x_1, x_2) \in B$. Let $p(y)$ and $p(x)$ be the class and sample distributions, respectively, where $y \in [C]$. Let \cref{eq:L_sym} be the loss function. Then, each class $y \in [C]$ will have at least one sample $y \in [C]$ with non-zero posterior probability $p(x|y) > 0$ assigned into it, and each sample $x \in [N]$ will have at least one class  $y \in [C]$ with $p(x|y) > 0$.
\end{theorem}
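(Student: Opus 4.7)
The plan is to prove the theorem by contradiction. I would assume some optimal minimiser of $\mathcal{L}$ is degenerate -- either a class $y^*$ with $p(x|y^*) = 0$ for every $x$, or a sample $x^*$ with $p(x^*|y) = 0$ for every $y$ -- and exhibit an explicit non-degenerate configuration whose loss is strictly smaller, which contradicts optimality.

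First, I would recast the per-pair loss (4) as $\ell(x_1, x_2) = -\sum_{y \in [C]} q_y \log r_y$, with $q_y := p(x_2|y)/\sum_{\tilde y} p(x_2|\tilde y)$ a normalised target distribution over classes and $r_y := p(y|x_1)/(C\pi_y)$, where $\pi_y := (1/N)\sum_{\tilde x} p(y|\tilde x)$ is the empirical marginal over classes. This makes explicit the role of the factor $1/(C\pi_y)$, which heavily penalises any class with small marginal mass. I would then evaluate $\ell$ in two regimes. In a balanced, view-consistent hard assignment that partitions the $N$ samples into $C$ equal-size groups with both augmentations of every sample landing in the same group, one has $\pi_y = 1/C$, both $q$ and $r$ are one-hot at the matched class, and $\ell = 0$. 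In the degenerate configuration in which the column $\mathcal{S}_{\cdot, y^*}$ uniformly drifts to $-\infty$ -- the only way to make $p(x|y^*) = 0$ for every $x$ -- the $0/0$ indeterminacies in $q_{y^*}$ and $r_{y^*}$ resolve to $1/C$ each under uniform scaling, and by the symmetry across classes the loss evaluates to $\log C > 0$, strictly worse than the balanced value. The second claim follows symmetrically by applying the same argument to the row of $\mathcal{S}$ corresponding to the unassigned sample, using the symmetric half $\ell(x_2, x_1)$ of (5).

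The main obstacle I anticipate is rigorously handling the $0/0$ indeterminacies in $q$ and $r$ that appear at the degenerate limits, since both numerator and denominator vanish simultaneously when a column (or row) of $\mathcal{S}$ drifts to $-\infty$. I would resolve this by working throughout with finite logits -- where all softmax entries are strictly positive -- and invoking a lower-semicontinuity argument: along any sequence of finite-logit configurations approaching the degenerate boundary, the loss is bounded below by $\log C$ in the limit, while interior configurations can attain $\ell = 0$. A cleaner alternative would be to note that $\ell$ coincides with $D_{\mathrm{KL}}(q \,\|\, r) + H(q)$ exactly when $\sum_y r_y = 1$, which happens iff $\pi$ is uniform (no class empty), showing that the $1/(C\pi_y)$ factor actively drives the minimiser away from every degenerate corner.
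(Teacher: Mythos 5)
Your route differs from the paper's and, as sketched, has genuine gaps. The paper's proof is a short direct argument about finiteness of the loss: \cref{eq:ce_ours} contains the denominators $\sum_{\tilde y} p(x_2|\tilde y)$ and $\sum_{\tilde x_1} p(y|\tilde x_1)$, and for the loss to remain bounded and well defined both must be strictly positive; this immediately gives, for every sample, some class with $p(x_2|y)>0$, and for every class, some sample with $p(y|x_1)>0$, with the symmetrization in \cref{eq:L_sym} letting one drop the view subscripts. No comparison against a balanced configuration, and no appeal to global optimality, is needed.

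The concrete problems with your argument are these. First, the degenerate configuration is mischaracterized: $p(x|y)$ is a \emph{column} softmax, so it sums to one over the samples and can never satisfy $p(x|y^*)=0$ for all $x$; moreover, shifting the entire column $\mathcal{S}_{\cdot,y^*}$ uniformly to $-\infty$ leaves that column softmax unchanged (softmax is shift invariant) and instead drives the \emph{row} softmax $p(y^*|x)$ to zero, which is a different quantity. Hence there is no $0/0$ indeterminacy in $q_{y^*}$ at all, and your claim that the indeterminacy in $r_{y^*}$ ``resolves to $1/C$'' is unjustified: in the limit the ratio $p(y^*|x_1)/\sum_{\tilde x_1} p(y^*|\tilde x_1)$ depends on the remaining finite logits, so the value $\log C$ is not a uniform lower bound over all ways of approaching the degenerate set, and the ``by symmetry'' step only covers one very special configuration. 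Second, even if such a bound were established, comparing $\log C$ against the achievable value $0$ (which is essentially the content of Proposition~2, not of this statement) would only show that degenerate points are not global minimizers; the paper's claim and proof rest on the weaker and more robust requirement that the loss be finite, which is exactly what forces the two denominators, and hence the two asserted non-vanishing conditions, to be strictly positive. The lower-semicontinuity patch you mention is precisely the missing (and nontrivial) piece, so as it stands the proposal does not close the argument.
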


\begin{theorem}[Optimal Solution With Uniform Prior]
\label{theorem:custom_optimal_sol}
Let $B$ be a batch of $N$ samples with two views per sample, $(x_1, x_2) \in B$. Let $p(y)$ and $p(x)$ be the class and sample distribution, respectively, where $y \in [C]$. Then, the uniform probabilities $p(y) = \frac{1}{C}$, $p(x) = \frac{1}{N}$ constitute a global minimizer of the loss \cref{eq:ce_ours}.
\end{theorem}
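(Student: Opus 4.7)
The plan is to lower-bound the batch-summed version of \cref{eq:ce_ours} by zero, and then exhibit a network configuration compatible with $p(y)=1/C$ and $p(x)=1/N$ that attains this bound. For each sample index $i\in[N]$ in the batch, let
\[
Q_{ij} \;=\; \frac{p(x_2^{(i)}\mid y_j)}{\sum_{\tilde j} p(x_2^{(i)}\mid y_{\tilde j})},\qquad
R_{ij} \;=\; \frac{N}{C}\,\frac{p(y_j\mid x_1^{(i)})}{\sum_{\tilde i} p(y_j\mid x_1^{(\tilde i)})}
\]
denote the two normalized matrices appearing under the sum in \cref{eq:ce_ours}. Summing the per-sample loss across the batch gives $L_{\mathrm{tot}} = -\sum_{i,j} Q_{ij}\log R_{ij}$, and the constructions impose two structural constraints that I would rely on throughout: $\sum_j Q_{ij}=1$ for every $i$ (so each row of $Q$ is a probability distribution over classes) and $\sum_i R_{ij}=N/C$ for every $j$ (so each column of $R$ has fixed total mass $N/C$).

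Next, for a fixed $Q$, I would minimize $L_{\mathrm{tot}}$ over all non-negative matrices satisfying only the column-sum constraint by introducing one Lagrange multiplier per column; stationarity yields the closed-form minimizer $R^{\star}_{ij}=(N/C)\,Q_{ij}/\sum_{i'}Q_{i'j}$. Substituting back and viewing $\tilde Q=Q/N$ as a joint distribution on $[N]\times[C]$ whose marginal on the first coordinate is uniform ($\tilde Q_{i\cdot}=1/N$ by the row-stochasticity of $Q$), a direct calculation gives
\[
L_{\mathrm{tot}}^{\star}(Q) \;=\; N\log(C/N) + N\,H(X\mid Y),
\]
where $X,Y$ are the coordinate random variables under $\tilde Q$. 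Since $H(X)=\log N$ and $I(X;Y)\leq H(Y)\leq \log C$, we get $H(X\mid Y)=H(X)-I(X;Y)\geq \log(N/C)$, and therefore $L_{\mathrm{tot}}^{\star}(Q)\geq 0$ for every admissible $Q$, with equality iff $Y$ is deterministic given $X$ and the $Y$-marginal is uniform on $[C]$.

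To show the bound is actually attained at the configuration the proposition describes, I would exhibit the logit matrix $\mathcal{S}_{ij}=\alpha\,\mathbf{1}[j=c(i)]$ for an arbitrary balanced assignment $c:[N]\to[C]$ (with $C\mid N$) and let $\alpha\to\infty$. In that limit both the row softmax and the column softmax concentrate on the entries selected by $c$, producing $Q_{ij}=R_{ij}=\delta_{j,c(i)}$ and hence $L_{\mathrm{tot}}=0$; moreover the induced empirical marginals are exactly $p(y)=1/C$ and $p(x)=1/N$. The symmetric loss \cref{eq:L_sym} averages two terms of the same form with $x_1$ and $x_2$ interchanged, so both the bound and its saturating configuration carry over without change.

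The main obstacle is the entropy inequality in the second paragraph: one has to combine $H(X\mid Y)=H(X)-I(X;Y)$ with $I(X;Y)\leq H(Y)\leq\log C$, crucially using that the $X$-marginal under $\tilde Q$ is forced to be uniform by the row-stochasticity of $Q$. Once that piece is in place, both the Lagrangian minimization over $R$ and the explicit logit construction that saturates the bound reduce to routine bookkeeping.
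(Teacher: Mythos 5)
Your argument is correct, and it takes a genuinely different --- and in fact more complete --- route than the paper's own proof. The paper's proof is a two-line observation: the coefficient matrix is row-normalized (each $x$ distributes mass $1$ over classes), the argument of the logarithm is column-normalized to total mass $N/C$ under the uniform priors, and therefore a zero-loss solution is the balanced one-hot assignment ($N/C$ samples per class, one class per sample); it implicitly takes for granted that zero is a lower bound of the loss. That last point is not immediate, since entries of your matrix $R$ may exceed $1$ (up to $N/C$), so individual per-sample terms $-\sum_j Q_{ij}\log R_{ij}$ can be negative. Your batch-level argument closes exactly this gap: the per-column Gibbs/Lagrange minimization over $R$, followed by the identity $L^{\star}_{\mathrm{tot}}(Q)=N\log(C/N)+N\,H(X\mid Y)$ and the bound $H(X\mid Y)\ge\log(N/C)$ (which crucially uses the uniform $X$-marginal forced by row-stochasticity of $Q$), proves nonnegativity of the summed loss for every admissible configuration, and your equality analysis ($Y$ deterministic given $X$, $Y$-marginal uniform) recovers precisely the equipartition structure the paper asserts, thereby also making the non-degeneracy claim explicit. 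The explicit logit construction with $\alpha\to\infty$ matches the paper's idealized ``probability one'' assignments; note that with finite softmax temperatures zero loss is only approached in the limit (an infimum rather than an attained minimum), and both your construction and the paper's statement implicitly assume $C\mid N$ --- caveats shared with, not worse than, the original proof. In short, the paper buys brevity and intuition; your version buys an actual lower-bound proof and a characterization of all minimizers.
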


\section{Implementation Details}
\subsection{Architecture}
\label{section:implementation_details_arch}
In all our experiments, we used ResNet-50 \cite{he2016deep} backbone (as customary for all compared SSL works) initialized randomly. Following previous work, for our projection heads we used an MLP with $2$ layers (of sizes $4096$ and $128$) with BN, leaky-ReLU activations, and $\ell_2$ normalization after the last layer.
On top of the projection head MLP we had $4$ classification heads into $1K, 2K, 4K$ and $8K$ classes respectively. Each classification head was a simple linear layer without additive bias term. Row-softmax temperature $\tau_{row}$ was set to $0.1$, while column-softmax temperature $\tau_{col}$ -- to $0.05$. Unless mentioned otherwise, evaluation for unsupervised classification (\cref{section:unsupervised_image_classification}) was done strictly using the $1K$-classes classification head. For linear evaluation (\cref{section:linear_classification}) the MLP was dropped and replaced with a single linear layer of $1K$ classes.  

\subsection{Image Augmentations}
We followed the data augmentations of BYOL \cite{grill2020bootstrap} (color jittering, Gaussian blur and solarization), multi-crop \cite{caron2020unsupervised} (two global views of $224\times224$ and six local views of $96\times96$) and nearest neighbor augmentation \cite{Dwibedi_2021_ICCV} (queue for nearest neighbor augmentation was set to $256K$). We refer to \cref{table:multi_crop_and_nn} in \cref{section:ablation} for performance results without multi-crop and nearest neighbor. 

\subsection{Optimization}
\label{section:optimization}
\paragraph{Unsupervised pre-training/classification.} Most of our training hyper-parameters are directly taken from SwAV \cite{caron2020unsupervised}. We used a LARS optimizer \cite{you2017large} with a learning rate of $4.8$ and weight decay of $10^{-6}$. The learning rate was linearly ramped up (starting from $0.3$) over the first $10$ epochs, and then decreased using a cosine scheduler for $790$ epochs with a final value of $0.0048$ (for a total of $800$ epochs). We used a batch size of $4096$ distributed across $64$ NVIDIA V100 GPUs.

\paragraph{Linear evaluation.} Similarly to \cite{chen2021exploring} we used a LARS optimizer \cite{you2017large} with a learning rate of $0.8$ and no weight decay. The learning rate was decreased using a cosine scheduler for $100$ epochs. We used a batch size of $4096$ distributed across $16$ NVIDIA V100 GPUs. We have also tried the SGD optimizer in \cite{he2020momentum} with a batch size of $256$, which gives similar results.

%------------------------------------------------------------------------
\begin{table}[bt]
  \centering
  \setlength\tabcolsep{4pt}
    \caption{\textbf{ImageNet unsupervised image classification using ResNet-50}. NMI: Normalized Mutual Information, AMI: Adjusted Normalized Mutual Information, ARI: Adjusted Rand-Index, ACC: Clustering accuracy. ${\dagger}$: produced by fitting a $k$-means classifier on the learned representations of the training set (models from official repositories were used), and then running inference on the validation set (results for SimCLRv2 and InfoMin are taken from \cite{zheltonozhskii2020self}). SimSiam provide only 100-epoch model in their official repository. ${*}$: best result taken from the paper’s official repository. Top-3 best methods per-metric are underlined. Best in bold}
  \label{table:unsupervised_image_classification}
  \begin{tabular}{lccccc}
    \toprule
    Method & Epochs & NMI & AMI & ARI & ACC \\
    \midrule
    \multicolumn{6}{l}{\textit{representation learning methods}}\\
    SimCLRv2$^{\dagger}$ \cite{chen2020big} & 1000 & 61.5 & 34.9 & 11.0 & 22.4 \\
    SimSiam$^{\dagger}$ \cite{chen2021exploring} & 100 & 62.2 & 34.9 & 11.6 & 24.9 \\
    SwAV$^{\dagger}$ \cite{caron2020unsupervised} & 800 & 64.1 & 38.8 & 13.4 & 28.1 \\
    MoCoV2$^{\dagger}$ \cite{chen2020improved} & 800 & 66.6 & 45.3 & 12.0 & 30.6 \\
    DINO$^{\dagger}$ \cite{caron2021emerging} & 800 & 66.2 & 42.3 & 15.6 & 30.7 \\
    OBoW$^{\dagger}$ \cite{gidaris2021obow} & 200 & 66.5 & 42.0 & 16.9 & 31.1 \\
    InfoMin$^{\dagger}$ \cite{tian2020makes} & 800 & 68.8 & 48.3 & 14.7 & 33.2  \\
    BarlowT$^{\dagger}$ \cite{DBLP:conf/icml/ZbontarJMLD21} & 1000 & 67.1 & 43.6 & 17.6 & 34.2 \\
    \midrule
    \midrule
    \multicolumn{6}{l}{\textit{clustering based methods}}\\
    SeLa${^*}$ \cite{YM.2020Self-labelling} & 280 & 65.7 & 42.0 & 16.2 & 30.5 \\
    SCAN \cite{van2020scan} & 800+125 & 72.0 & 51.2 & 27.5 & \underline{39.9} \\
    \midrule
    \textbf{Self-Classifier} & 100 & 71.2 & 49.2 & 26.1 & 37.3 \\
    \textbf{Self-Classifier} & 200 & \underline{72.5} & \underline{51.6} & \underline{28.1} & 39.4 \\
    \textbf{Self-Classifier} & 400 & \underline{72.9} & \underline{52.3} & \underline{28.8} & \underline{40.2} \\
    \textbf{Self-Classifier} & 800 & \underline{\textbf{73.3}} & \underline{\textbf{53.1}} & \underline{\textbf{29.5}} & \underline{\textbf{41.1}} \\
    \bottomrule
  \end{tabular}
\end{table}

\begin{table}[bt]
  \centering
  \setlength\tabcolsep{4pt}
    \caption{\textbf{ImageNet-superclasses unsupervised image classification accuracy using ResNet-50}. We define new datasets that contain broad classes which each subsume several of the original ImageNet classes. See Supplementary for details of each superclass. ${\dagger}$: produced by fitting a $k$-means classifier on the learned representations of the training set (models from official repositories were used), and then running inference on the validation set. Results for SCAN and SeLa were produced using ImageNet-pretrained models provided in their respective official repositories}
  \label{table:unsupervised_image_classification_superclasses}
  \begin{tabular}{lcccccc}
    \toprule
     & \multicolumn{6}{c}{Number of ImageNet Superclasses} \\
    Method & 10 & 29 & 128 & 466 & 591 & 1000 \\
    \midrule
    \multicolumn{7}{l}{\textit{representation learning methods}}\\
    SwAV$^{\dagger}$ \cite{caron2020unsupervised} & 79.1 & 69.4 & 58.0 & 46.3 & 34.5 & 28.1 \\
    MoCoV2$^{\dagger}$ \cite{chen2020improved} & 80.0 & 72.8 & 63.8 & 51.4 & 36.8 & 30.6 \\
    DINO$^{\dagger}$ \cite{caron2021emerging} & 79.7 & 71.3 & 60.7 & 49.2 & 37.8 & 30.7 \\
    OBoW$^{\dagger}$ \cite{gidaris2021obow} & 83.9 & 76.5 & 67.4 & 53.5 & 35.7 & 31.1 \\
    BarlowT$^{\dagger}$ \cite{DBLP:conf/icml/ZbontarJMLD21} & 80.2 & 72.1 & 62.7 & 52.7 & 40.9 & 34.2 \\
    \midrule
    \midrule
    \multicolumn{7}{l}{\textit{clustering based methods}}\\
    SeLa \cite{YM.2020Self-labelling} & 55.2 & 44.9 & 40.6 & 36.6 & 37.8 & 30.5 \\
    SCAN \cite{van2020scan} & 85.3 & 79.3 & 71.2 & 59.6 & 44.7 & 39.9 \\
    \midrule
    \textbf{Self-Classifier} & \textbf{85.7} & \textbf{79.7} & \textbf{71.8} & \textbf{60.0} & \textbf{46.7} & \textbf{41.1} \\
    \bottomrule
  \end{tabular}
\end{table}

\begin{table*}[bt]
  \centering
  \tiny
  \caption{\textbf{ImageNet-subsets (BREEDS) unsupervised image classification using ResNet-50}. The four BREEDS datasets are: Entity13, Entity30, Living17 and Nonliving26. NMI: Normalized Mutual Information, AMI: Adjusted Normalized Mutual Information, ARI: Adjusted Rand-Index, ACC: Clustering accuracy. ${\dagger}$: produced by fitting a $k$-means classifier on the learned representations of the training set (models from official repositories were used), and then running inference on the validation set. Results for SCAN and SeLa were produced using ImageNet-pretrained models provided in their respective official repositories}
  \label{table:unsupervised_image_classification_breeds}
  \begin{tabular}{lcccc|cccc|cccc|cccc}
    \toprule
     & \multicolumn{4}{c}{\underline{Entity13}} & \multicolumn{4}{c}{\underline{Entity30}} & \multicolumn{4}{c}{\underline{Living17}} & \multicolumn{4}{c}{\underline{Nonliving26}}\\
    Method & NMI & AMI & ARI & ACC & NMI & AMI & ARI & ACC & NMI & AMI & ARI & ACC & NMI & AMI & ARI & ACC \\
    \midrule
    \multicolumn{17}{l}{\textit{representation learning methods}}\\
    SwAV$^{\dagger}$ \cite{caron2020unsupervised} & 64.8 & 39.9 & 15.2 & 75.6 & 64.6 & 39.4 & 15.1 & 70.5 & 61.0 & 40.3 & 15.7 & 85.2 & 62.0 & 41.1 & 19.2 & 63.1 \\
    MoCoV2$^{\dagger}$ \cite{chen2020improved} & 67.3 & 46.6 & 14.7 & 79.0 & 66.4 & 45.8 & 15.1 & 74.6 & 61.2 & 45.7 & 16.3 & 89.7 & 63.3 & 46.2 & 19.3 & 66.2 \\
    DINO$^{\dagger}$ \cite{caron2021emerging} & 67.2 & 43.7 & 18.0 & 78.2 & 66.8 & 43.2 & 18.1 & 73.7 & 63.8 & 45.1 & 19.6 & 88.2 & 63.8 & 43.9 & 21.8 & 66.7 \\
    OBoW$^{\dagger}$ \cite{gidaris2021obow} & 66.4 & 42.3 & 17.5 & 82.2 & 64.9 & 40.7 & 16.4 & 77.6 & 53.8 & 34.0 & 12.0 & 91.1 & 64.8 & 45.4 & 22.9 & 67.9 \\
    BarlowT$^{\dagger}$ \cite{DBLP:conf/icml/ZbontarJMLD21} & 68.2 & 45.5 & 20.5 & 77.7 & 67.7 & 45.1 & 20.7 & 73.0 & 64.7 & 47.2 & 22.2 & 88.0 & 64.8 & 45.7 & 24.9 & 66.7 \\
    \midrule
    \midrule
    \multicolumn{17}{l}{\textit{clustering based methods}}\\
    SeLa \cite{YM.2020Self-labelling} & 67.6 & 44.8 & 19.4 & 50.7 & 68.2 & 45.7 & 21.2 & 52.6 & \textbf{71.8} & \textbf{53.9} & \textbf{29.7} & 80.8 & 68.9 & 46.6 & 24.6 & 67.1 \\
    SCAN \cite{van2020scan} & 72.4 & 52.3 & 29.2 & 83.7 & 71.3 & 50.8 & 27.8 & 80.0 & 65.2 & 49.4 & 25.3 & \textbf{92.5} & 70.0 & 53.6 & 33.4 & 74.4 \\
    \midrule
        \textbf{Self-Classifier} & \textbf{73.6} & \textbf{54.1} & \textbf{30.7} & \textbf{84.4} & \textbf{72.9} & \textbf{53.4} & \textbf{29.8} & \textbf{81.0} & 67.2 & 51.8 & 26.4 & 90.8 & \textbf{72.2} & \textbf{57.0} & \textbf{36.8} & \textbf{76.7} \\
    \bottomrule
  \end{tabular}
\end{table*}

\section{Results}
\label{section:experiments} 
\subsection{Unsupervised Image Classification}
\label{section:unsupervised_image_classification}
We evaluate our approach on the task of unsupervised image classification using the large-scale ImageNet dataset (\cref{table:unsupervised_image_classification,table:unsupervised_image_classification_superclasses,table:unsupervised_image_classification_breeds}). We report the standard clustering metrics: Normalized Mutual Information (NMI), Adjusted Normalized Mutual Information (AMI), Adjusted Rand-Index (ARI), and Clustering Accuracy (ACC).  

Our approach sets a new state-of-the-art performance for unsupervised image classification using ImageNet, on all four metrics (NMI, AMI, ARI and ACC), even when trained for a substantial lower number of epochs (\cref{table:unsupervised_image_classification}). We compare our approach to the latest large-scale deep clustering methods \cite{YM.2020Self-labelling,van2020scan} that have been explicitly evaluated on ImageNet. Additionally, we also compare our approach to the latest self-supervised representation learning methods (using ImageNet-pretrained models provided in their respective official repositories) after fitting a $k$-means classifier to the learned representations computed on the training set. For all methods we run inference on the validation set (unseen during training). 

The current state-of-the-art approach, SCAN \cite{van2020scan}, is a multi-stage algorithm that involves: 1) pre-training (800 epochs); 2) offline $k$-nearest neighbor mining; 3) clustering (100 epochs); and 4) self-labeling and fine-tuning (25 epochs). In contrast, \textit{Self-Classifier} is a single-stage simple-to-implement model (\cref{algo:pseudo_code}) that is trained only with minibatch SGD. At only 200 epochs \textit{Self-Classifier} already outperforms SCAN with 925 epochs.

SCAN provided an interesting qualitative analysis of alignment of its unsupervised class predictions to a certain (single) level of the default (WordNet) ImageNet semantic hierarchy. In contrast, here we propose a more diverse set of quantitative metrics to evaluate the performance of self-supervised classification methods on various levels of the default ImageNet hierarchy, as well as on several hierarchies of carefully curated ImageNet subpopulations (BREEDS \cite{santurkar2020breeds}). We believe that this new set of hierarchical alignment metrics expanding on the leaf-only metric used so far, will allow deeper investigation of how self-supervised classification approaches perceive the internal taxonomy of classes of unlabeled data they are applied to, exposing their strength and weaknesses in a new and interesting light. We use these new metrics to compare our proposed approach to previous unsupervised clustering work \cite{van2020scan,YM.2020Self-labelling}, as well as state-of-the-art representation learning work \cite{chen2020big,chen2021exploring,caron2020unsupervised,chen2020improved,caron2021emerging,gidaris2021obow,tian2020makes,DBLP:conf/icml/ZbontarJMLD21}.

In \cref{table:unsupervised_image_classification_superclasses} we report results for different numbers of ImageNet superclasses (10, 29, 128, 466 and 591) resulting from cutting the default (WordNet) ImageNet hierarchy on different levels. See Supplementary for details of each superclass. The results in this table, that are significantly higher then the result for leaf (1000) classes for any hierarchy level, indicate that examples misclassified on the leaf level tend to be assigned to other clusters from within the same superclass. Furthermore, we see that \textit{Self-Classifier} consistently outperforms previous work on all hierarchy levels.

In \cref{table:unsupervised_image_classification_breeds} we report the results on four ImageNet subpopulation datasets of BREEDS \cite{santurkar2020breeds}. These datasets are accompanied by class hierarchies re-calibrated by \cite{santurkar2020breeds} such that classes on same hierarchy level are of the same visual granularity. Each dataset contains a specific subpopulation of ImageNet, such as `Entities', `Living' things and `Non-living' things, allowing for a more fine-grained evaluation of hierarchical alignment of self-supervised classification predictions. Again, we see consistent improvement of \textit{Self-Classifier} over previous work and self-supervised representation baselines.

\subsection{Image Classification with Linear Models}
\label{section:linear_classification}
We evaluate the quality of our unsupervised features using the standard linear classification protocol. Following the self-supervised pre-training stage, we freeze the features and train on top of it a supervised linear classifier (a single fully-connected layer). This classifier operates on the global average pooling features of a ResNet. \cref{table:linear_eval_imagenet} summarizes the results and comparison to the state-of-the-art methods for various number of training budgets (100 to 800 epochs).

In addition to good results for unsupervised classification (\cref{section:unsupervised_image_classification}), \textit{Self-Classifier} additionally achieves results comparable to state of the art for linear classification evaluation using ImageNet. Specifically, as detailed in \cref{table:linear_eval_imagenet}, it is one of the top-3 result for 3 out of 4 of the training budgets reported, and top-1 in the 100 epochs category.

\begin{table}[bt]
  \centering
  \caption{\textbf{ImageNet linear classification using ResNet-50}. Top-1 accuracy vs. number of training epochs. Top-3 best methods per-category are underlined}
  \label{table:linear_eval_imagenet}
  \setlength\tabcolsep{8pt}
  \begin{tabular}{l|cccc}
    \toprule
     & \multicolumn{4}{c}{Number of Training Epochs} \\
    Method & 100 & 200 & 400 & 800 \\
    \midrule
    Supervised & 76.5 & -- & -- & -- \\
    \midrule
    SimCLR \cite{DBLP:journals/corr/abs-2002-05709} & 66.5 & 68.3 & 69.8 & 70.4 \\
    MoCoV2 \cite{chen2020improved} & 67.4 & 67.5 & 71.0 & 71.1 \\
    SimSiam \cite{chen2021exploring} & 68.1 & 70.0 & 70.8 & 71.3 \\
    SimCLRv2 \cite{chen2020big} & -- & -- & -- & 71.7 \\
    InfoMin \cite{tian2020makes} & -- & -- & -- & 73.0 \\
    BarlowT \cite{DBLP:conf/icml/ZbontarJMLD21} & -- & -- & -- & 73.2 \\
    OBoW \cite{gidaris2021obow} & -- & \underline{73.8} & -- & -- \\
    BYOL \cite{grill2020bootstrap} & 66.5 & 70.6 & 73.2 & 74.3 \\
    NNCLR \cite{Dwibedi_2021_ICCV} & \underline{69.4} & 70.7 & \underline{74.2} & \underline{74.9} \\
    DINO \cite{caron2021emerging} & -- & -- & -- & \underline{\textbf{75.3}} \\
    SwAV \cite{caron2020unsupervised} & \underline{72.1} & \underline{\textbf{73.9}} & \underline{\textbf{74.6}} & \underline{\textbf{75.3}} \\
    \midrule
    \textbf{Self-Classifier}  & \underline{\textbf{72.4}} & \underline{73.5} & \underline{74.2} & 74.1 \\
    \bottomrule
  \end{tabular}
\end{table}

\subsection{Transfer Learning}
We further evaluate the quality of our unsupervised features by transferring them to other tasks - object detection and instance segmentation. \cref{table:transfer_learning} reports results for VOC07+12 \cite{everingham2010pascal} and COCO \cite{lin2014microsoft} datasets. We fine-tune our pre-trained model end-to-end in the target datasets using the public codebase from MoCo \cite{he2020momentum}. We obtain significant ($\sim 2\%$) improvements in the more challenging COCO det/seg over all the self-supervised baselines.

\begin{table}[ht]
  \centering
  \tiny
  \caption{\textbf{Transfer learning: object detection and instance segmentation}. Results for other methods are taken from \cite{DBLP:conf/icml/ZbontarJMLD21}}
  \label{table:transfer_learning}
  \begin{tabular}{l|ccc|ccc|ccc}
    \toprule
    Method & \multicolumn{3}{c}{\underline{VOC07+12 det}} & \multicolumn{3}{c}{\underline{COCO det}} & \multicolumn{3}{c}{\underline{COCO seg}} \\
     & AP & AP$_{50}$ & AP$_{75}$ & AP & AP$_{50}$ & AP$_{75}$ & AP & AP$_{50}$ & AP$_{75}$ \\
    \midrule
    Supervised & 53.5 & 81.3 & 58.8 & 38.2 & 58.2 & 41.2 & 33.3 & 54.7 & 35.2 \\
    \midrule
    MoCo-v2\cite{chen2020improved} & \textbf{57.4} & 82.5 & \textbf{64.0} & 39.3 & 58.9 & 42.5 & 34.4 & 55.8 & 36.5 \\
    SwAV\cite{caron2020unsupervised} & 56.1 & \textbf{82.6} & 62.7 & 38.4 & 58.6 & 41.3 & 33.8 & 55.2 & 35.9 \\
    SimSiam\cite{chen2021exploring} & 57.0 & 82.4 & 63.7 & 39.2 & 59.3 & 42.1 & 34.4 & 56.0 & 36.7 \\
    BarlowT\cite{DBLP:conf/icml/ZbontarJMLD21} & 56.8 & \textbf{82.6} & 63.4 & 39.2 & 59.0 & 42.5 & 34.3 & 56.0 & 36.5 \\
    \midrule
    \midrule
    \textbf{Self-Classifier} & 56.6 & 82.4 & 62.6 & \textbf{41.5} & \textbf{61.3} & \textbf{45.0} & \textbf{36.1} & \textbf{58.1} & \textbf{38.7} \\
    \bottomrule
  \end{tabular}
\end{table}

\subsection{Qualitative Results}
In Supplementary, we visualize and analyse a subset of high/low accuracy classes predicted by \textit{Self-Classifier} on \textbf{unseen data} (ImageNet validation).

\section{Ablation Study}
\label{section:ablation}
In this section, we evaluate the impact of the design choices of \textit{Self-Classifier}. Namely, the loss function, the number of classes ($C$), number of classification heads, fixed vs learnable classifier, MLP architecture, Softmax temperatures (row and column), batch-size, some of the augmentations choices, and NN queue length. We evaluate the different models after 100 self-supervised epochs and report results on ImageNet validation set. We report both the K-NN (K=20) classifier accuracy (evaluating the learned representations) and the unsupervised clustering accuracy (evaluating unsupervised classification performance).

\textbf{Loss function}. 
For both illustrating the generality of our proposed loss function and making more direct comparison with the unsupervised classification state-of-the-art (SCAN \cite{van2020scan}), in \cref{table:ablation_loss} we report the results of running SCAN official code, while replacing their loss function (in the clustering step) with ours (Eq. \eqref{eq:L_sym}) and keeping everything else (e.g. classification heads and augmentations) same as in SCAN. As we can see, our proposed loss generalizes well and improves SCAN result (e.g. by $1.5\%$ ARI and $0.5\%$ ACC). Further results improvements are obtained using our full method (as shown in \cref{table:unsupervised_image_classification}).

\begin{table}[bt]
  \centering
  \setlength\tabcolsep{4pt}
    \caption{\textbf{Ablation: loss function generality}. 
    For column definitions see \cref{table:unsupervised_image_classification}. 
    SCAN + Eq. \eqref{eq:L_sym} is SCAN with clustering step loss replaced with ours.
    }
  \label{table:ablation_loss}
  \begin{tabular}{lcccc}
    \toprule
    Method & NMI & AMI & ARI & ACC \\
    \midrule
    SCAN \cite{van2020scan} & 72.0 & 51.2 & 27.5 & 39.9 \\
    SCAN + our loss (Eq. \eqref{eq:L_sym}) & \textbf{72.7} & \textbf{52.2} & \textbf{29.0} & \textbf{40.4} \\
    \bottomrule
  \end{tabular}
\end{table}

\textbf{Number of classes and classification heads.} \cref{table:ablation_num_cls} reports the results for various number of classes and classification heads. 
Very interestingly, and somewhat contrary to the intuition of previous unsupervised classification works \cite{van2020scan,YM.2020Self-labelling} who used the same number of classes for all heads, 
we found that using a different number of classes for each head while still keeping the total number of parameters constant (e.g. 15x1k vs. 1k+2k+4k+8k) improves results on both metrics. We believe that such a learning objective forces the model to learn a representation that is more invariant to the number of classes, thus improving its generalization performance. 

\textbf{Fixed/Learnable classifier.} As expected, we found that a learnable classifier performs better than a fixed one (\cref{table:fixes_vs_learnable}).

\begin{table}[!htb]
    \caption{\textbf{Ablation study}. 
    After 100 epochs, reporting performance for ImageNet as accuracy of $<$'k-NN'~$\vert$~'unsupervised clustering'$>$ in each experiment.
    }
    
    \begin{subtable}{\linewidth}
      \centering
        \caption{Classification heads. $^{(2k)}~^{(4k)}~^{(8k)}$: 2k, 4k and 8k over-clustering accuracy.}
        \tiny
        \label{table:ablation_num_cls}
        \begin{tabular}{c|ccccccc}
        \toprule
         & 1$\times$1k & 5$\times$1k & 10$\times$1k & 15$\times$1k & 1$\times$2k & 1$\times$4k & 1k+2k+4k+8k \\
        \midrule
        Acc. (\%) & 59.6$\vert$34.1 & 58.7$\vert$34.0 & 58.6$\vert$33.5 & 58.8$\vert$33.9 & 59.3$\vert$38.8$^{(2k)}$ & 57.0$\vert$42.9$^{(4k)}$ &  \textbf{61.7}$\vert$\textbf{37.3},40.6$^{(2k)}$,44.2$^{(4k)}$,48.0$^{(8k)}$\\
        \bottomrule
        \end{tabular}
    \end{subtable}
    
    \begin{subtable}{.38\linewidth}
      \centering
        \caption{Softmax Temperature}
        \label{table:ablation_sm_temp}
        \begin{tabular}{c|cc}
        \toprule
         & \multicolumn{2}{c}{$\tau_{row}$} \\
        $\tau_{column}$ & 0.07 & 0.1 \\
        \midrule
        0.03 & 59.9$\vert$36.9 & 59.2$\vert$36.9 \\
        0.05 & 58.9$\vert$29.2 & \textbf{61.7}$\vert$\textbf{37.3} \\
        \bottomrule
        \end{tabular}
    \end{subtable}%
    \hfill % maximize the horizontal separation
    \begin{subtable}{.62\linewidth}
      \centering
        \caption{MLP architecture}
        \label{table:ablation_mlp}
        \begin{tabular}{c|ccc}
        \toprule
         & \multicolumn{2}{c}{MLP output layer} \\
        MLP hidden layer(s) & 128 & 256 \\
        \midrule
        1x4096 & \textbf{61.7}$\vert$\textbf{37.3} & 61.3$\vert$33.5 \\
        2x4096 & 60.9$\vert$36.4 & 60.4$\vert$33.6 \\
        2x8192 & 60.0$\vert$36.9 & 59.6$\vert$36.7 \\
        \bottomrule
        \end{tabular}
    \end{subtable}

    \begin{subtable}{.38\linewidth}
      \centering
        \caption{Fixed / Learnable classifier}
        \label{table:fixes_vs_learnable}
        \begin{tabular}{c|cc}
        \toprule
         & Fixed & Learnable \\
        \midrule
        Acc. (\%) & 57.6$\vert$32.2 & \textbf{61.7}$\vert$\textbf{37.3} \\
        \bottomrule
        \end{tabular}
    \end{subtable}%
    \hfill % maximize the horizontal separation
    \begin{subtable}{.62\linewidth}
      \centering
        \caption{Nearest neighbor queue length}
        \label{table:ablation_nn_q_len}
        \begin{tabular}{c|cccc}
        \toprule
        Queue len. & 128K & 256K & 512K & 1M \\
        \midrule
        Acc. (\%) & 59.2$\vert$36.8 & \textbf{61.7}$\vert$\textbf{37.3} & 60.3$\vert$36.9 & 56.8$\vert$35.5 \\
        \bottomrule
        \end{tabular}
    \end{subtable} 
    
        \begin{subtable}{\linewidth}
      \centering
        \caption{Batch size}
        \label{table:ablation_batch_size}
        \begin{tabular}{c|ccccc}
        \toprule
        Batch Size & 256 & 512 & 1024 & 2048 & 4096 \\
        \midrule
        Acc. (\%) & 49.0$\vert$20.9 & 52.2$\vert$23.1 & 54.5$\vert$26.8 & 57.0$\vert$35.1 & \textbf{61.7}$\vert$\textbf{37.3} \\
        \bottomrule
        \end{tabular}
    \end{subtable}
    
\end{table}

\textbf{MLP architecture.} \cref{table:ablation_mlp} reports the results for various sizes of hidden/output layers. Surprisingly, we found that decreasing the number of hidden layers and their size improves both metrics. As a result, our best model (4096/128 MLP) has $30\%$ less parameters than the model used in SCAN \cite{van2020scan} (that used 2048 sized input to its cls. heads). In addition, we verified there is no peak performance difference between ReLU and leaky-ReLU activation in the MLP.

\textbf{Softmax Temperature.} Table \ref{table:ablation_sm_temp} reports the results for a range of Row/Column softmax temperatures. We found that the ratio between the two temperatures is important for performance (specifically clustering accuracy). The model is robust to ratios (row over column) in the range of 2.0 - 3.5. 

\textbf{Batch Size.} Table \ref{table:ablation_batch_size} reports the results for a range of batch size values (256 to 4096). Similarly to previous self-supervised work (and specifically clustering-based), performance improves as we increase batch size.

\textbf{Multi-crop and nearest neighbor augmentations.} \cref{table:multi_crop_and_nn} reports the impact of removing multi-crop \cite{caron2020unsupervised} and nearest neighbor augmentations \cite{Dwibedi_2021_ICCV} on linear classification accuracy and compares to other state-of-the-art methods.

\begin{table}[bt]
\begin{center}
\setlength\tabcolsep{3pt}
\small
\caption{Performance without multi-crop and without nearest neighbor augmentations. ImageNet Top-1 linear classification accuracy after 100 epochs. \cite{DBLP:journals/corr/abs-2002-05709,grill2020bootstrap,caron2020unsupervised,chen2020improved} are taken from \cite{chen2021exploring}}
\label{table:multi_crop_and_nn}
\begin{tabular}{c|cccccc}
\toprule
 & SimCLR \cite{DBLP:journals/corr/abs-2002-05709} & BYOL \cite{grill2020bootstrap} & SwAV \cite{caron2020unsupervised} & MoCoV2 \cite{chen2020improved} & SimSiam \cite{chen2021exploring} & \textbf{Ours}  \\
\midrule
Acc. (\%) & 66.5 & 66.5 & 66.5 & 67.4 & \textbf{68.1} & \textbf{68.1}  \\
\bottomrule
\end{tabular}
\end{center}
\end{table}

\textbf{Nearest neighbor queue length.} The model is somewhat robust to a queue length in the range of 128K - 512K (\cref{table:ablation_nn_q_len}), while increasing it further decreases performance. Most likely due to stale embeddings (as noted by \cite{Dwibedi_2021_ICCV} as well).

% % %------------------------------------------------------------------------
\section{Comparative Analysis}
A common and critical element of all self-supervised learning methods is collapse prevention. In this section, we discuss the various approaches of state-of-the-art models for preventing collapse. The approaches can be categorized into two categories: 1) negative samples; and 2) stop-grad operation. Where in practice, stop-grad operation includes two more sub-categories: 2.a) external clustering; and 2.b) momentum encoder. In this paper, we propose a third and completely new approach for collapse prevention - a non-collapsing loss function, i.e., a loss function without degenerate optimal solutions.

\textbf{Negative samples.} SimCLR \cite{DBLP:journals/corr/abs-2002-05709} and Moco \cite{he2020momentum} prevent collapse by utilizing negative pairs to explicitly force dissimilarity.

\textbf{External clustering.} SwAV \cite{caron2020unsupervised}, SeLa \cite{YM.2020Self-labelling} and SCAN \cite{van2020scan} prevent collapse by utilizing external clustering algorithm such as K-Means (SCAN) or Sinkhorn-Knopp (SwAV/SeLa) for generating pseudo-labels.

\textbf{Momentum encoder.} MoCo \cite{he2020momentum}, BYOL \cite{grill2020bootstrap} and DINO \cite{caron2021emerging} prevent collapse by utilizing the momentum encoder proposed by MoCo. The momentum encoder generates a different yet fixed pseudo target in every iteration.

\textbf{Stop-grad operation.} SimSiam \cite{chen2021exploring} prevent collapse by applying a stop-grad operation on one of the views, which acts as a fixed pseudo label. In fact, except for SimCLR, all of the above methods can be simply differentiated by where exactly a stop-grad operation is used. SwAV/SeLa/SCAN apply a stop-grad operation on the clustering phase, while MoCo/BYOL/DINO apply a stop-grad operation on a second network that is used for generating assignments.

\textbf{Non-collapsing loss function.} In contrast, we show mathematically (\cref{section:theoretical}) and empirically (\cref{section:experiments}) that \textit{Self-Classifier} prevents collapse with a novel loss function \cref{eq:ce_ours} and without the use of external clustering, pseudo-labels, momentum encoder, stop-grad nor negative pairs. More specifically, a collapsing solution is simply not in the set of optimal solutions of our proposed loss, which makes it possible to train \textit{Self-Classifier} using just a single network and a simple SGD.

%------------------------------------------------------------------------
\section{Conclusions and Limitations}
We introduced \textit{Self-Classifier}, a new approach for unsupervised end-to-end classification and self-supervised representation learning. Our approach is mathematically justified and simple to implement. It sets a new state-of-the-art performance for unsupervised classification on ImageNet and achieves comparable to state of the art results for unsupervised representation learning. We provide a thorough investigation of our method in a series of ablation studies. Furthermore, we propose a new hierarchical alignment quantitative metric for self-supervised classification establishing baseline performance for a wide range of methods and showing advantages of our proposed approach in this new task.
\textit{Limitations} of this paper include: (i) our method relies on knowledge of the number of classes, but in some cases it might not be optimal as the true number of classes should really be dictated by the data itself. In this paper we relax this potential weakness by introducing the notion of multiple classification heads, but we believe further investigation would be an interesting future work direction; (ii) one of the most common sources of error we observed is merging of nearby classes (e.g. different breeds of cat), introducing additional regularization for reducing this artifact is also an interesting direction of future work.

% \clearpage
% ---- Bibliography ----
%
% BibTeX users should specify bibliography style 'splncs04'.
% References will then be sorted and formatted in the correct style.
%
\bibliographystyle{splncs04}
\bibliography{egbib}

\clearpage
\section{Visualization of High/Low Accuracy Classes Predicted by Self-Classifier}
\label{section:low_high_acc_visualization}

\begin{figure}[H]
\small
\centering
\begin{tabular}{ccccccc}
    \includegraphics[width=0.15\linewidth]{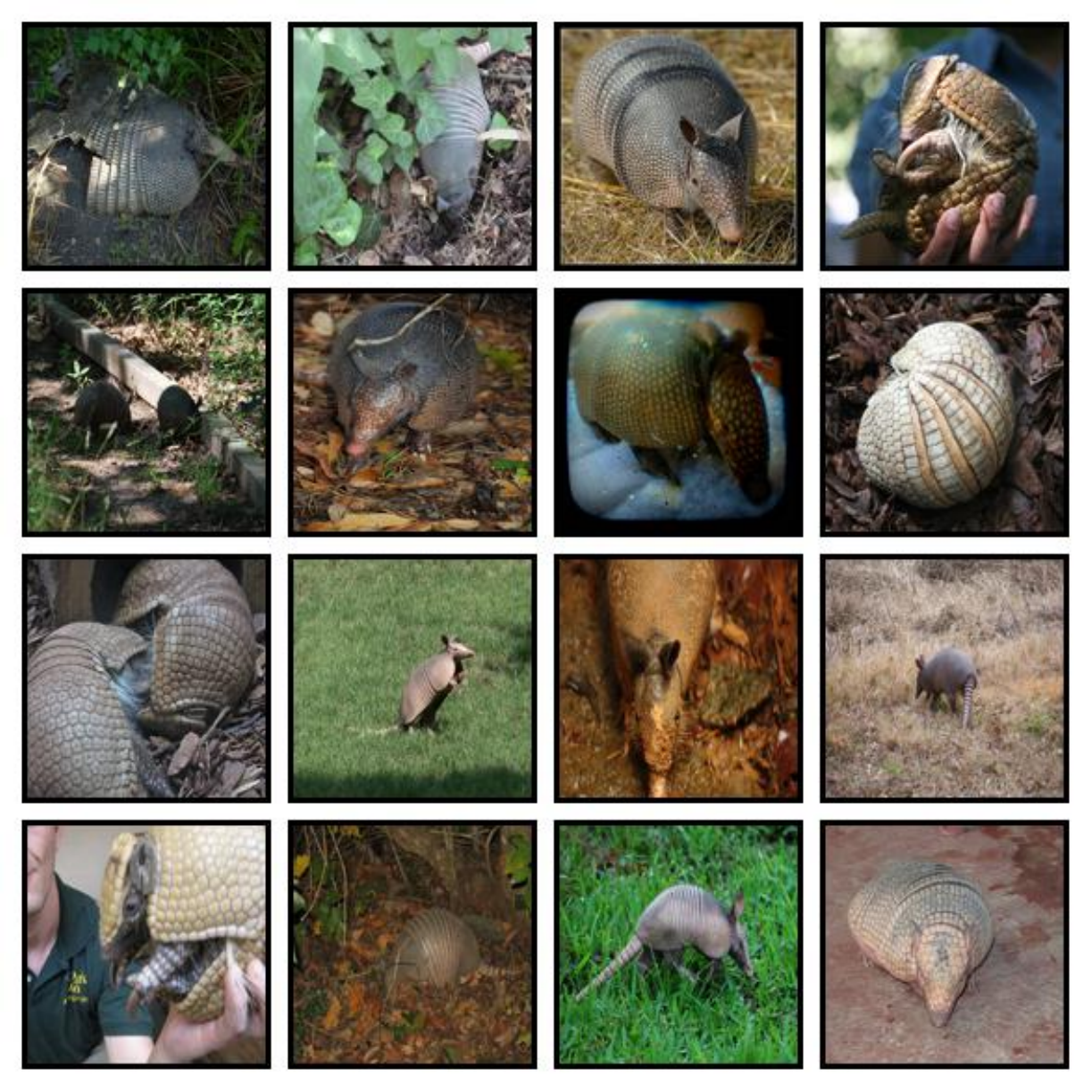} & \includegraphics[width=0.15\linewidth]{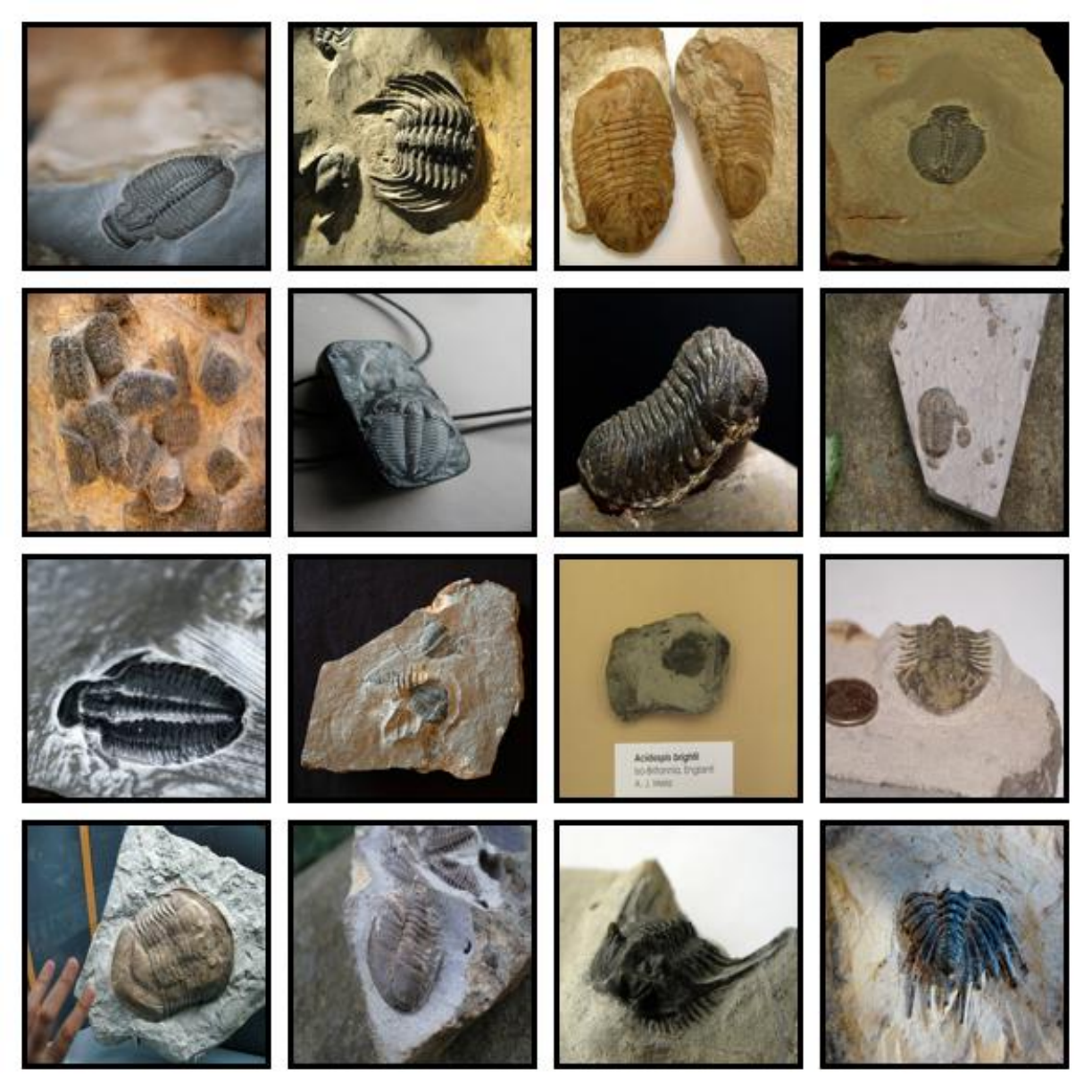} & \includegraphics[width=0.15\linewidth]{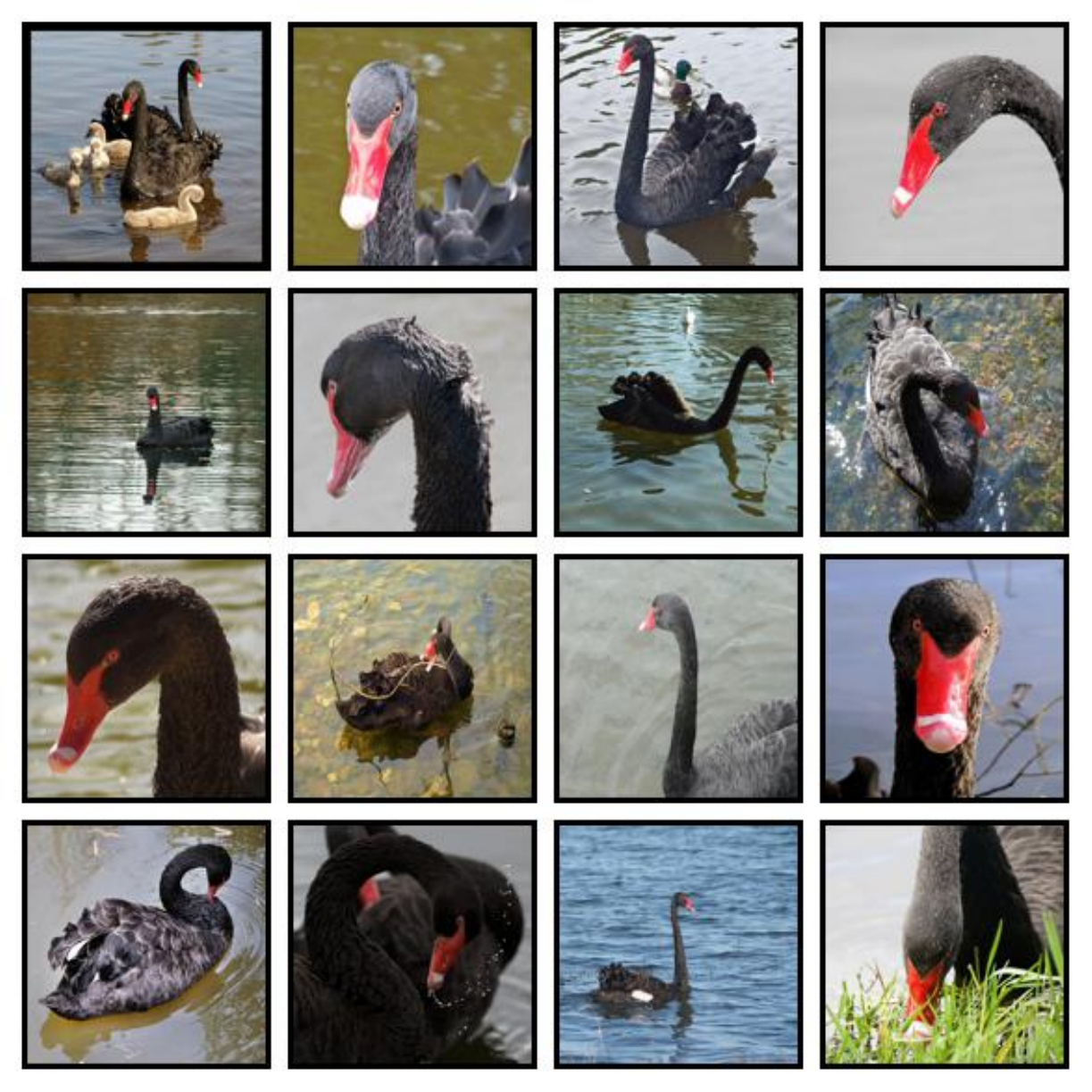} & \includegraphics[width=0.15\linewidth]{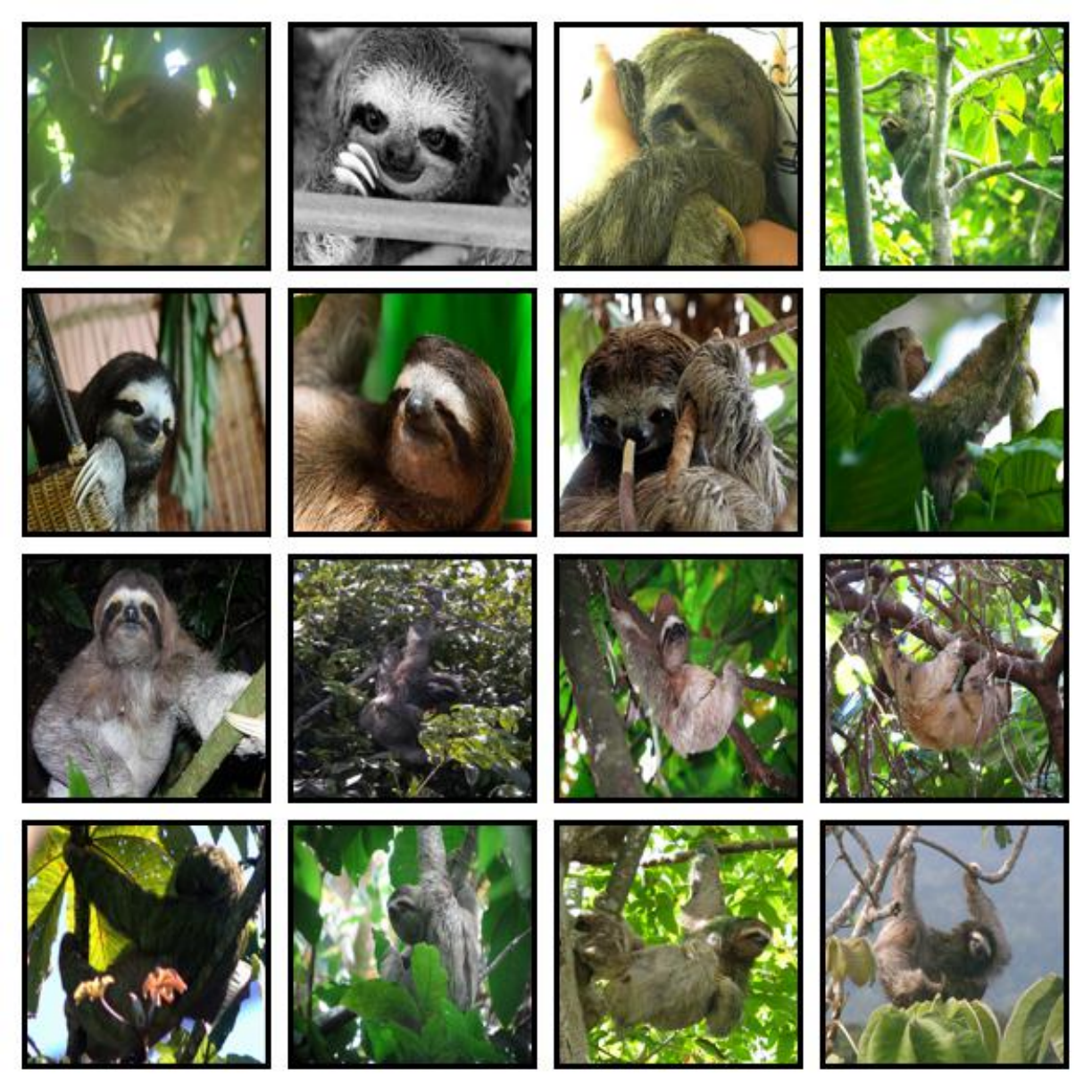} & \includegraphics[width=0.15\linewidth]{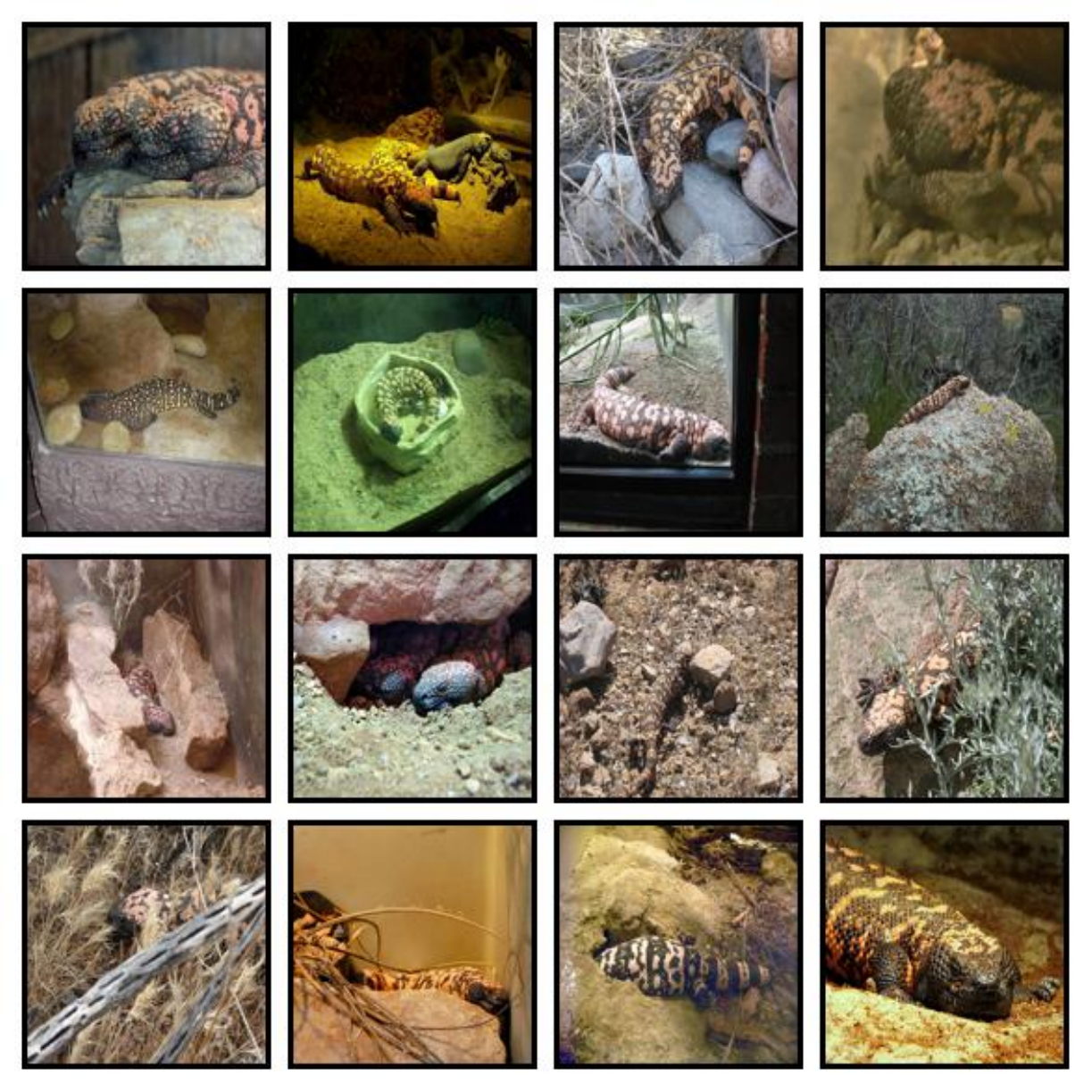} \\
  \includegraphics[width=0.15\linewidth]{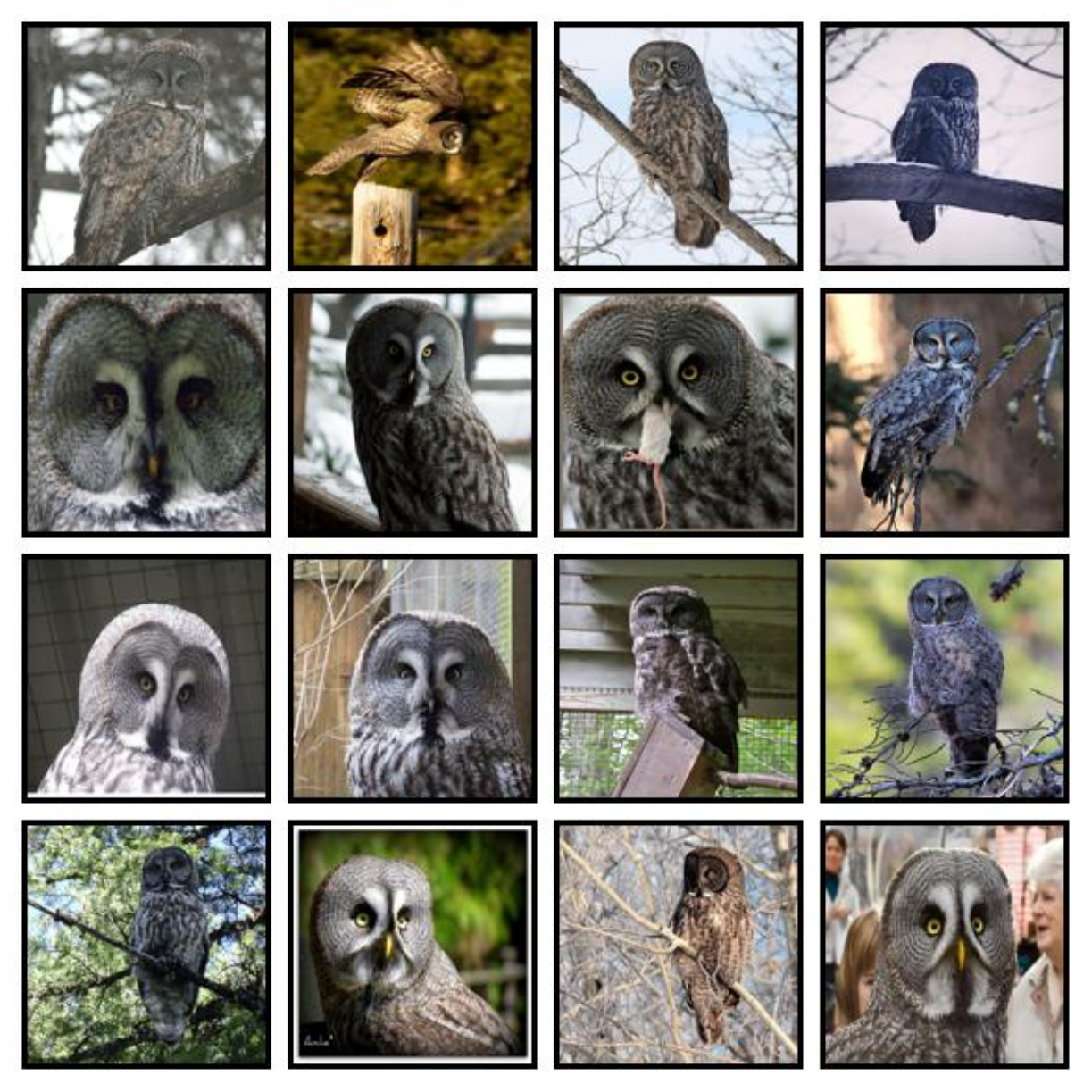} & \includegraphics[width=0.15\linewidth]{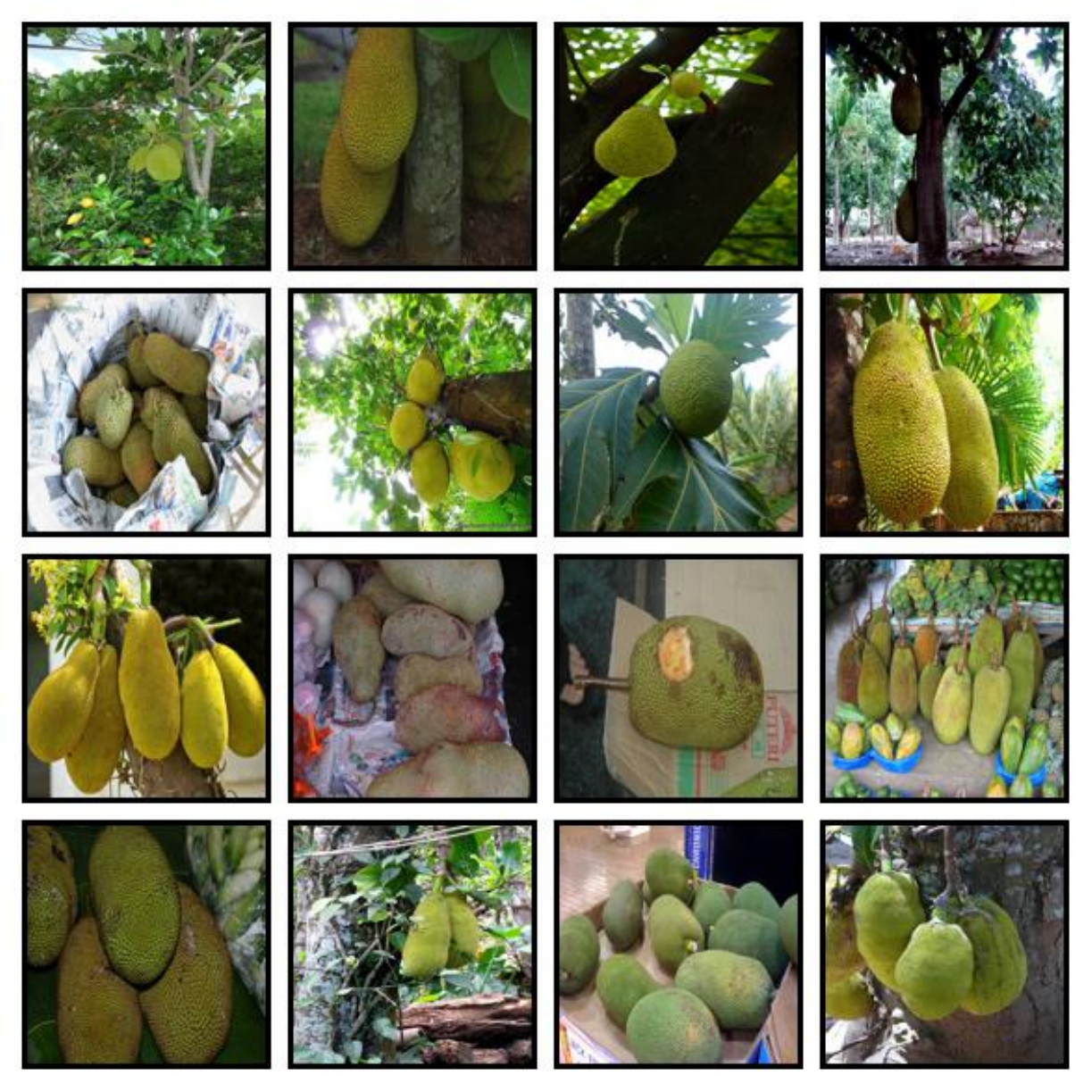} & \includegraphics[width=0.15\linewidth]{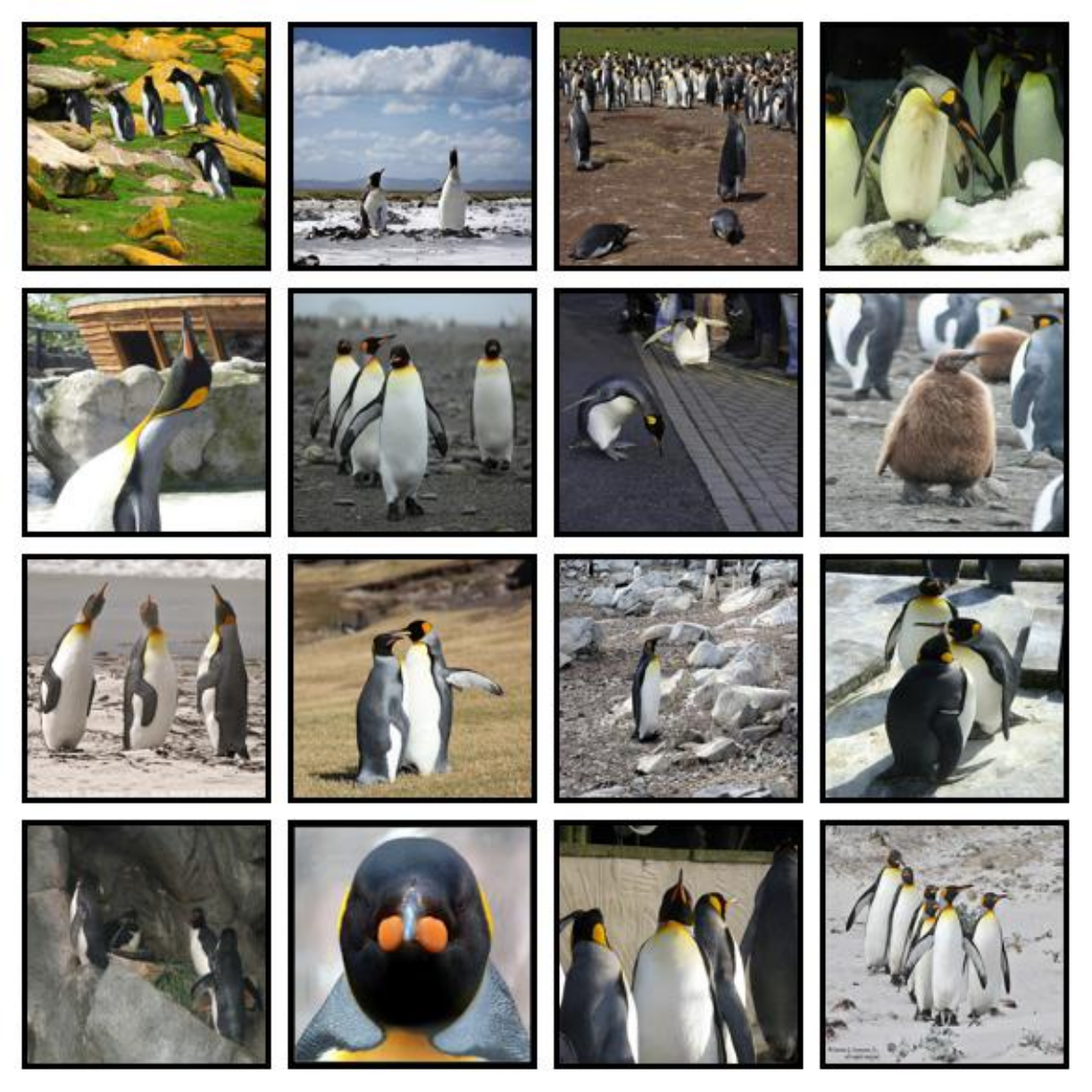} & \includegraphics[width=0.15\linewidth]{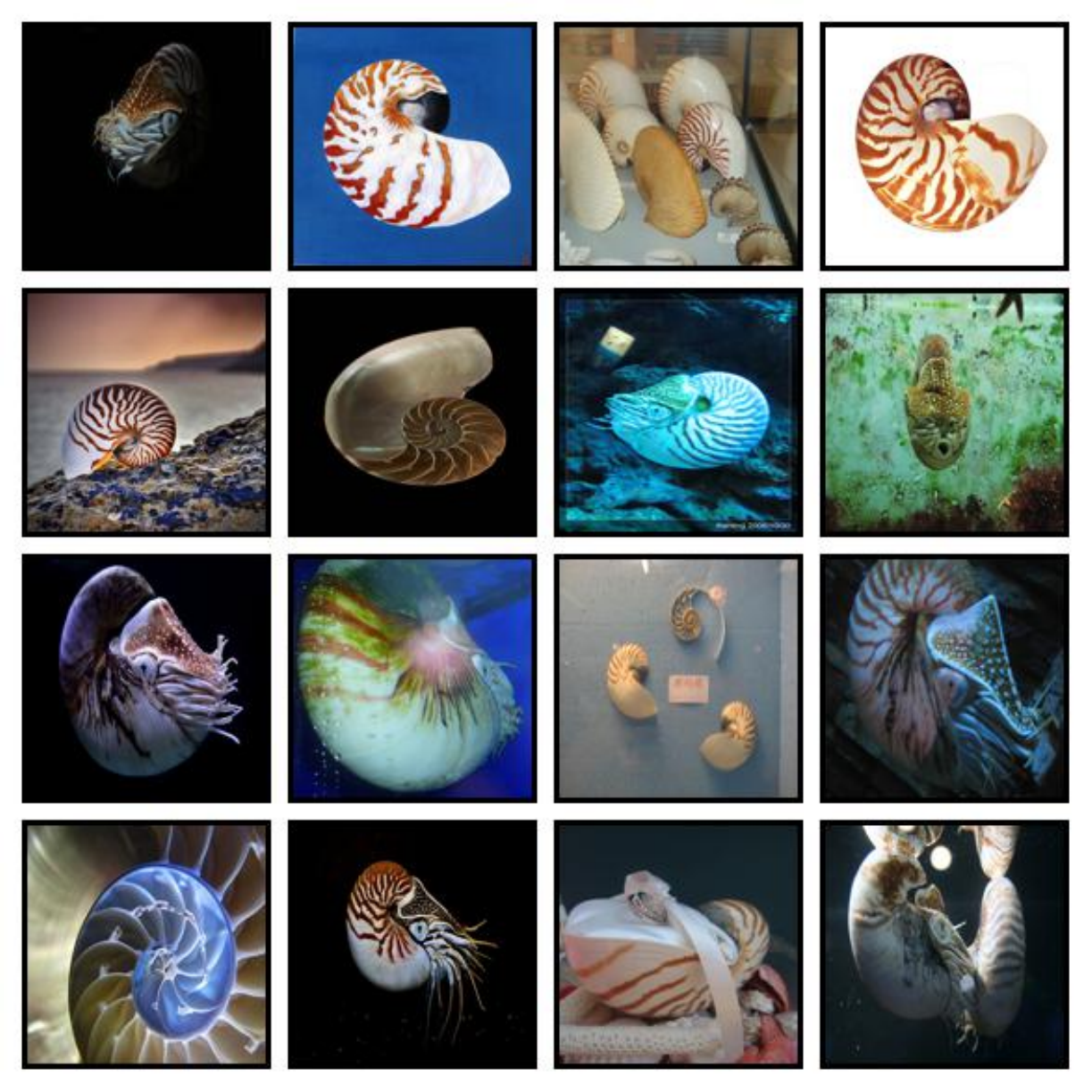} & \includegraphics[width=0.15\linewidth]{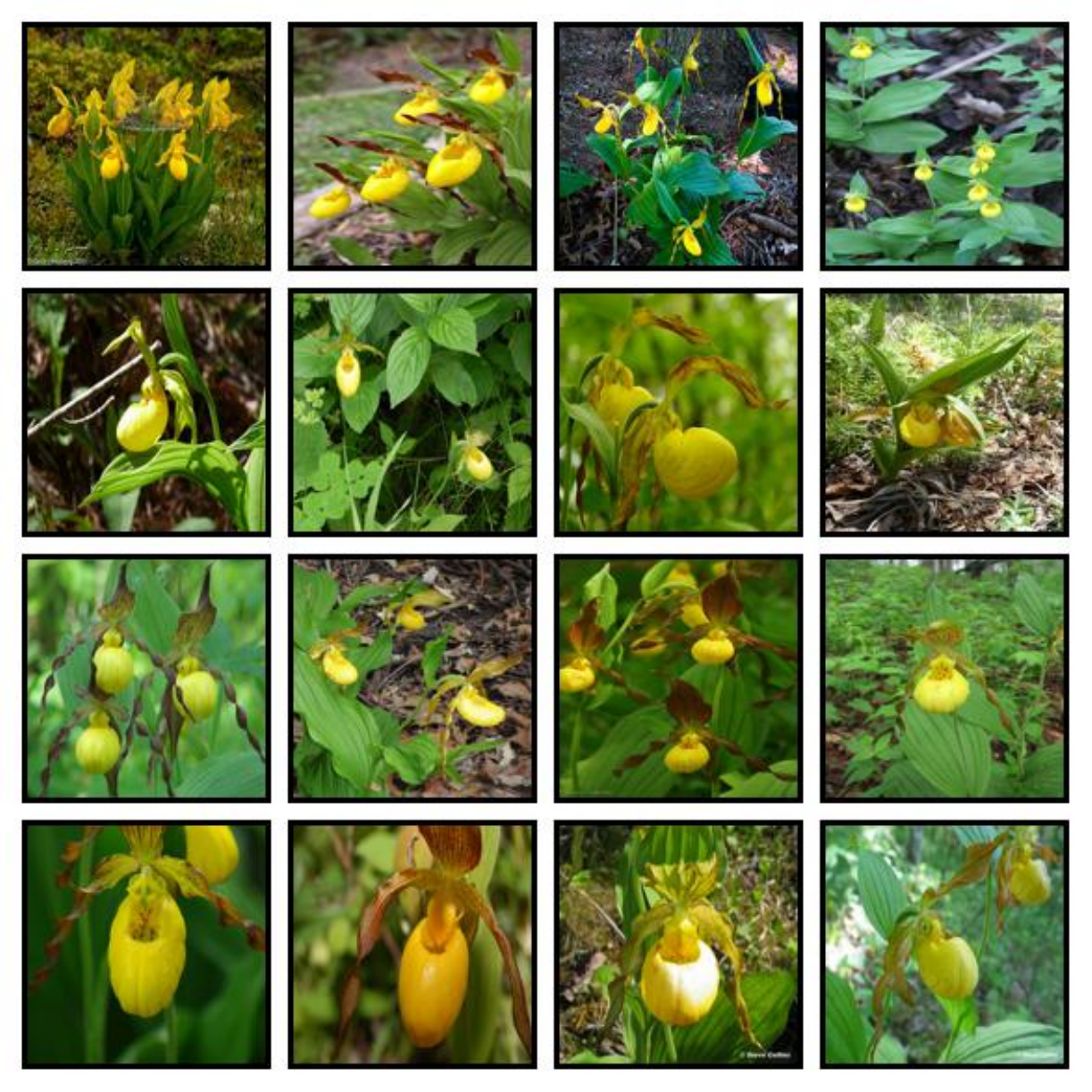} \\ \includegraphics[width=0.15\linewidth]{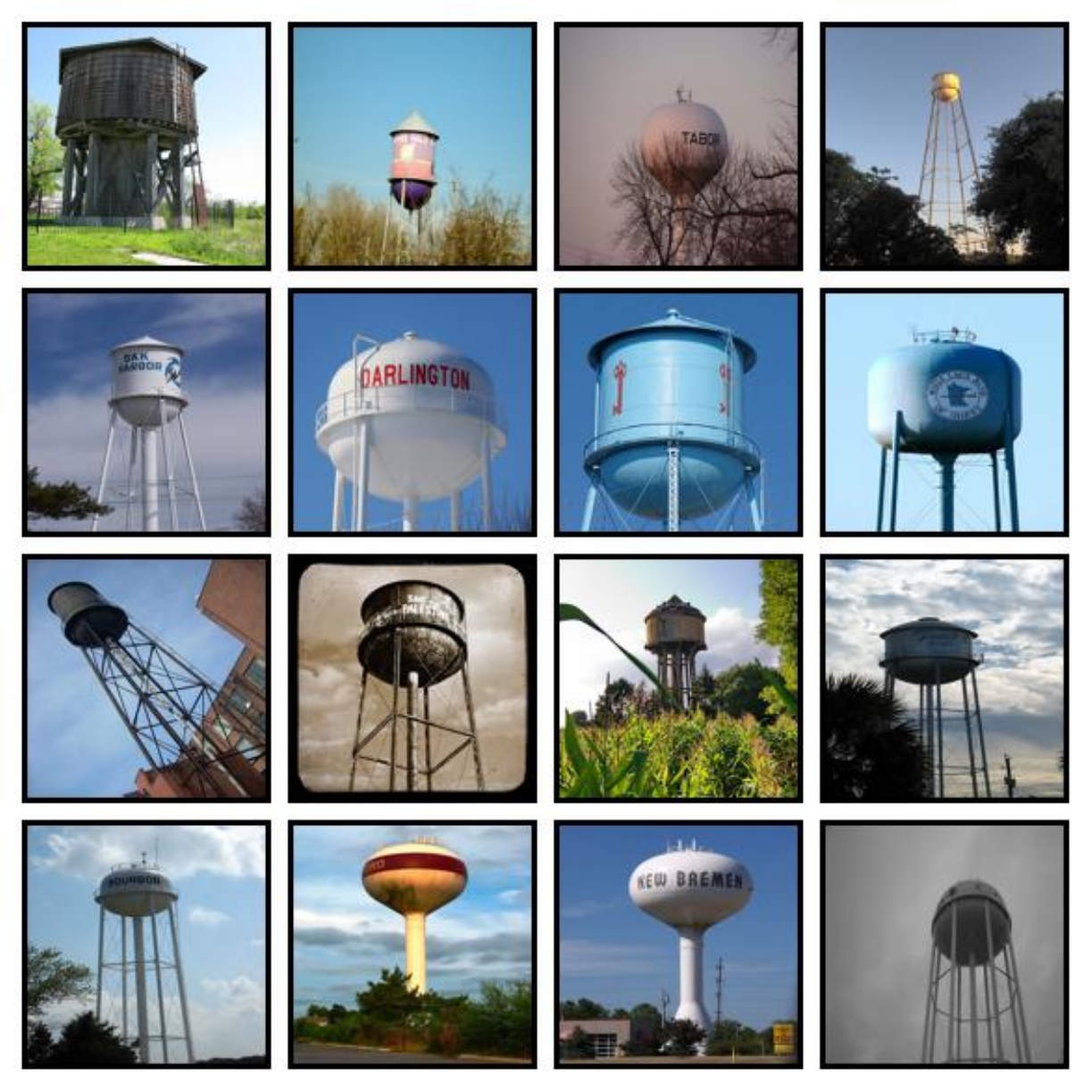} & \includegraphics[width=0.15\linewidth]{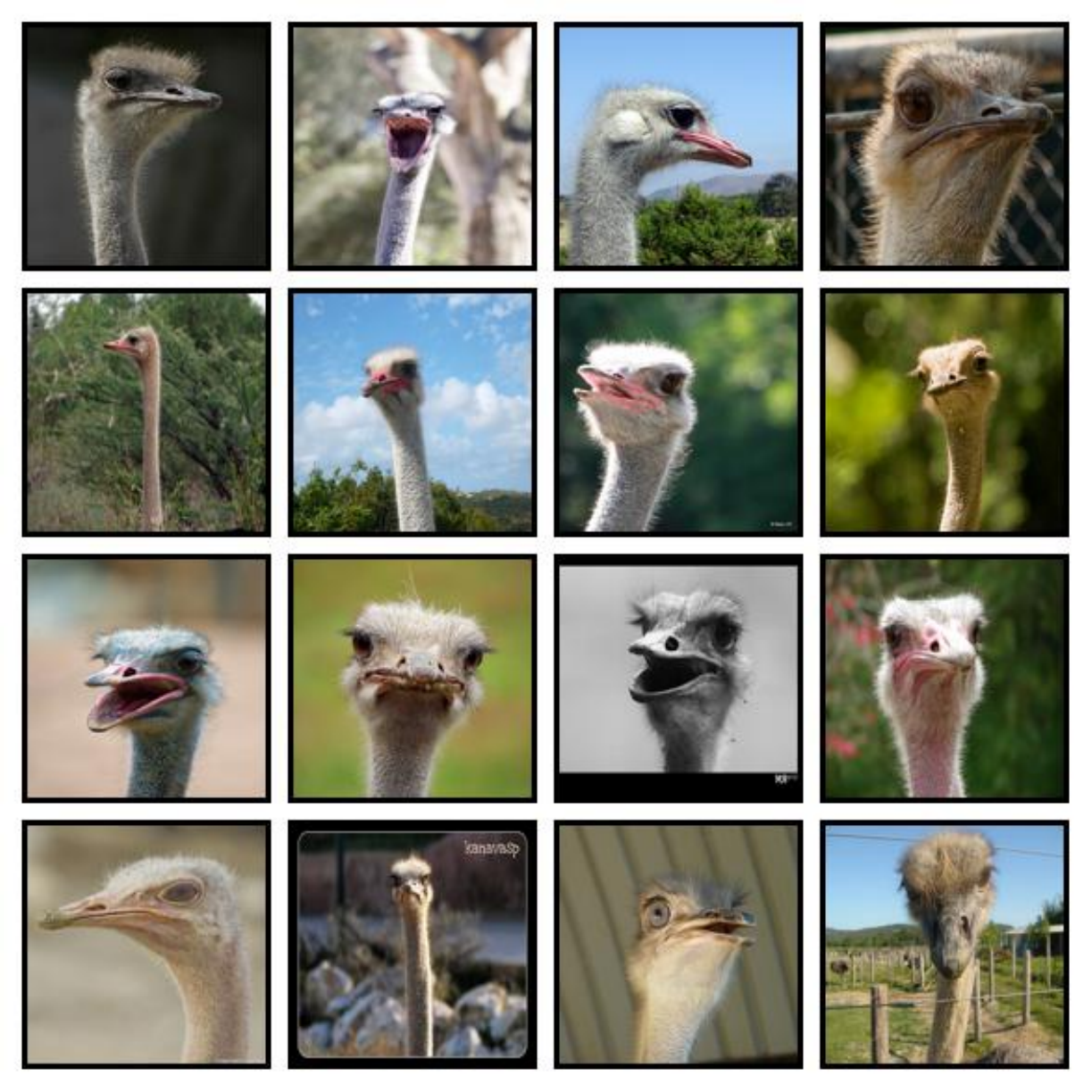} & \includegraphics[width=0.15\linewidth]{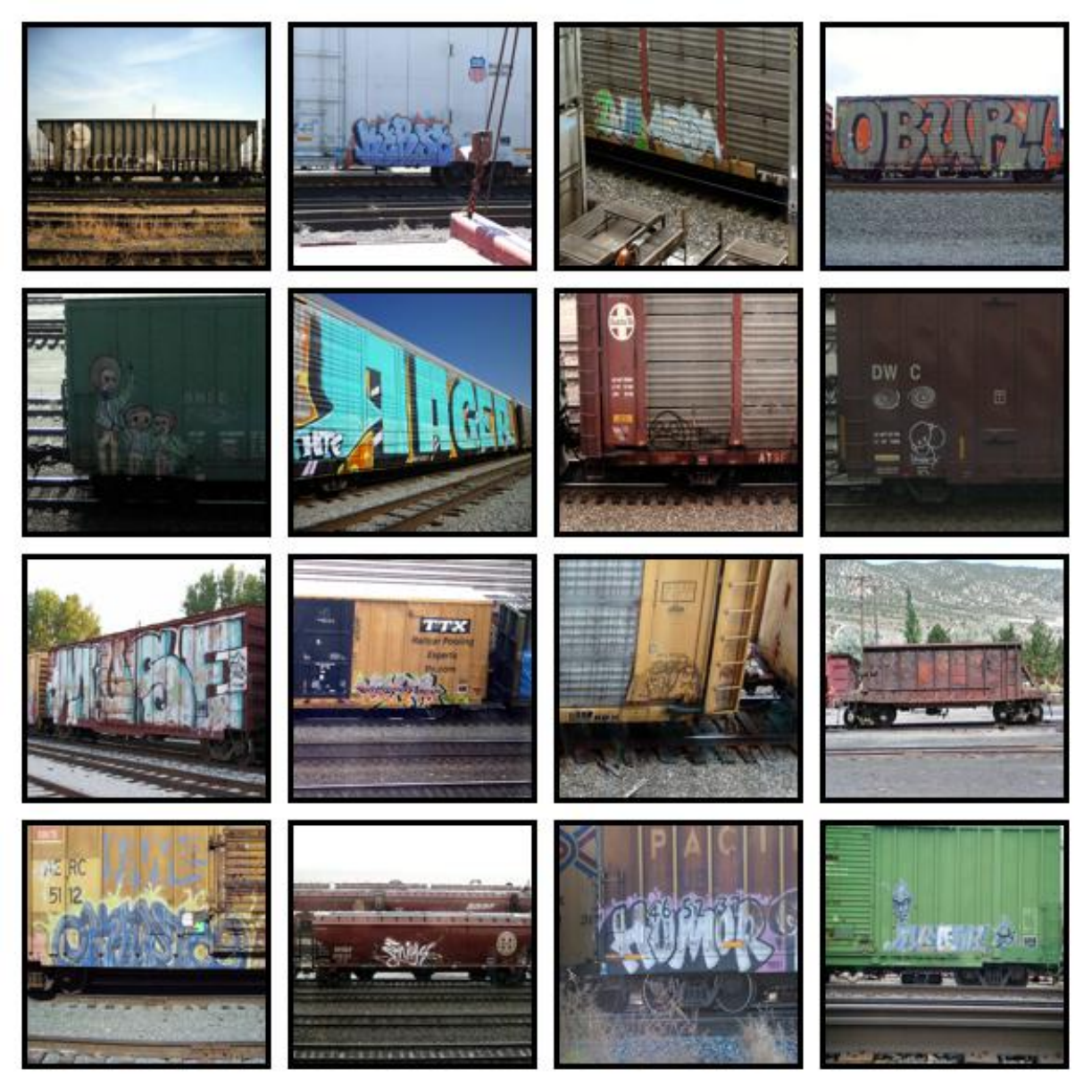} & \includegraphics[width=0.15\linewidth]{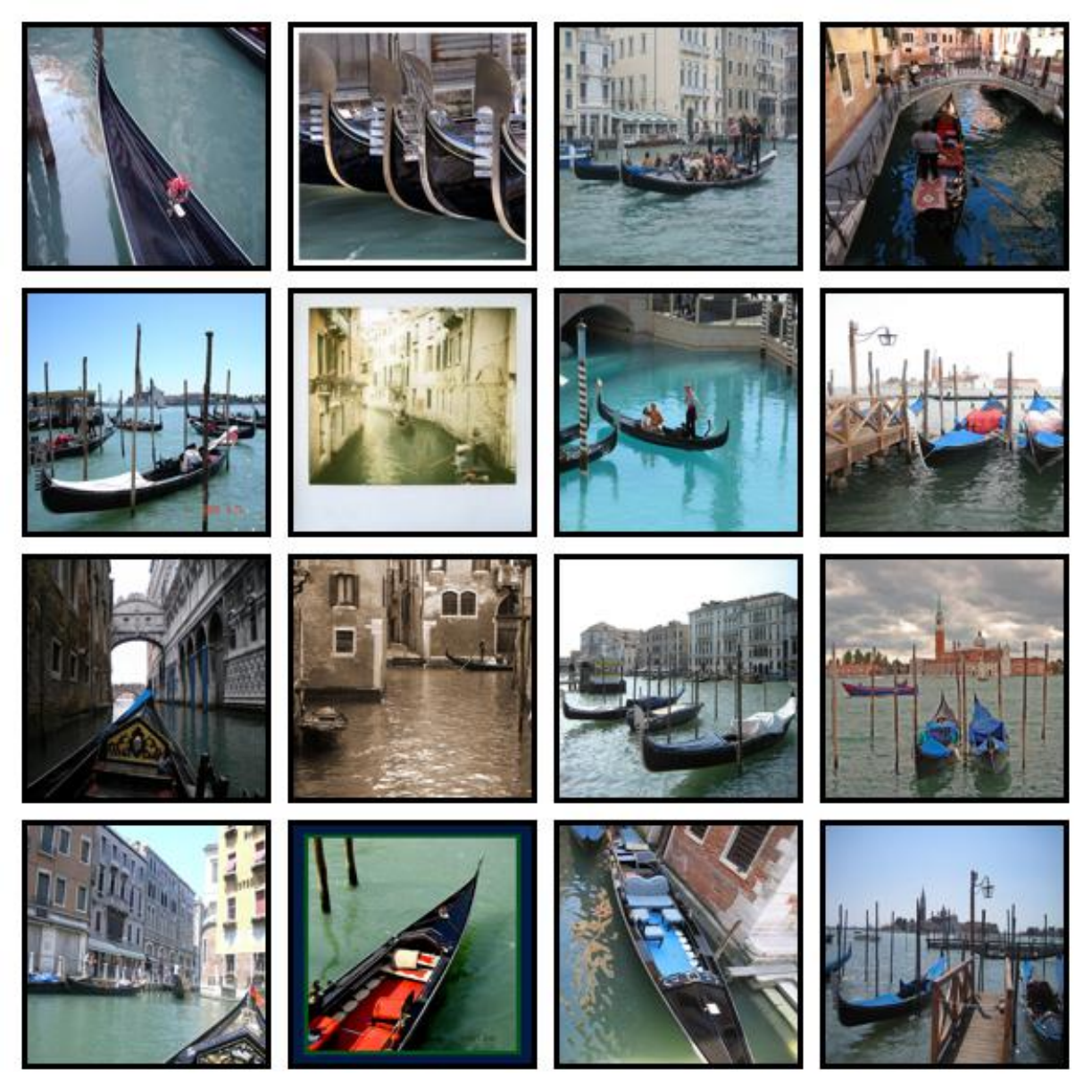} & \includegraphics[width=0.15\linewidth]{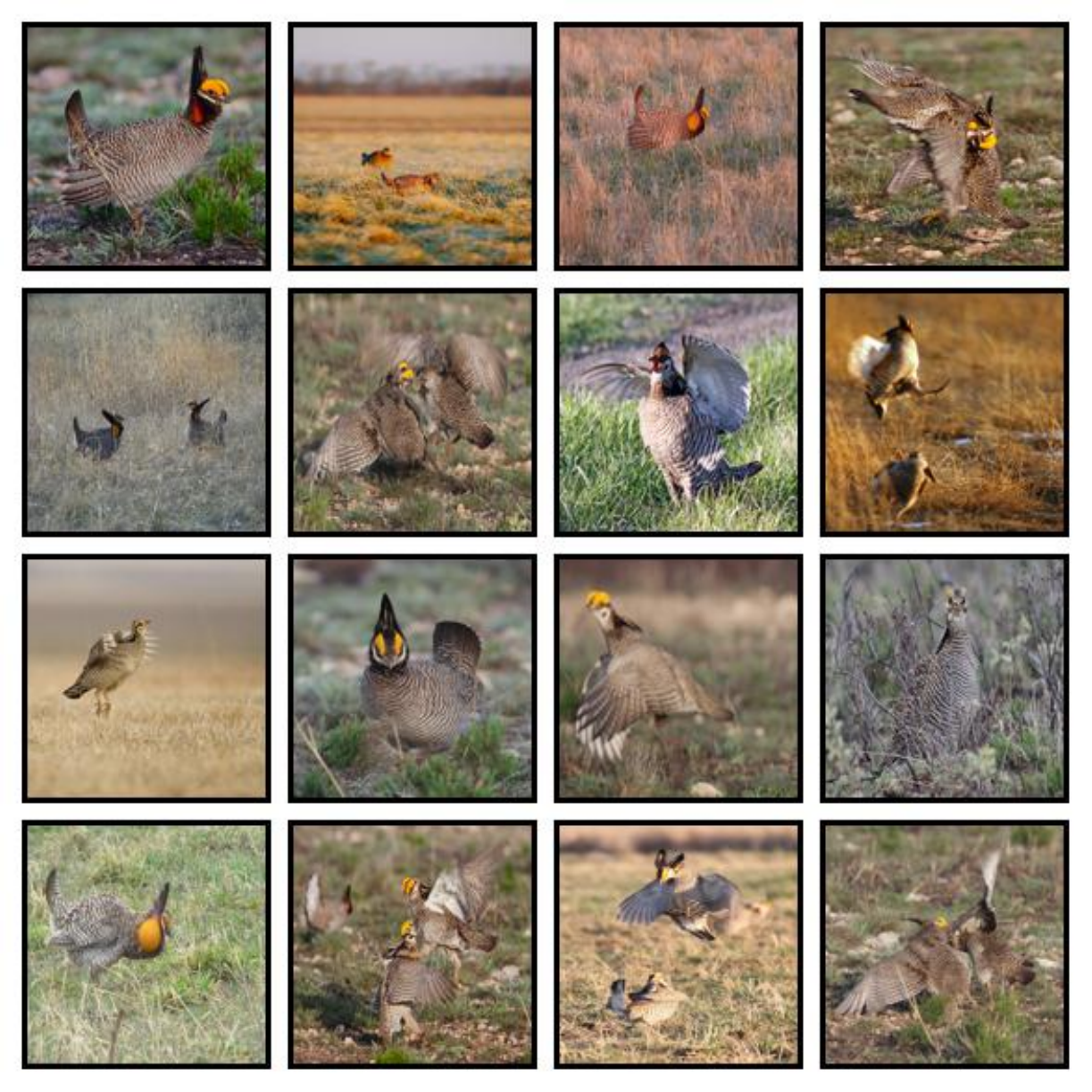} \\
  \includegraphics[width=0.15\linewidth]{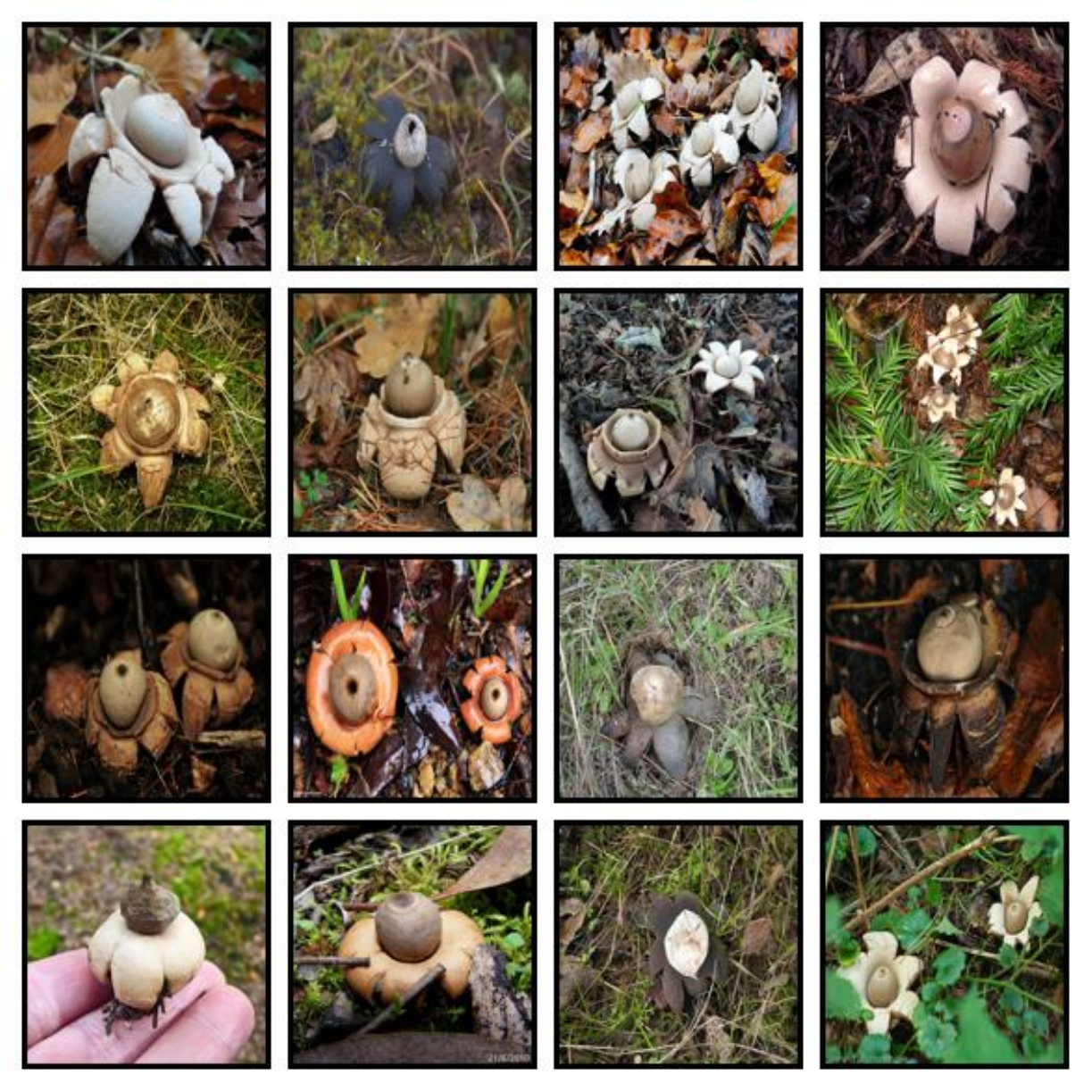} & \includegraphics[width=0.15\linewidth]{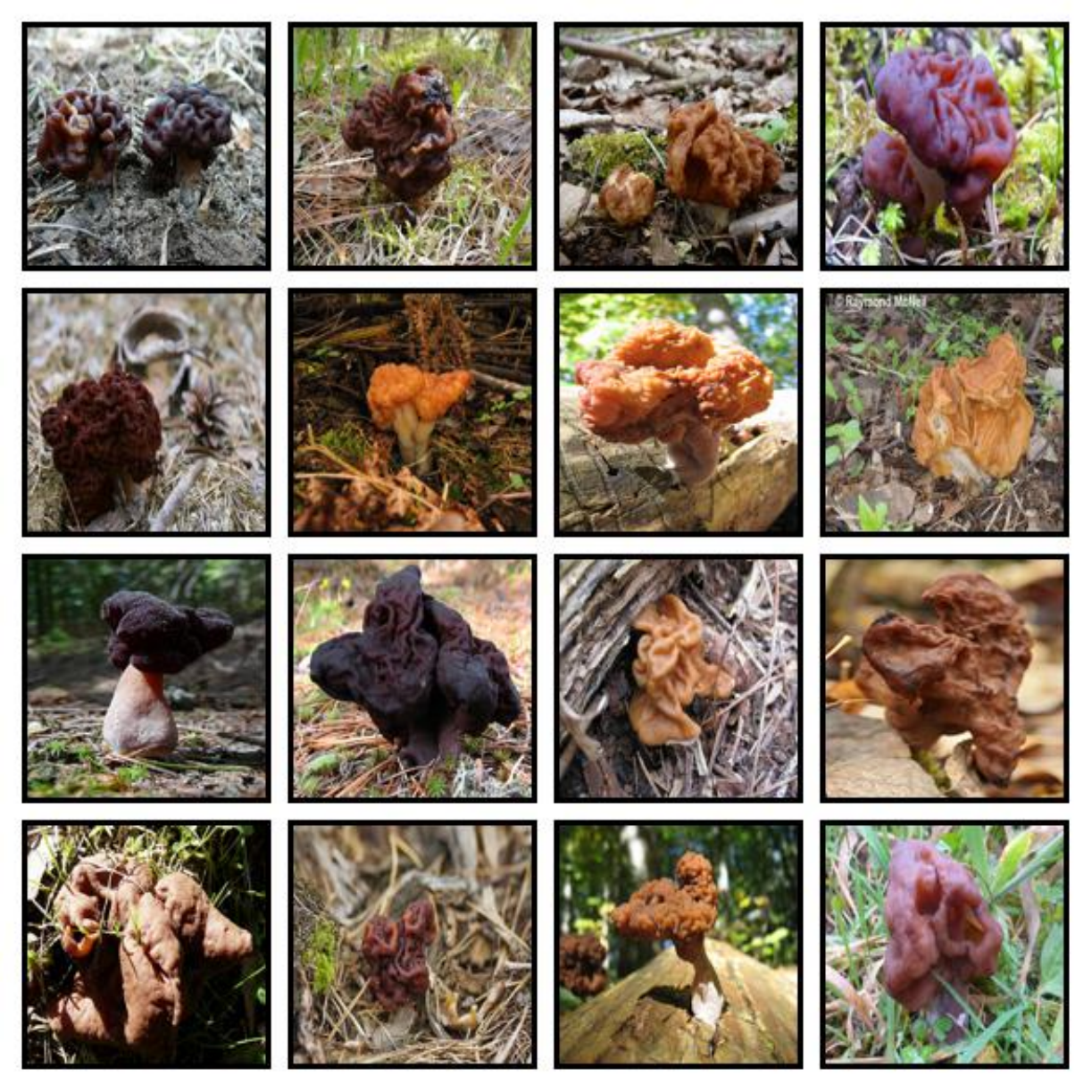} & \includegraphics[width=0.15\linewidth]{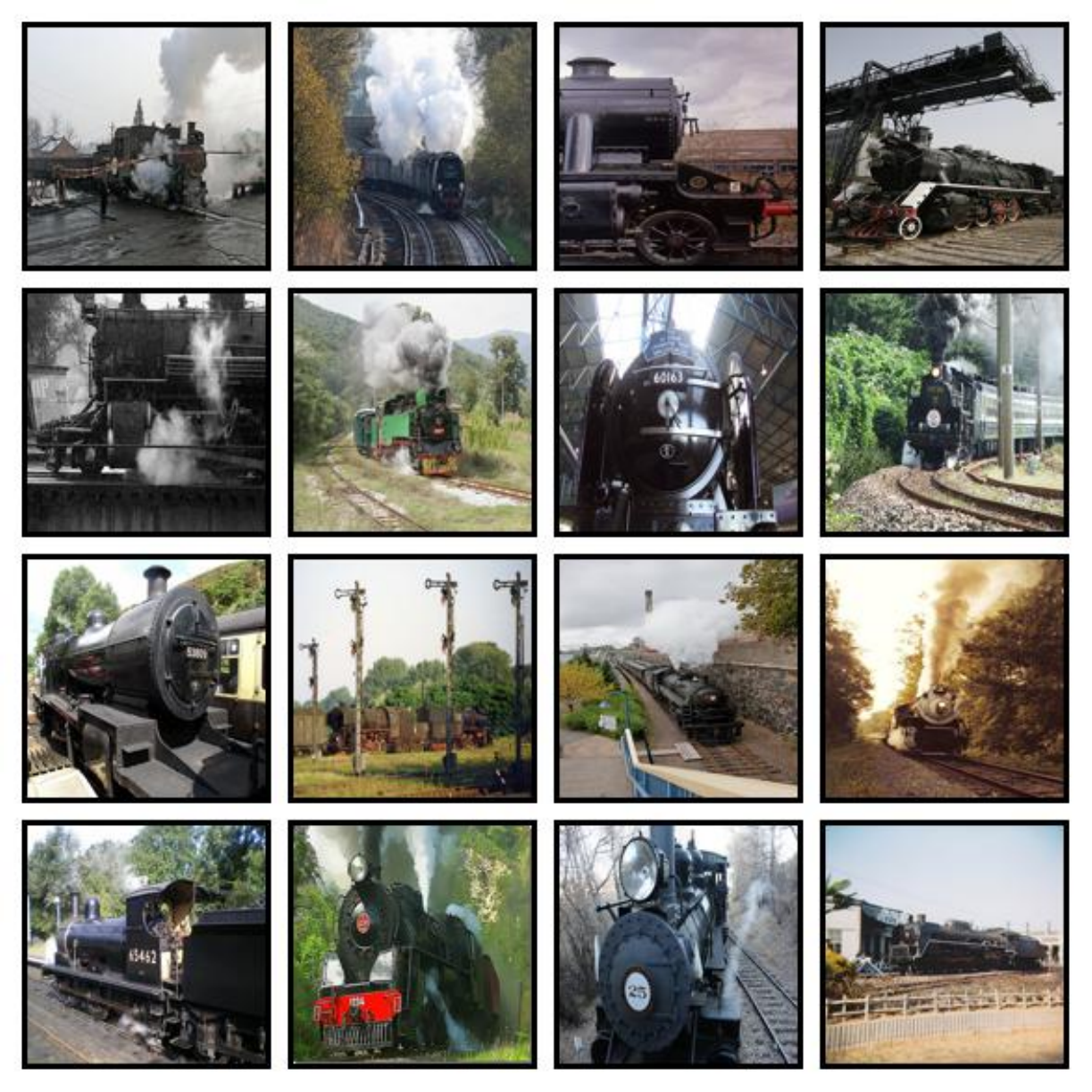} & \includegraphics[width=0.15\linewidth]{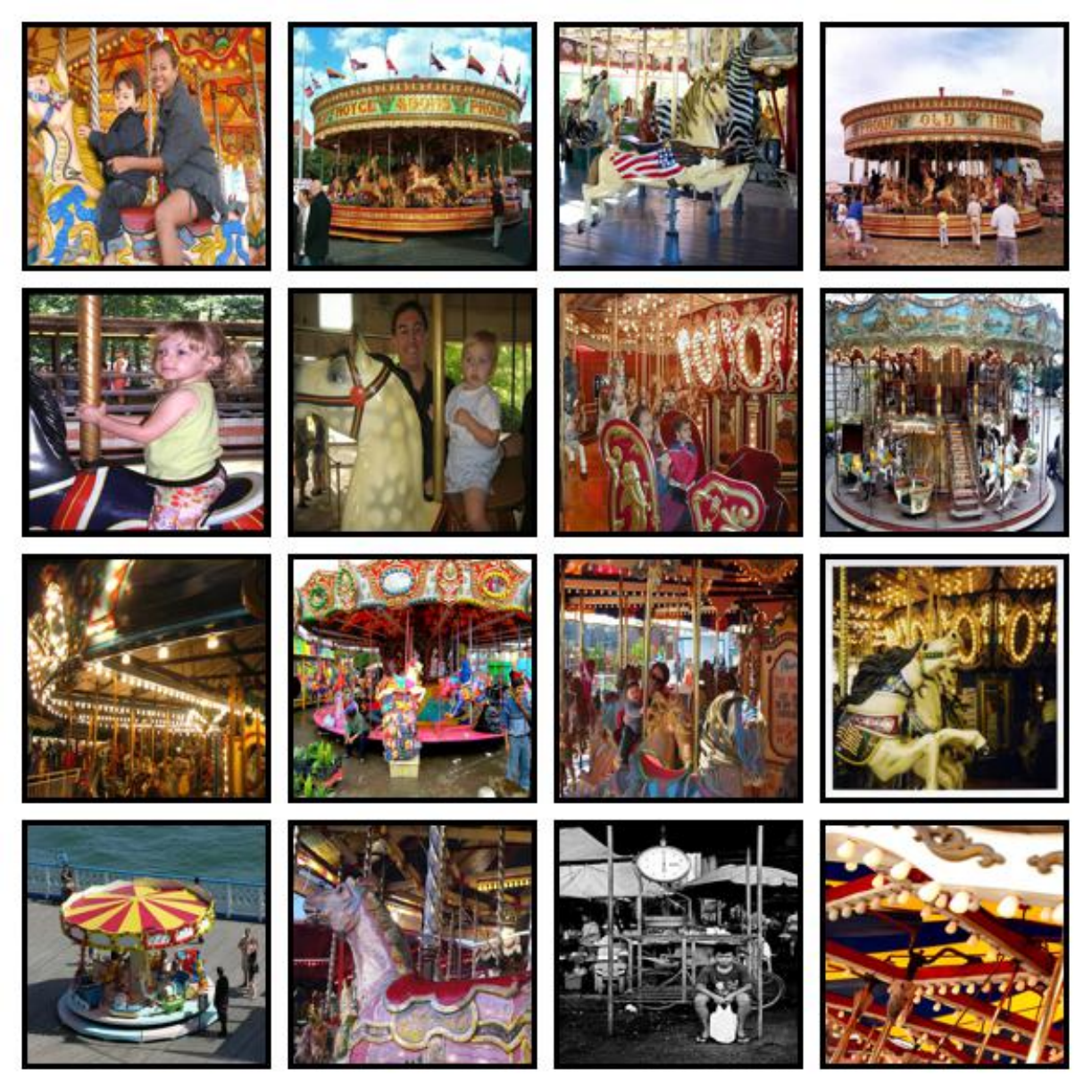} & \includegraphics[width=0.15\linewidth]{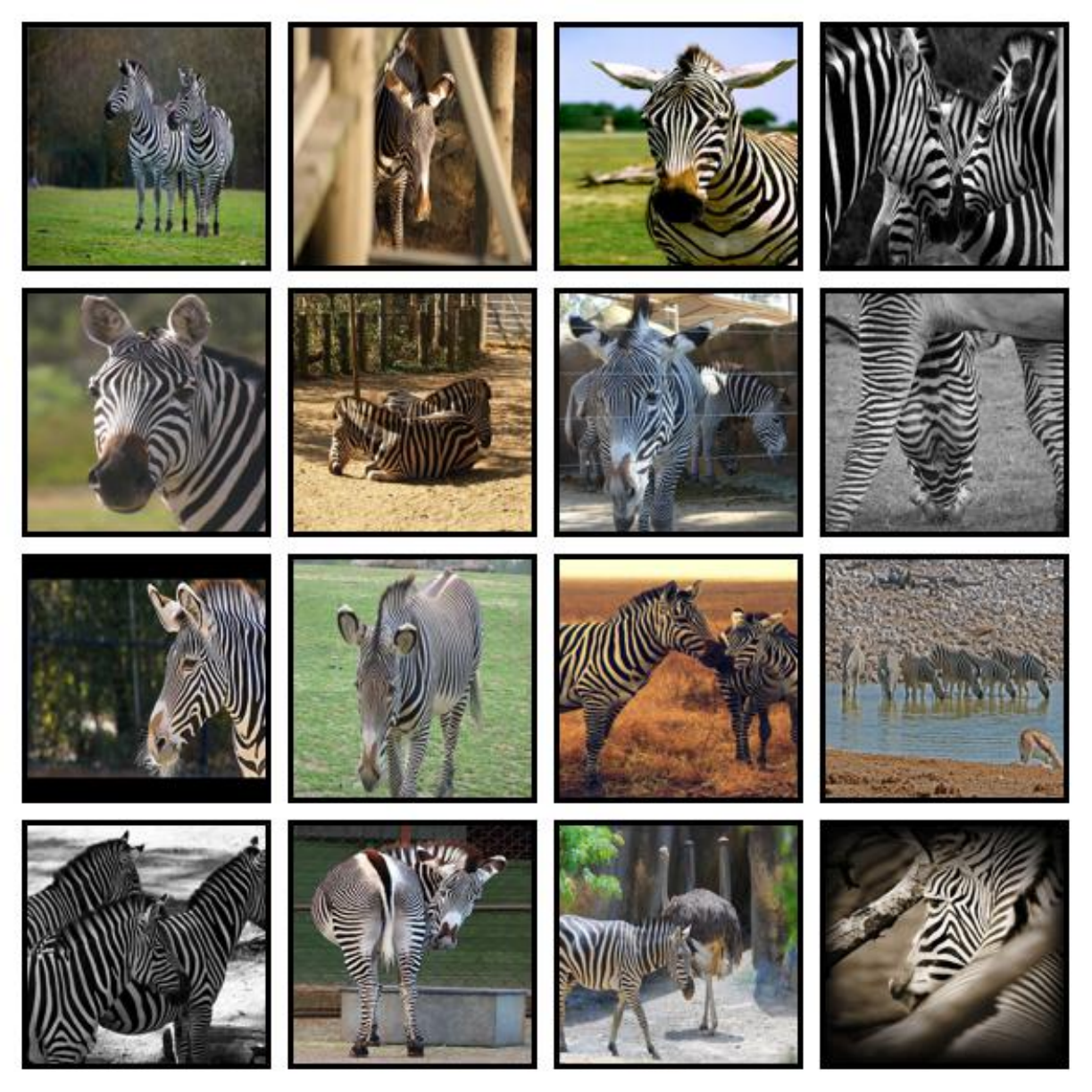} \\
  \includegraphics[width=0.15\linewidth]{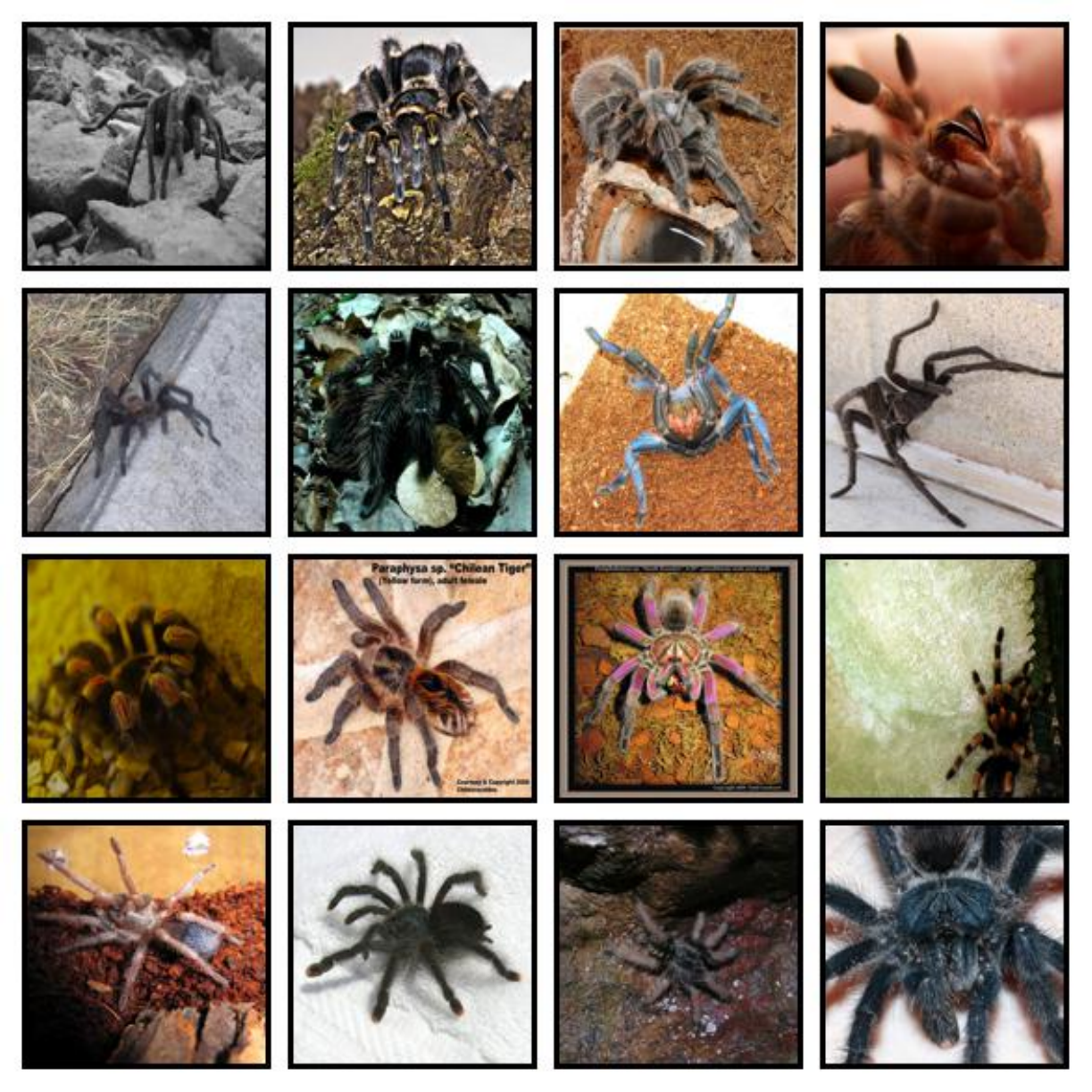} & \includegraphics[width=0.15\linewidth]{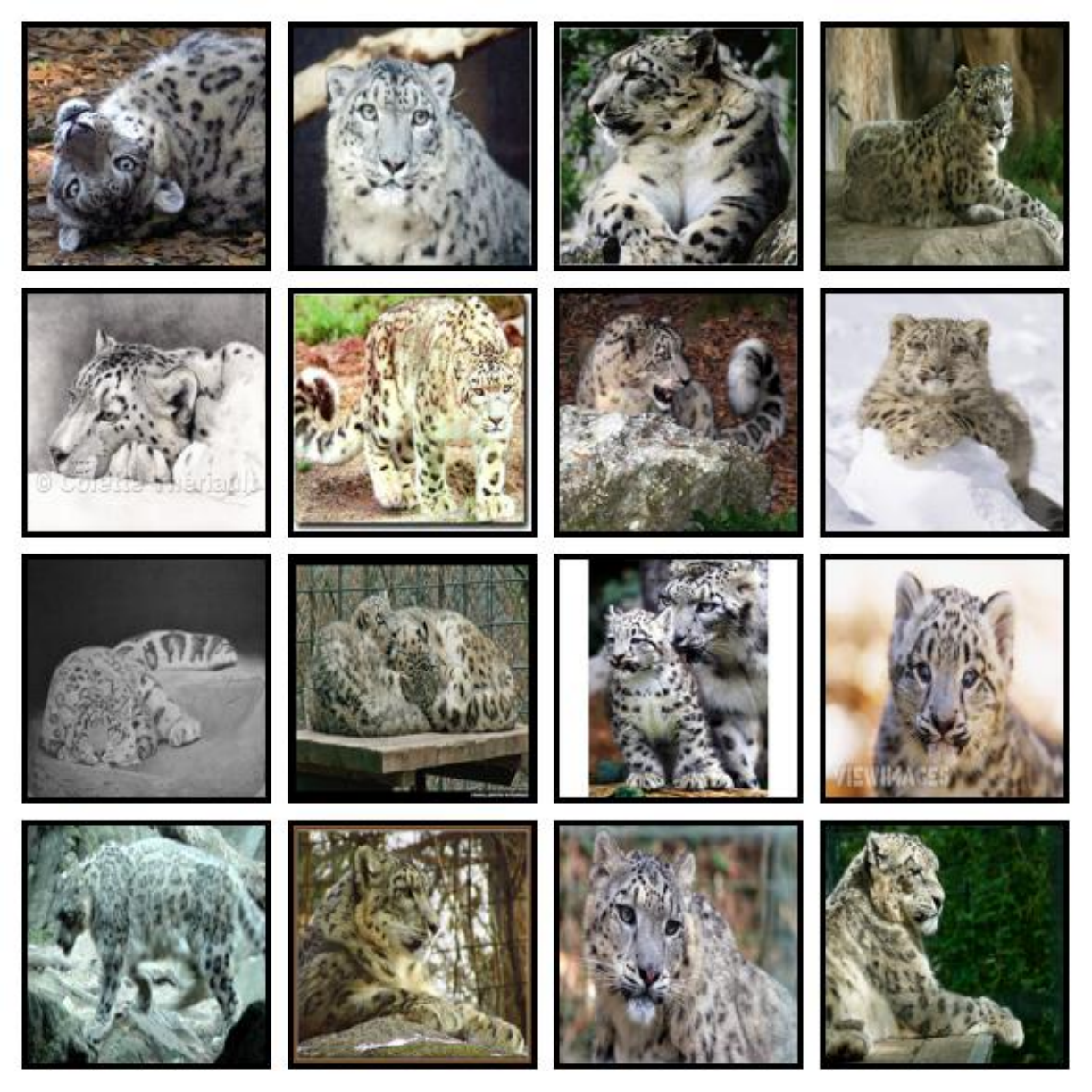} & \includegraphics[width=0.15\linewidth]{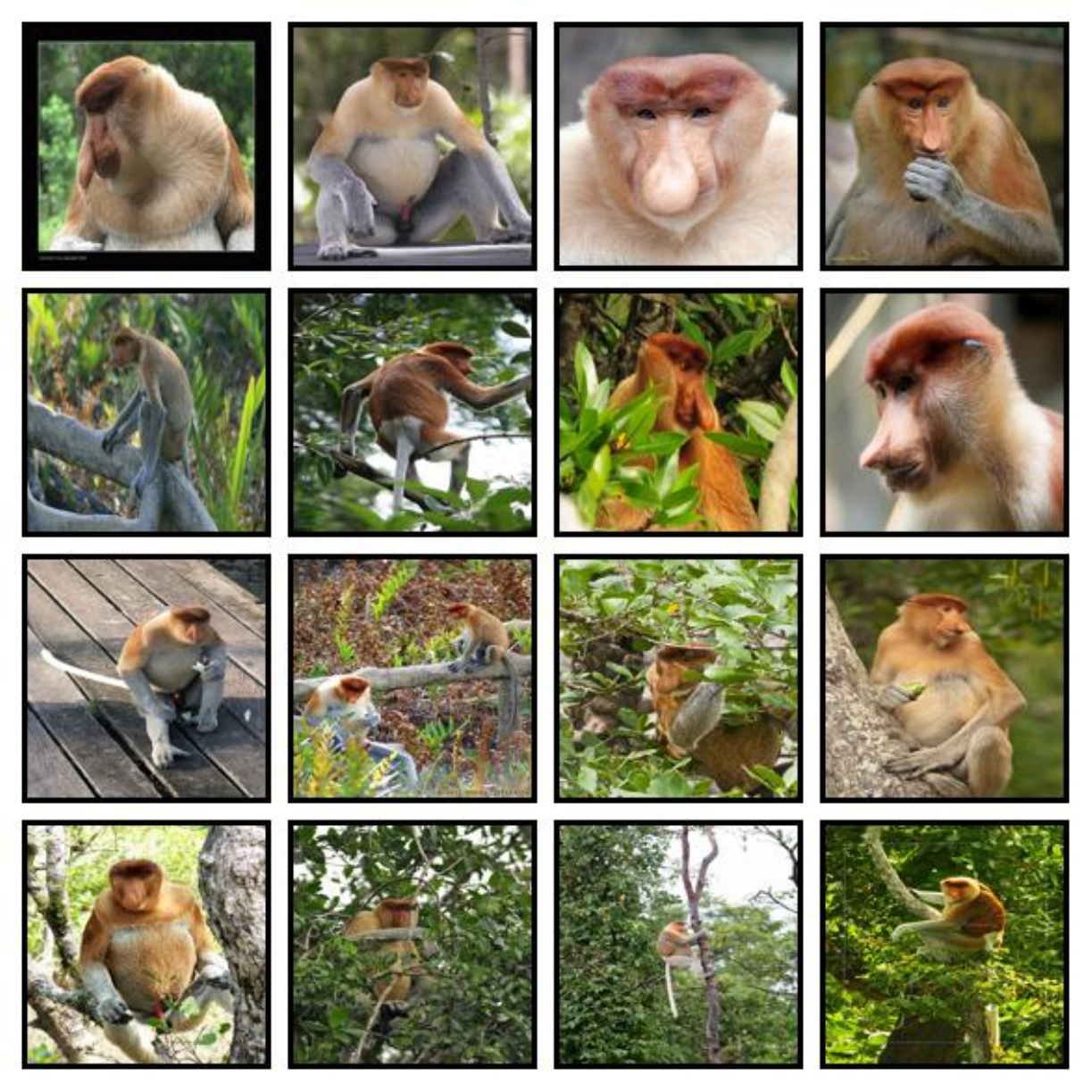} & \includegraphics[width=0.15\linewidth]{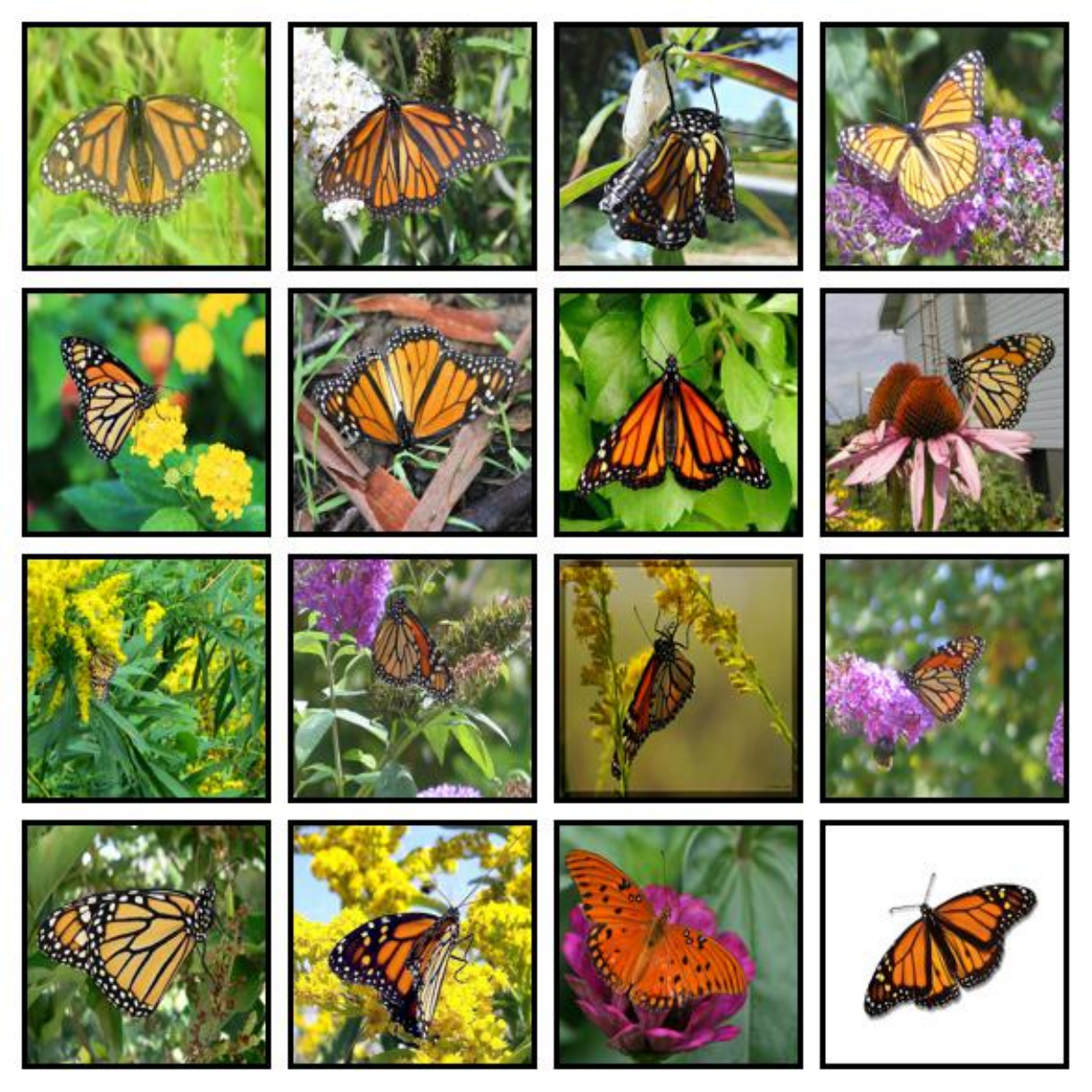} & \includegraphics[width=0.15\linewidth]{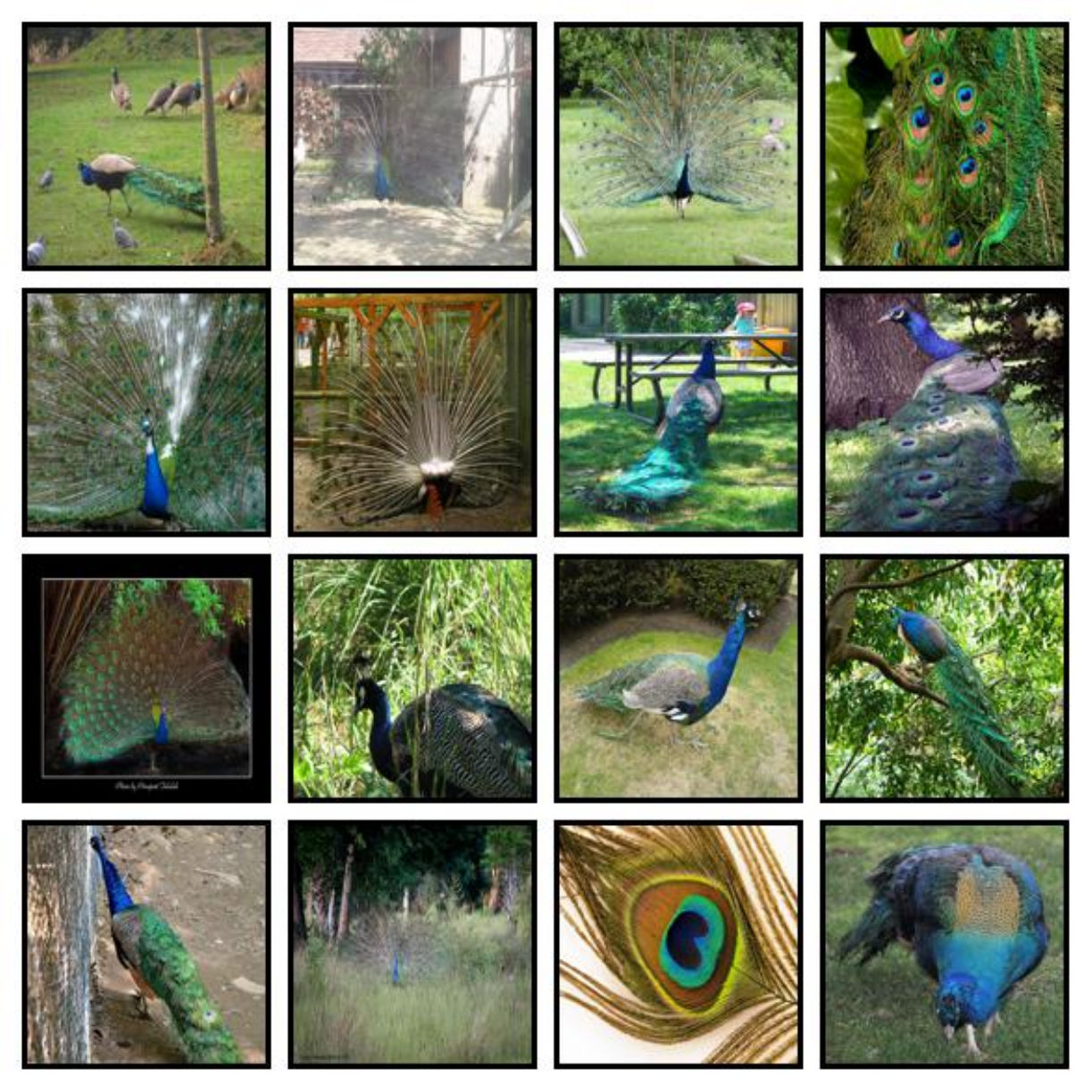} \\
  \includegraphics[width=0.15\linewidth]{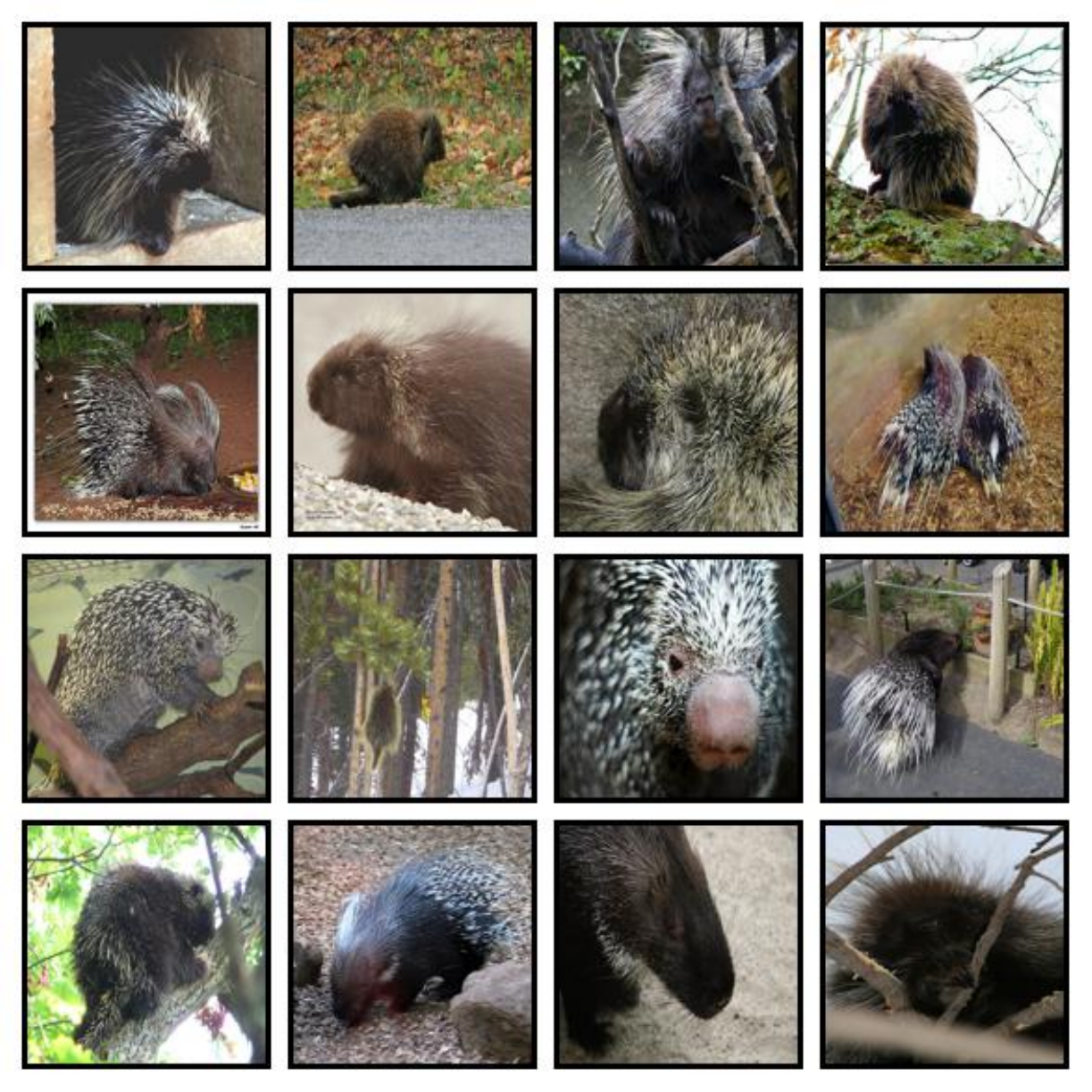} & \includegraphics[width=0.15\linewidth]{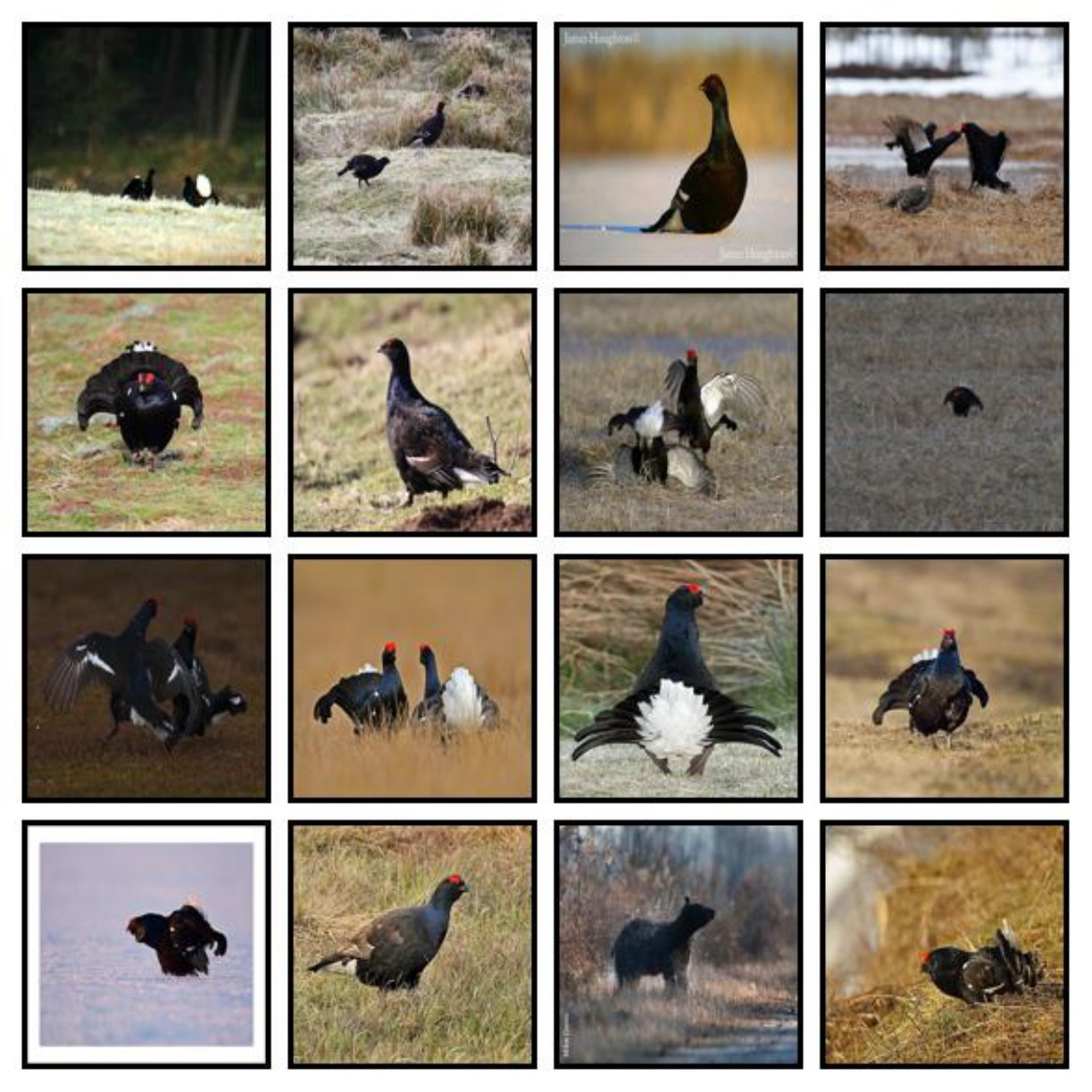} & \includegraphics[width=0.15\linewidth]{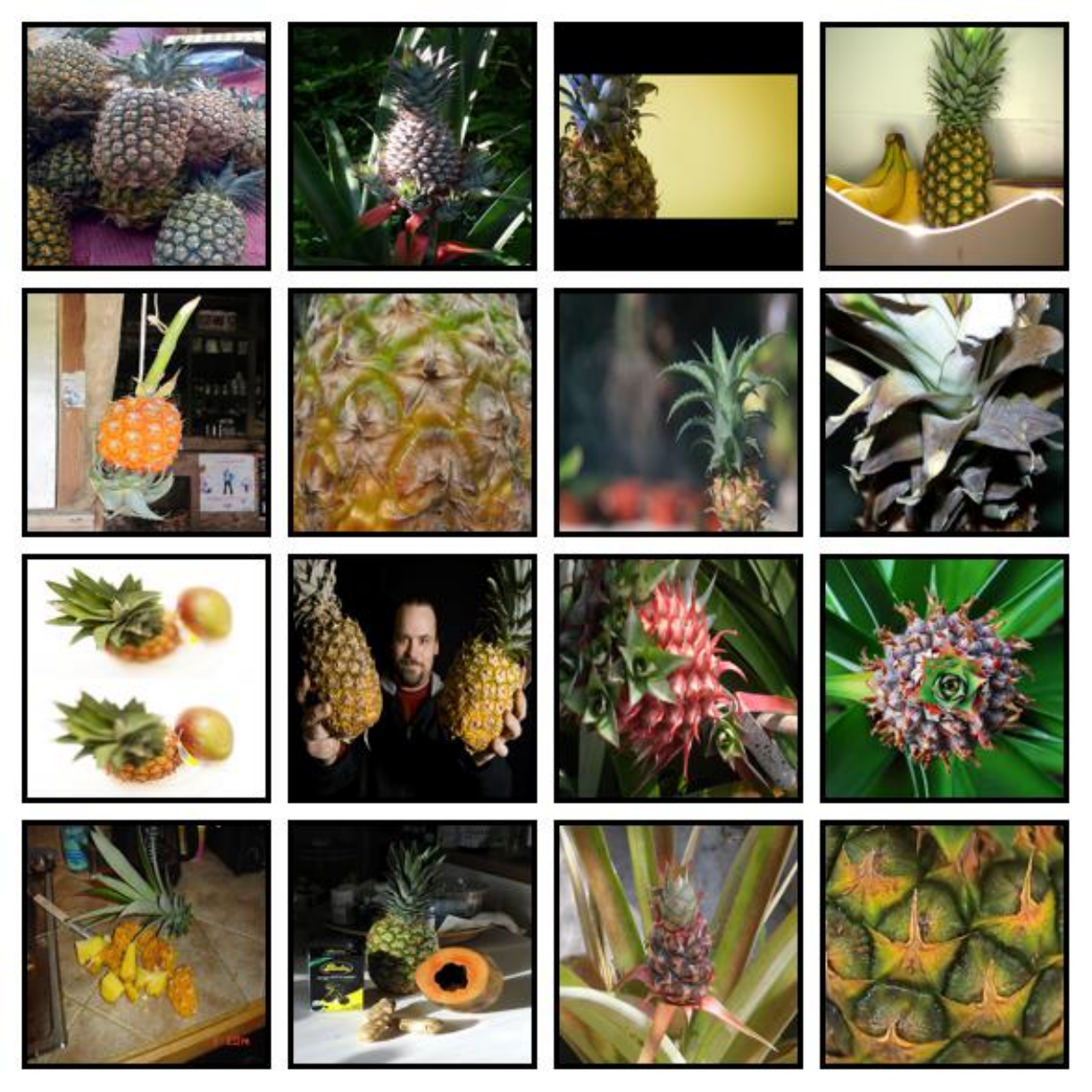} & \includegraphics[width=0.15\linewidth]{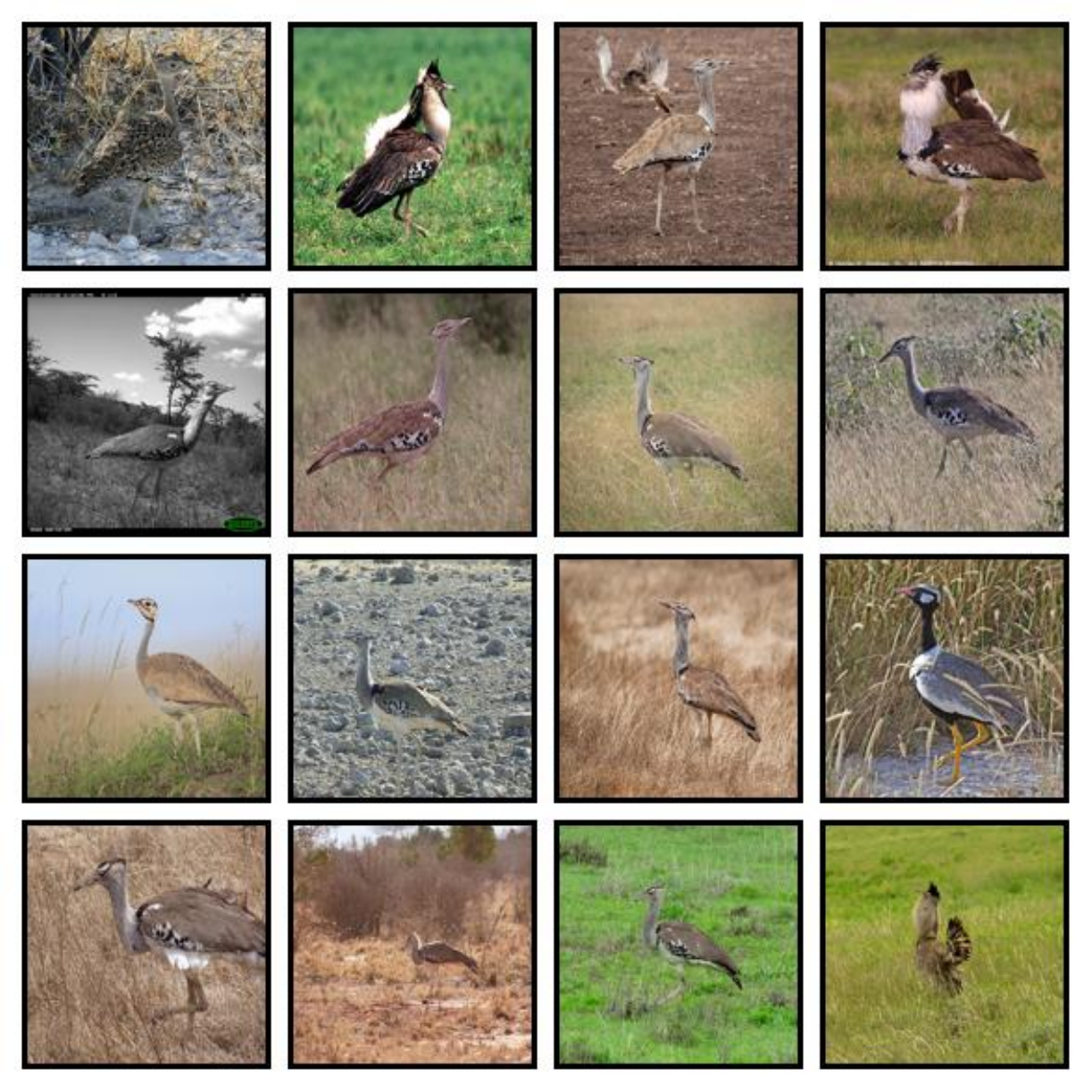} & \includegraphics[width=0.15\linewidth]{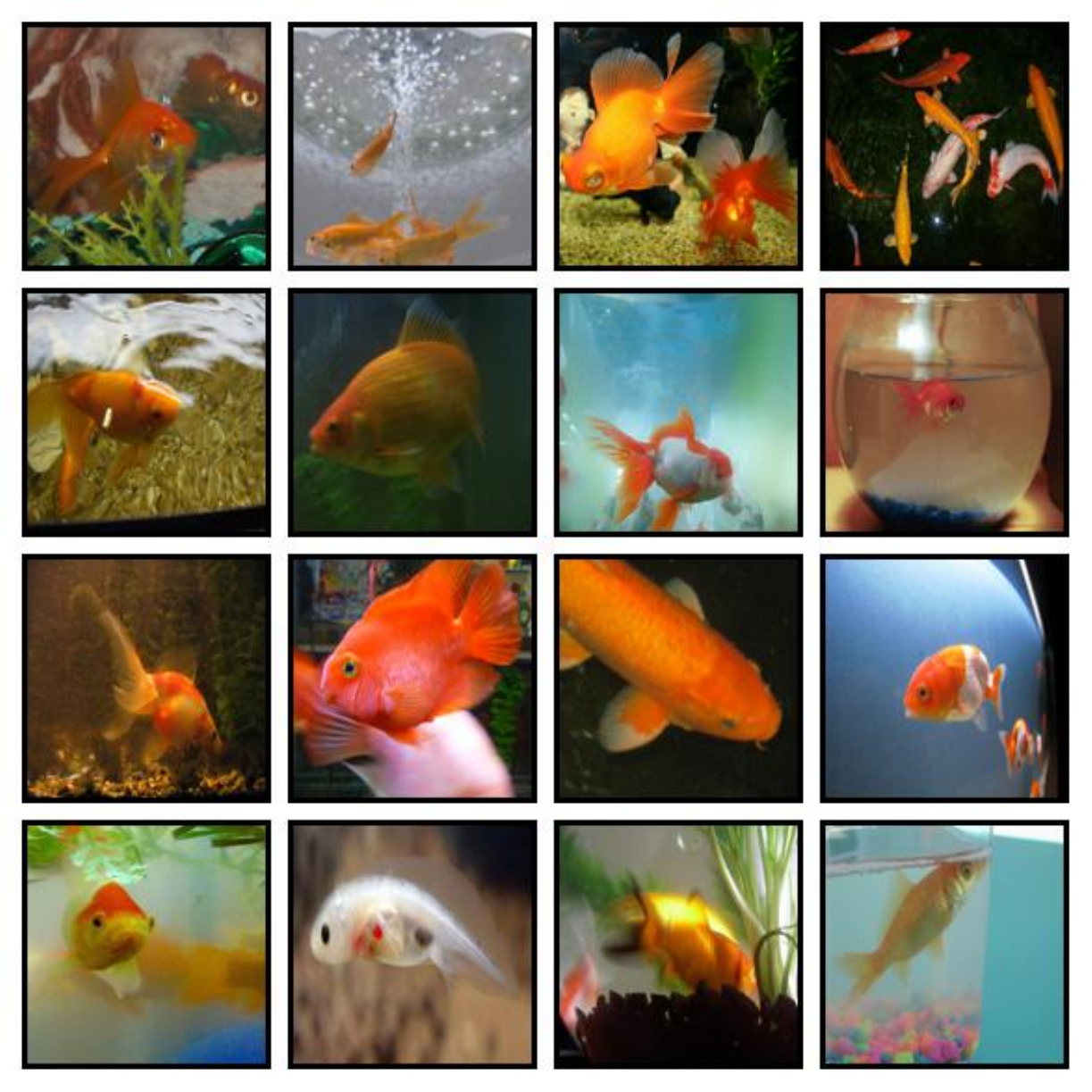} \\
  \includegraphics[width=0.15\linewidth]{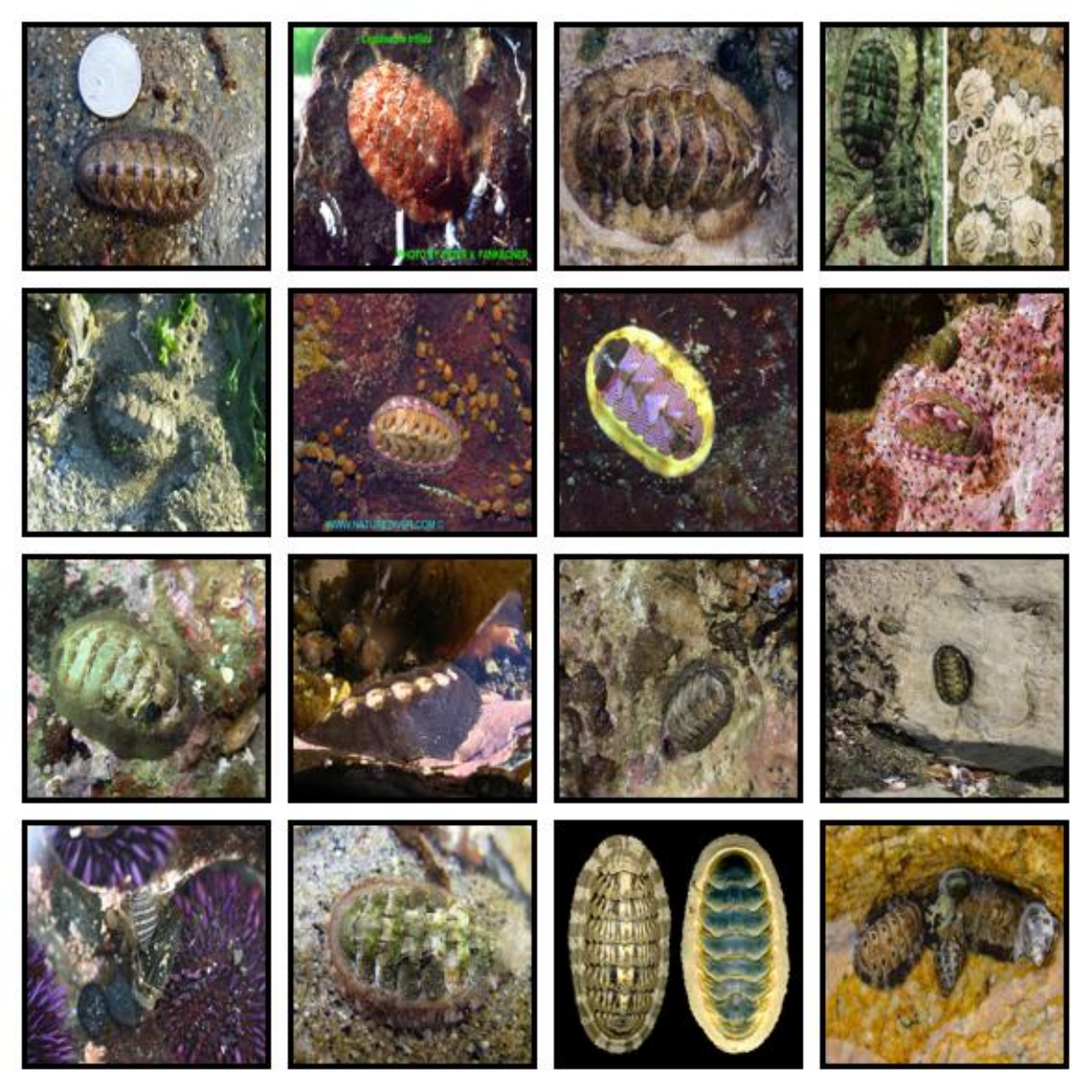} & \includegraphics[width=0.15\linewidth]{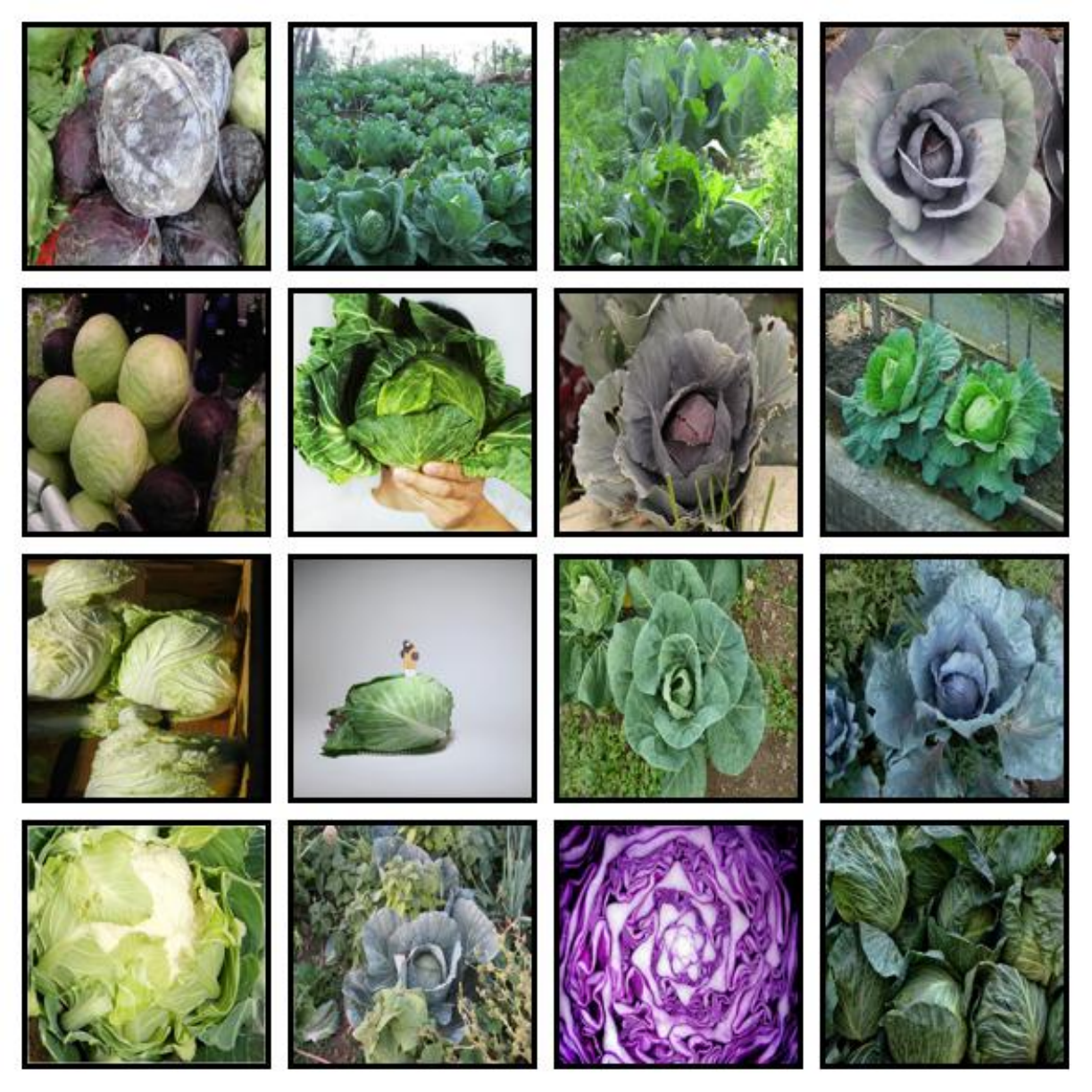} & \includegraphics[width=0.15\linewidth]{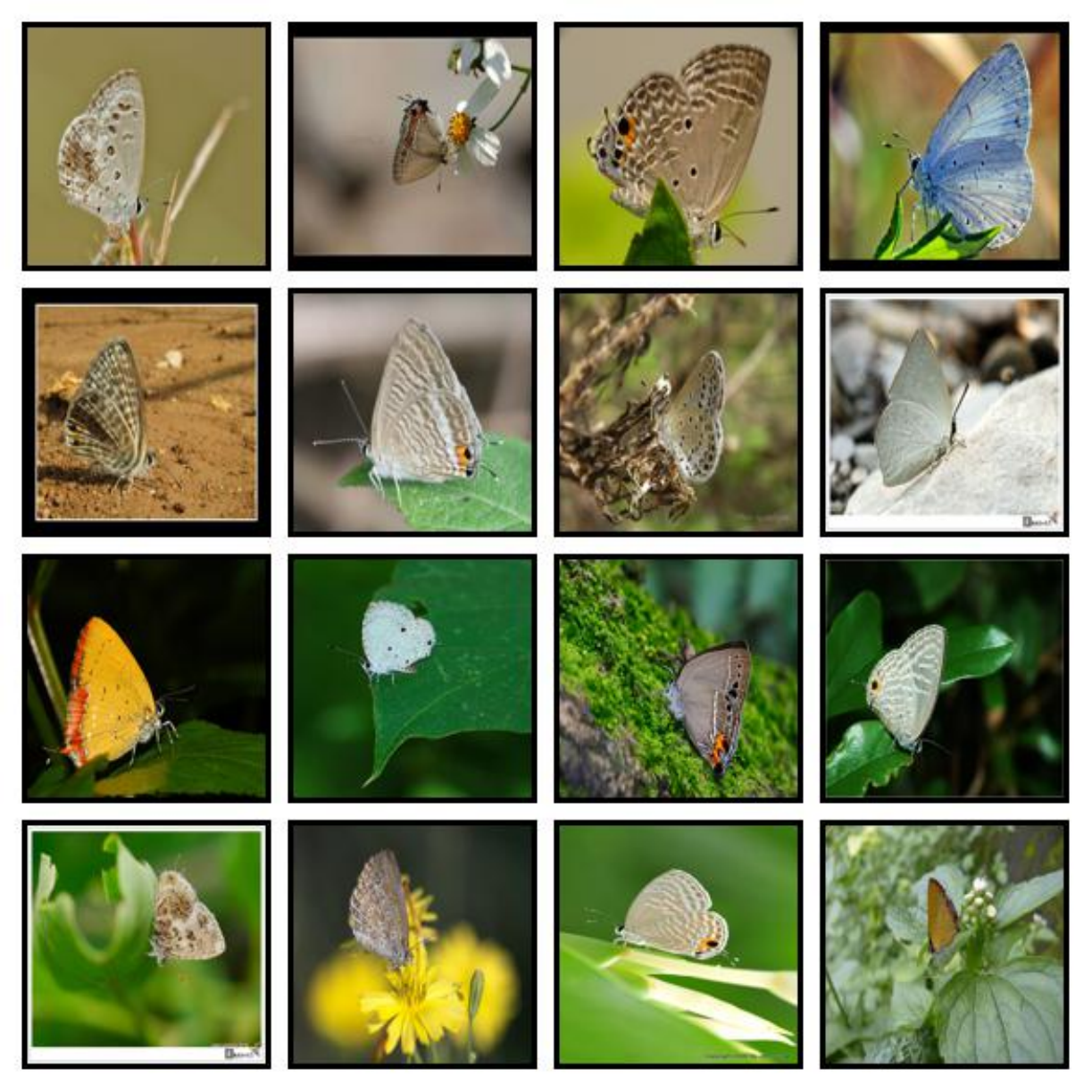} & \includegraphics[width=0.15\linewidth]{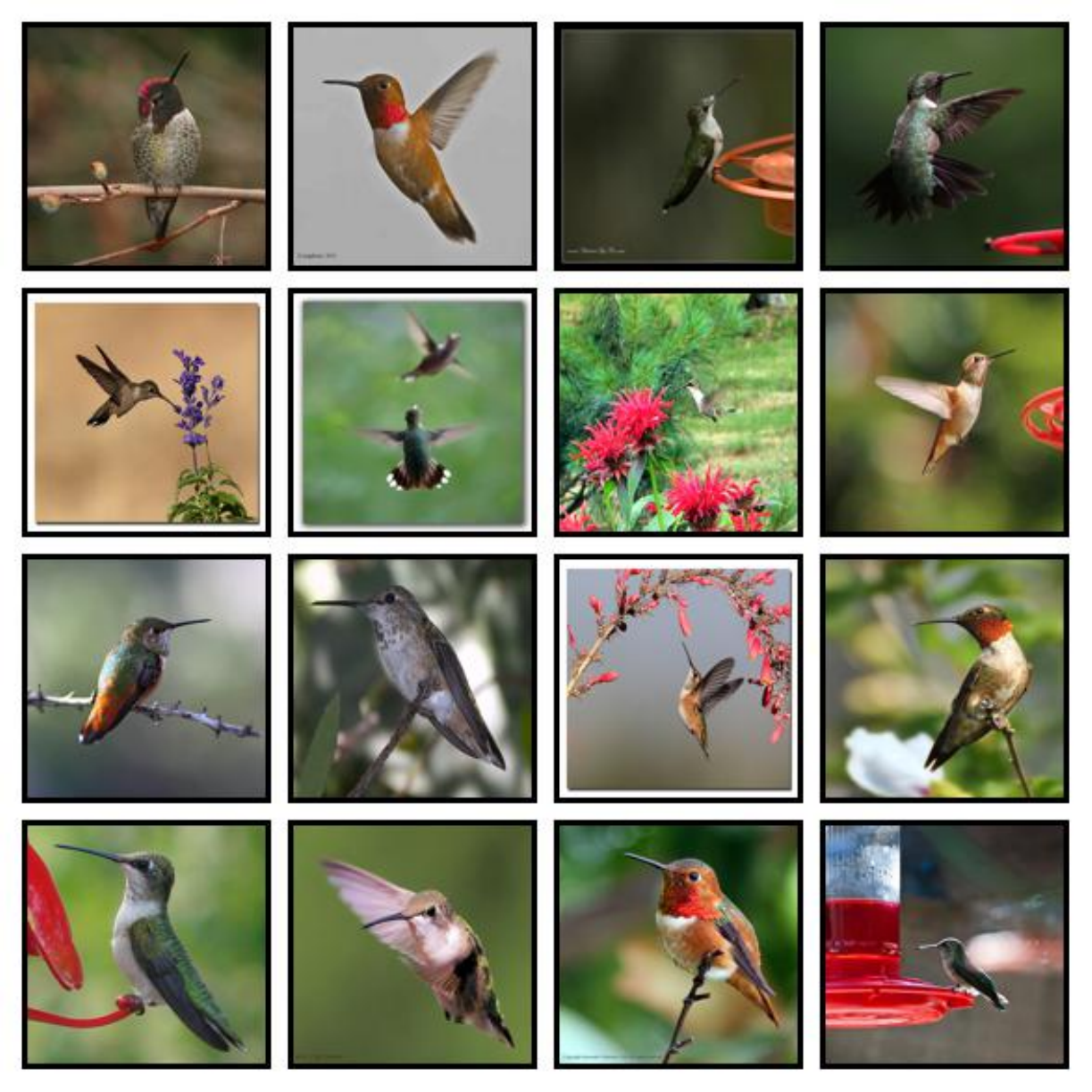} & \includegraphics[width=0.15\linewidth]{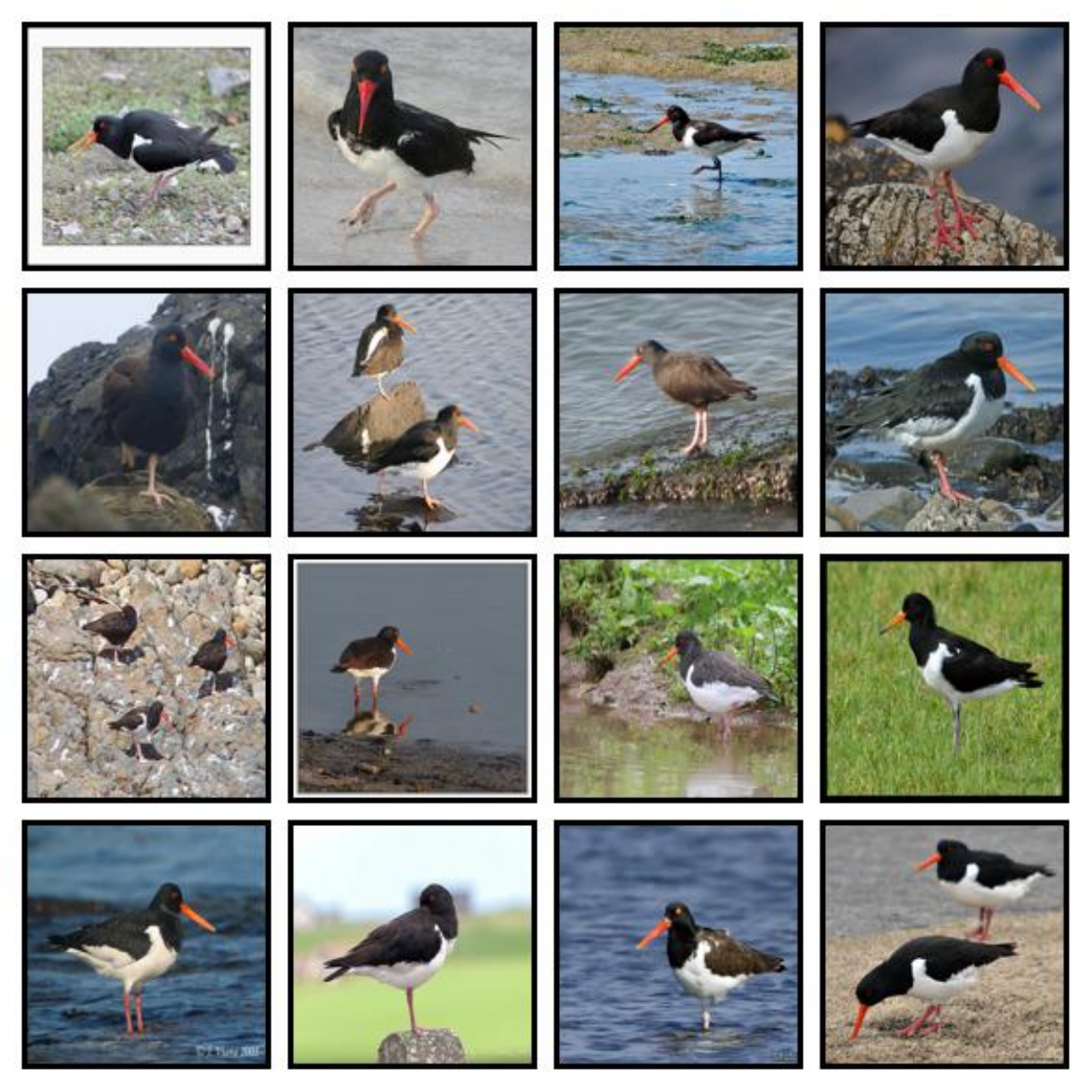} 
\end{tabular}
\caption{\textbf{\textit{High} accuracy classes predicted by \textit{Self-Classifier} on ImageNet validation set (unseen during training)}. Classes are sorted by accuracy, and images are sampled \textbf{randomly} from each predicted class. Note that the predicted classes capture a large variety of different backgrounds and viewpoints. This provides further evidence that \textit{Self-Classifier} learns semantically meaningful classes without any labels.}
\label{fig:all_classes}
\end{figure}

\newpage

\begin{figure}[H]
\centering
\begin{tabular}{ccccccc}
  \includegraphics[width=0.15\linewidth]{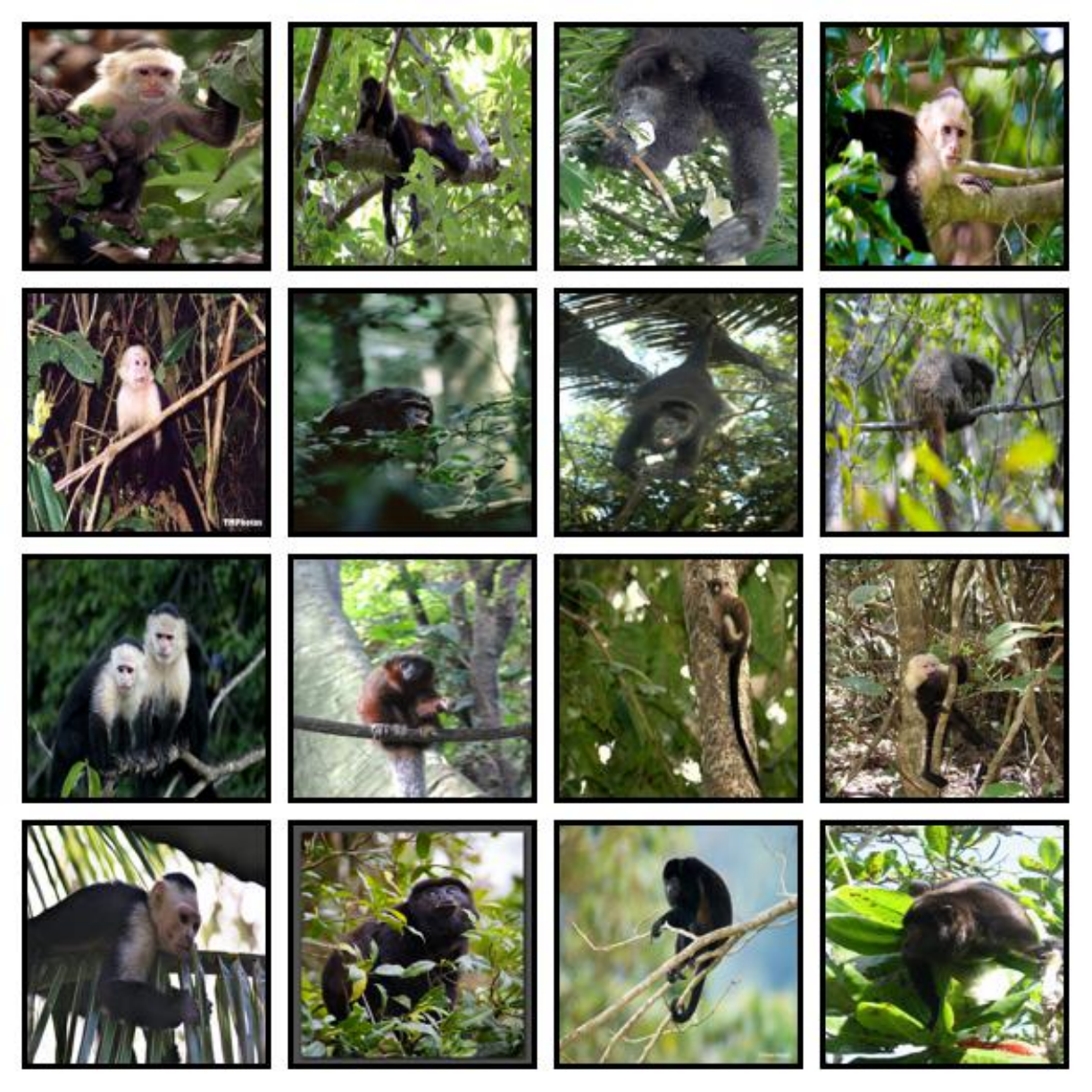} & \includegraphics[width=0.15\linewidth]{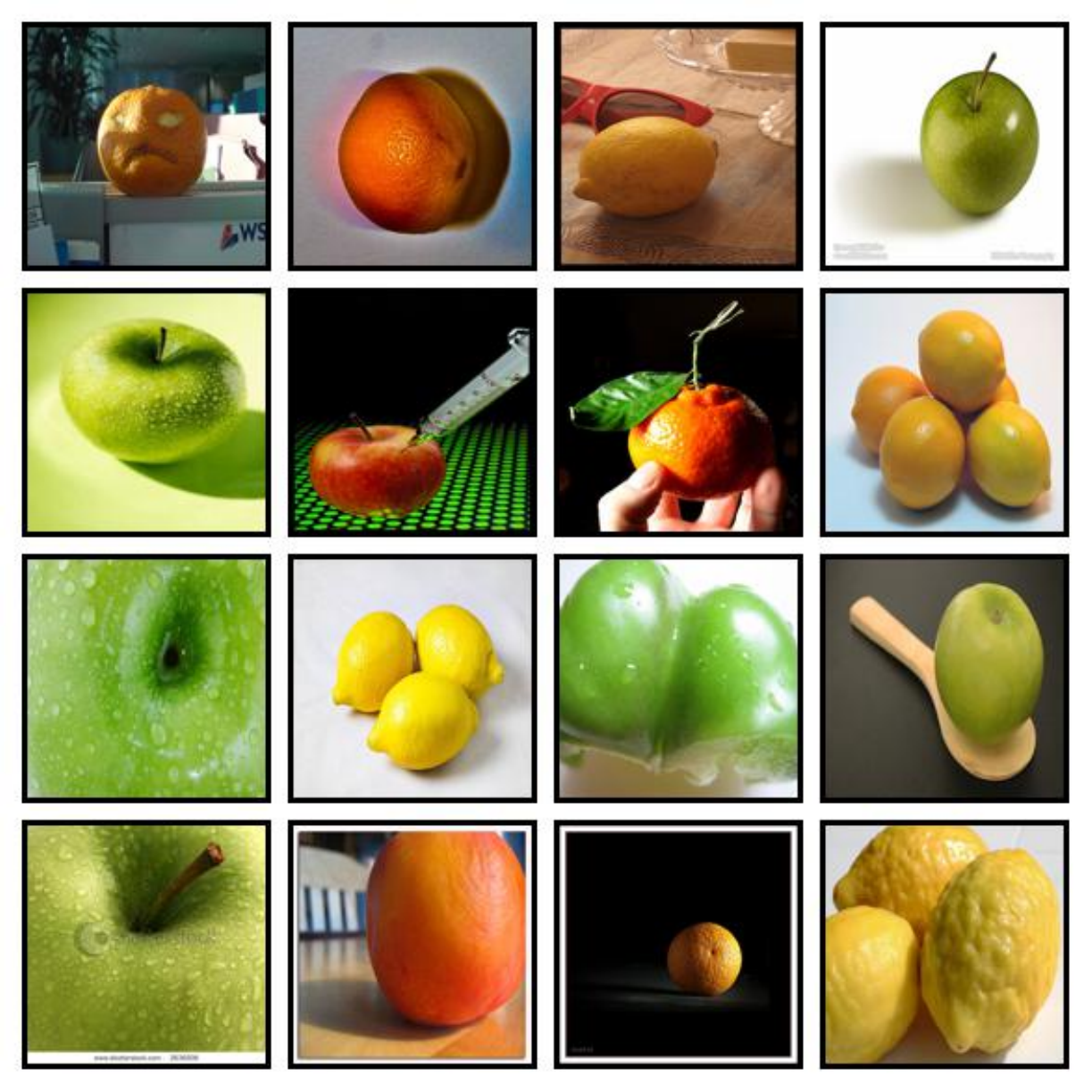} & \includegraphics[width=0.15\linewidth]{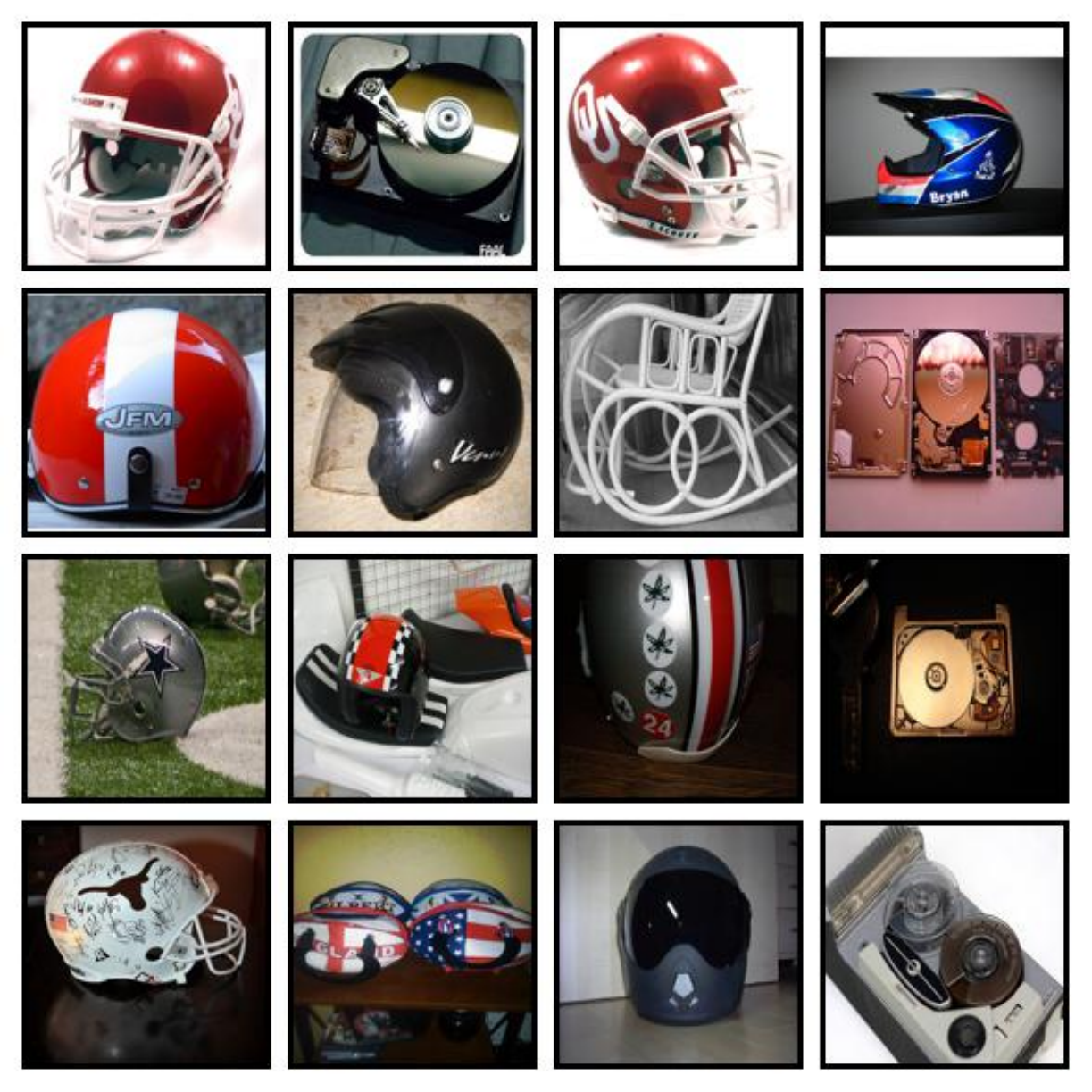} & \includegraphics[width=0.15\linewidth]{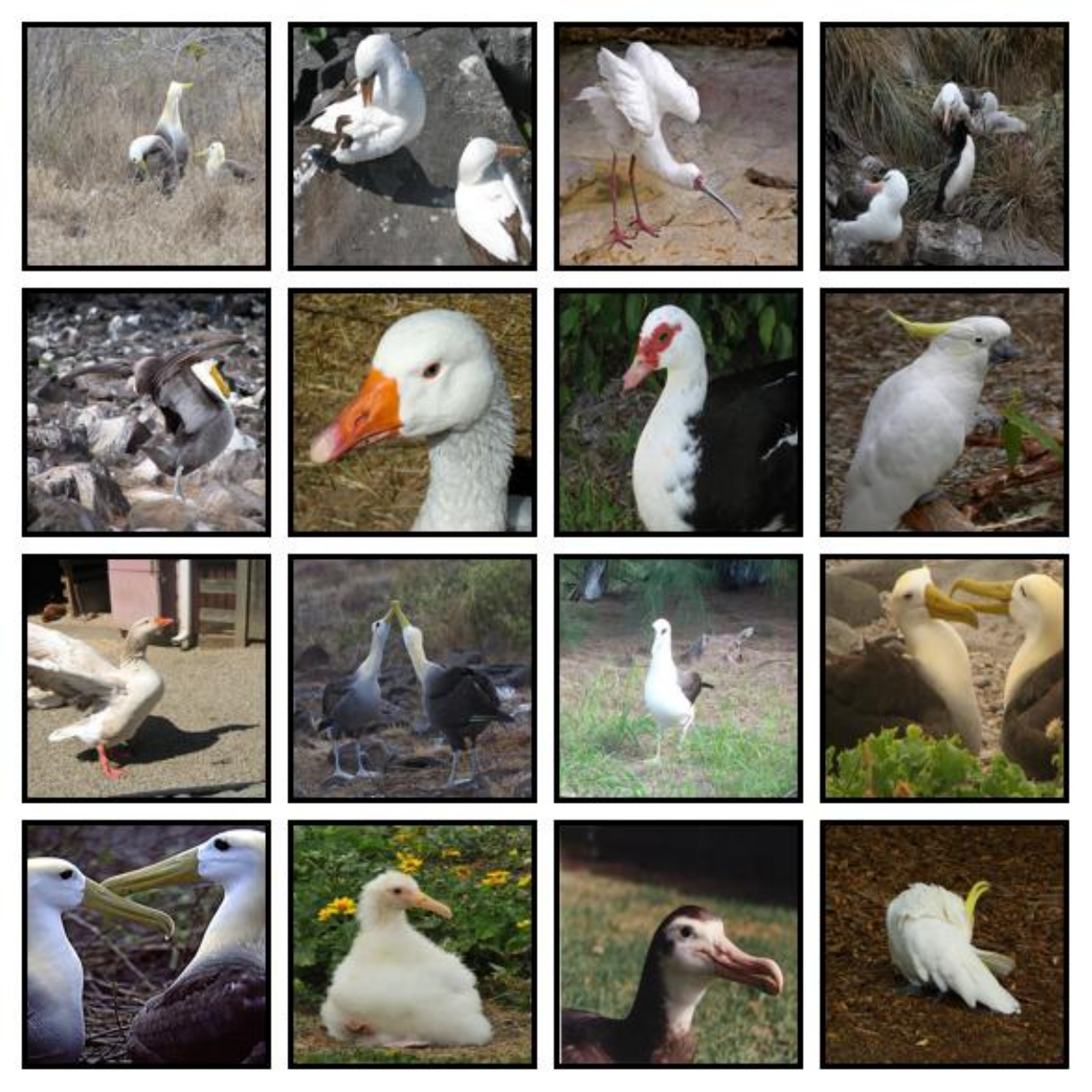} & \includegraphics[width=0.15\linewidth]{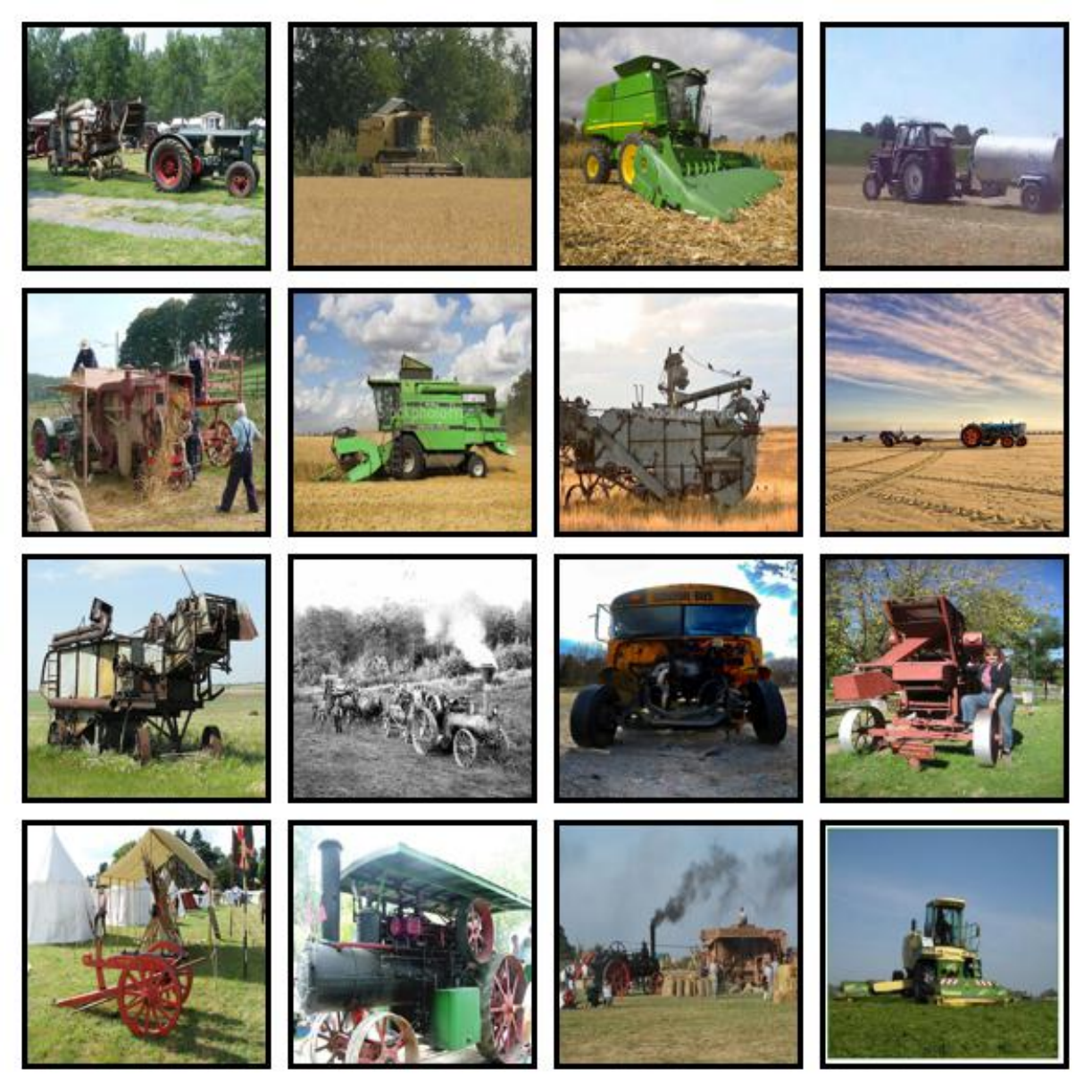} \\
  \includegraphics[width=0.15\linewidth]{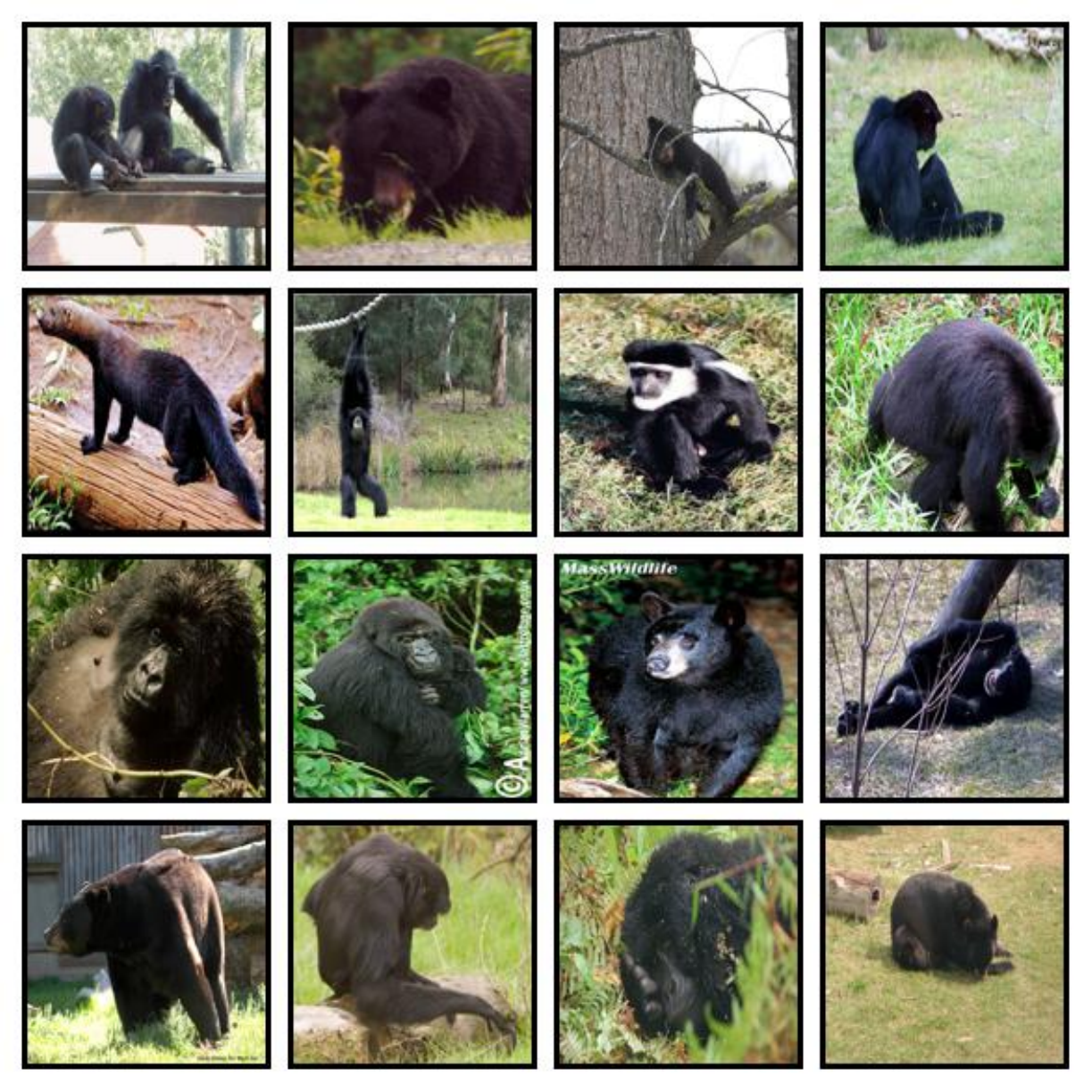} & \includegraphics[width=0.15\linewidth]{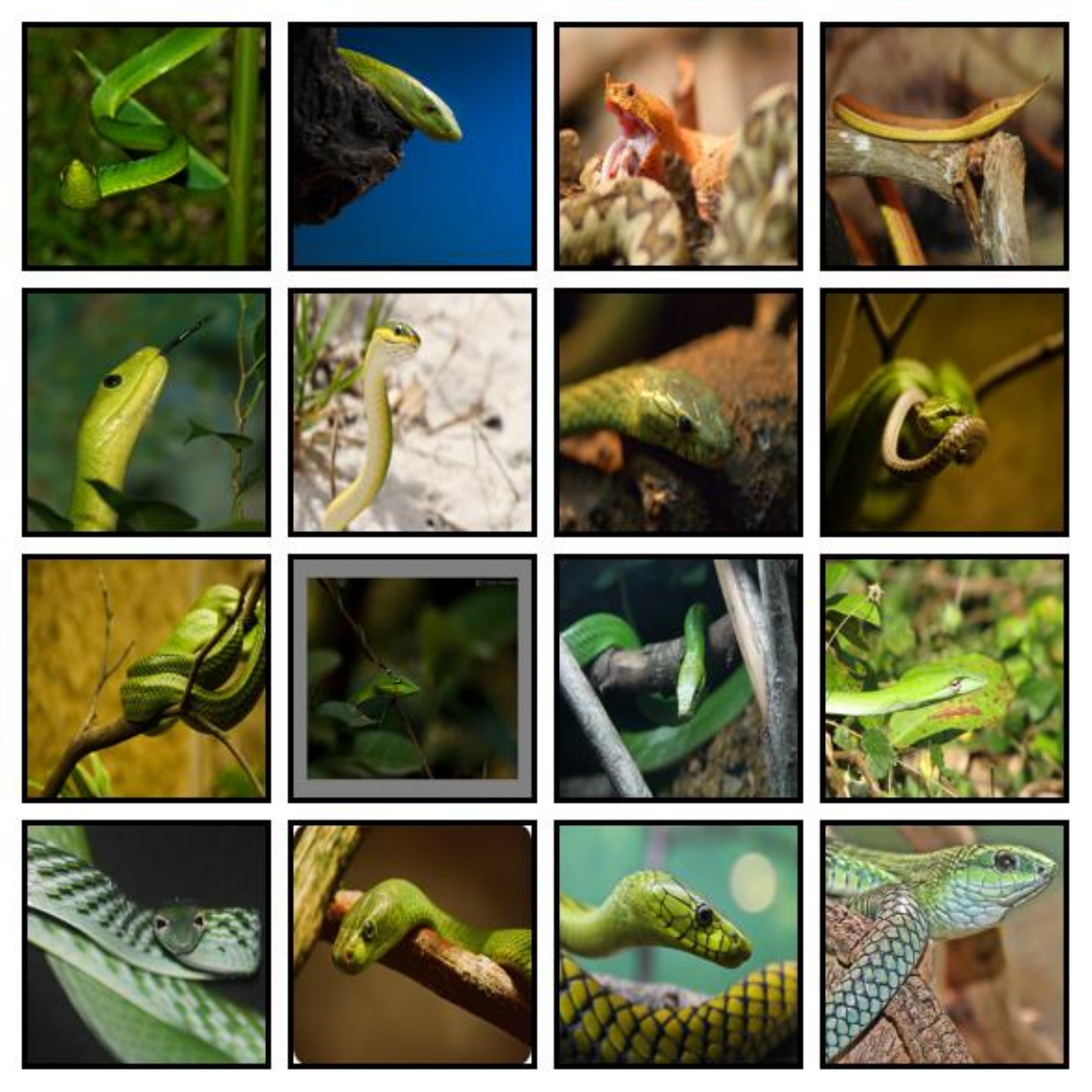} & \includegraphics[width=0.15\linewidth]{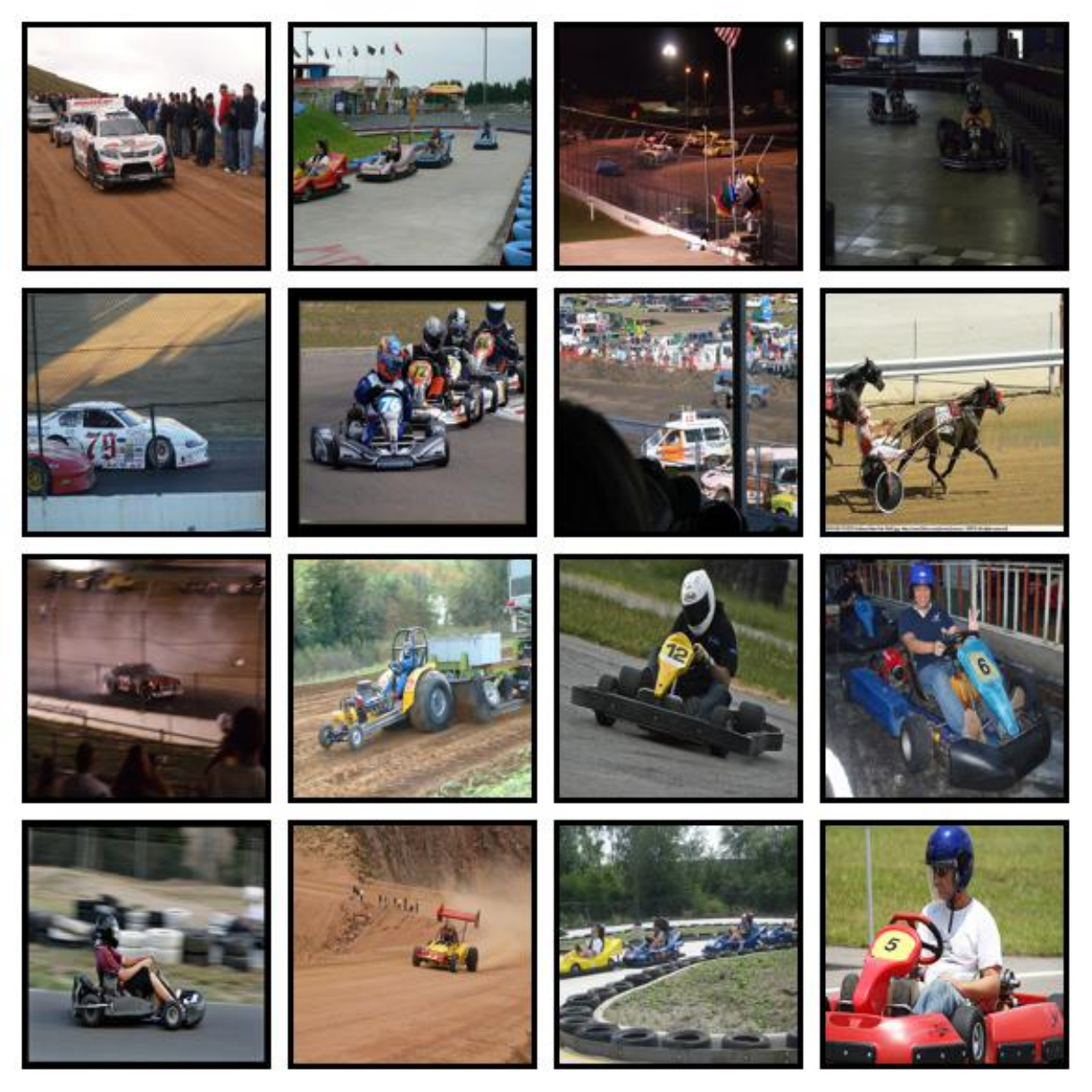} & \includegraphics[width=0.15\linewidth]{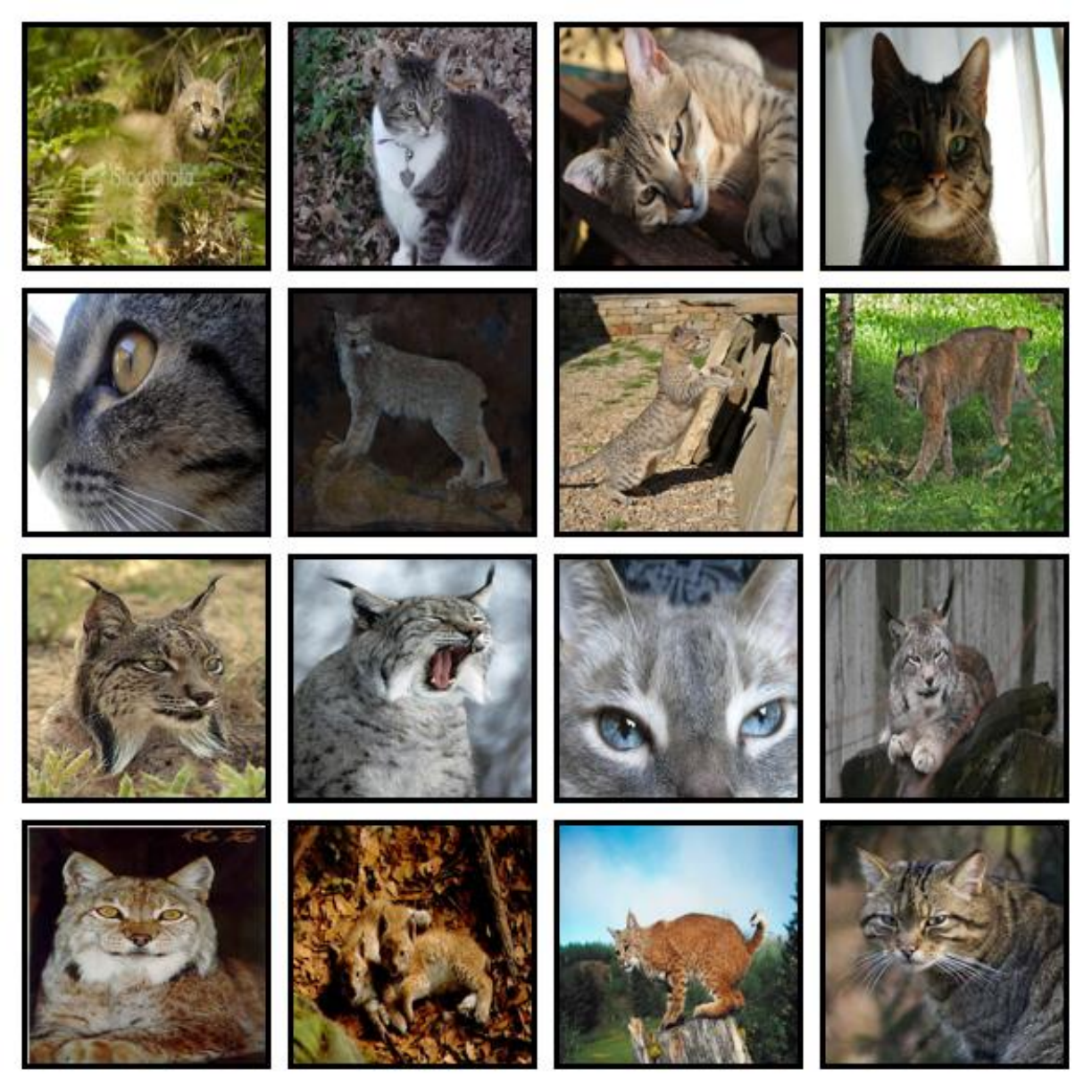} & \includegraphics[width=0.15\linewidth]{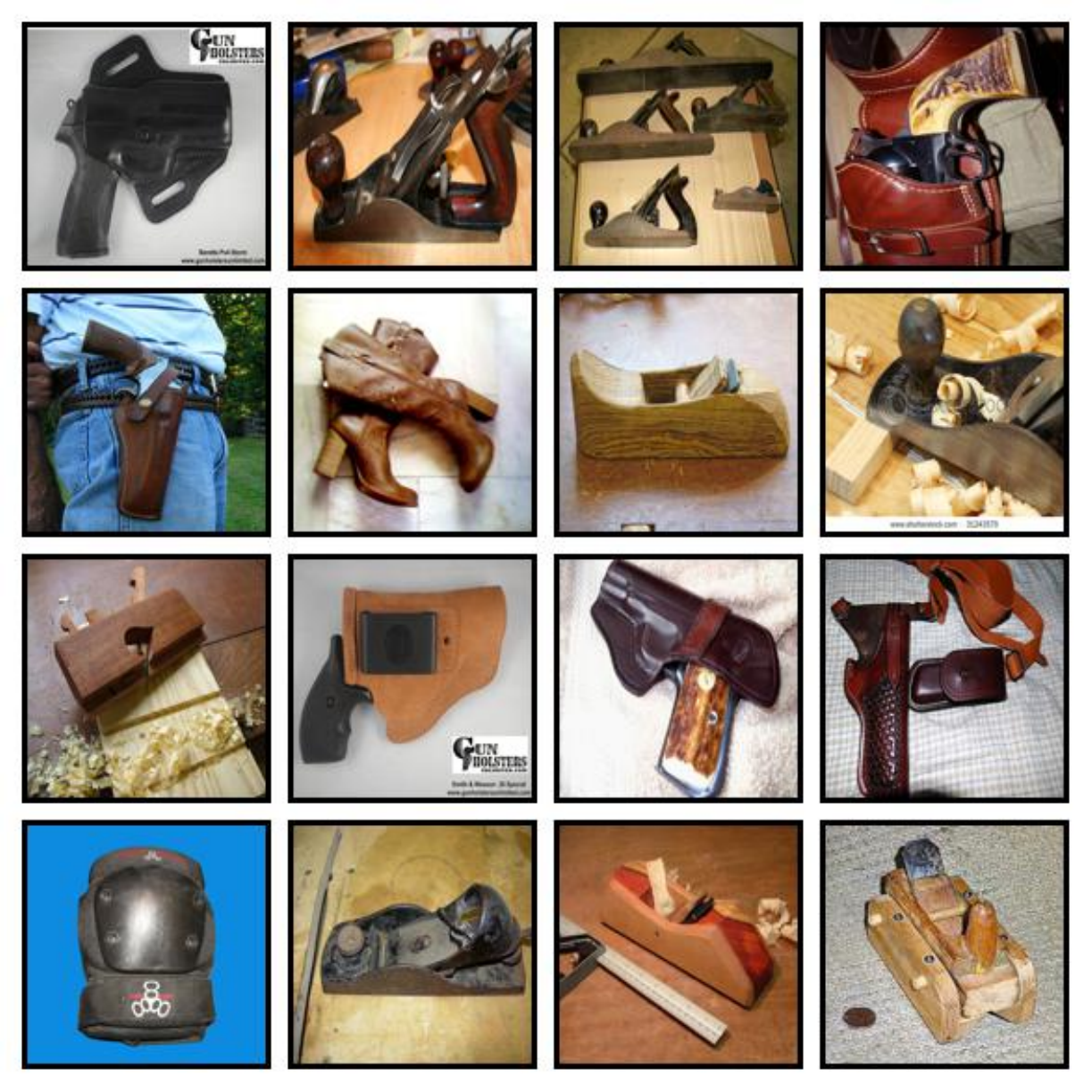} \\ \includegraphics[width=0.15\linewidth]{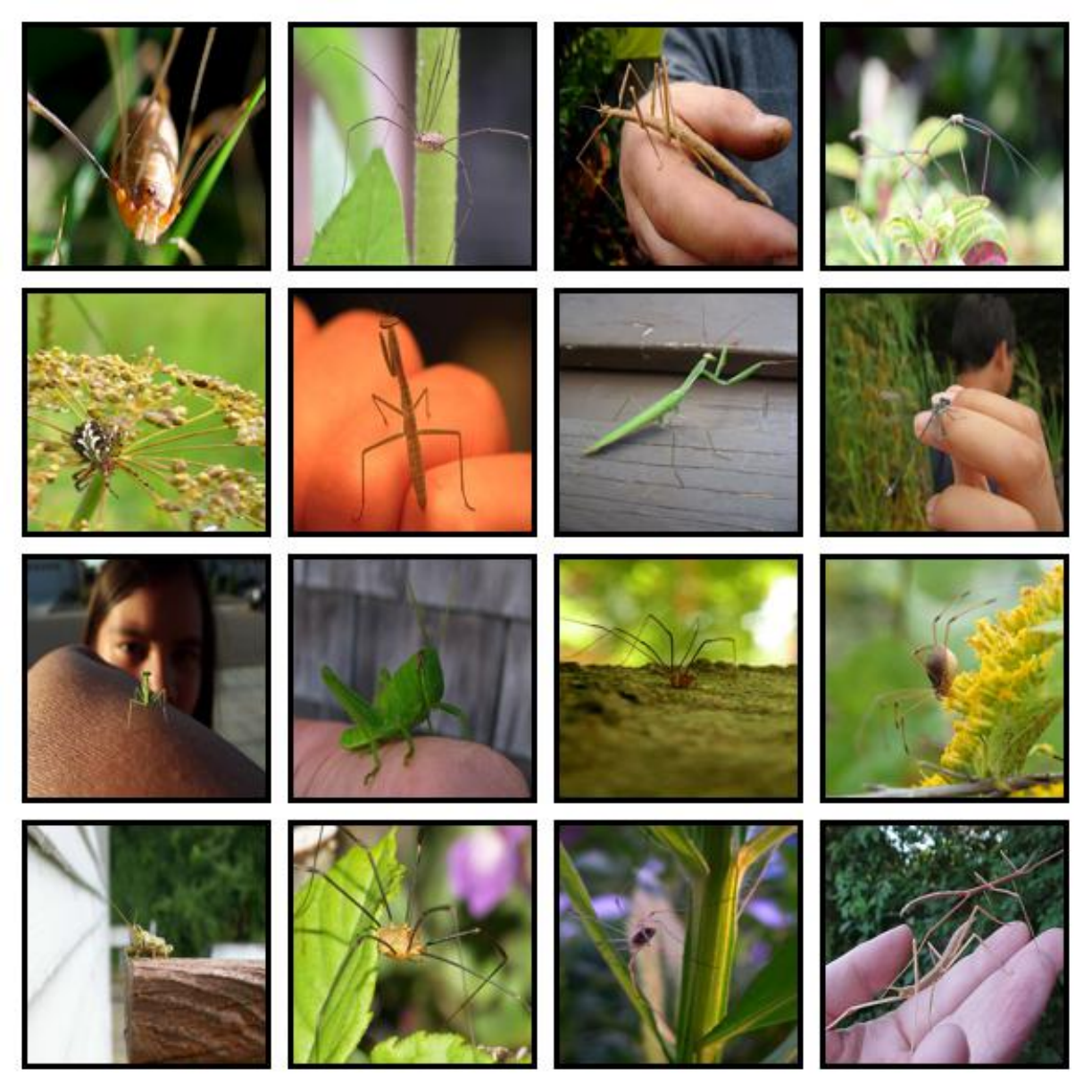} & \includegraphics[width=0.15\linewidth]{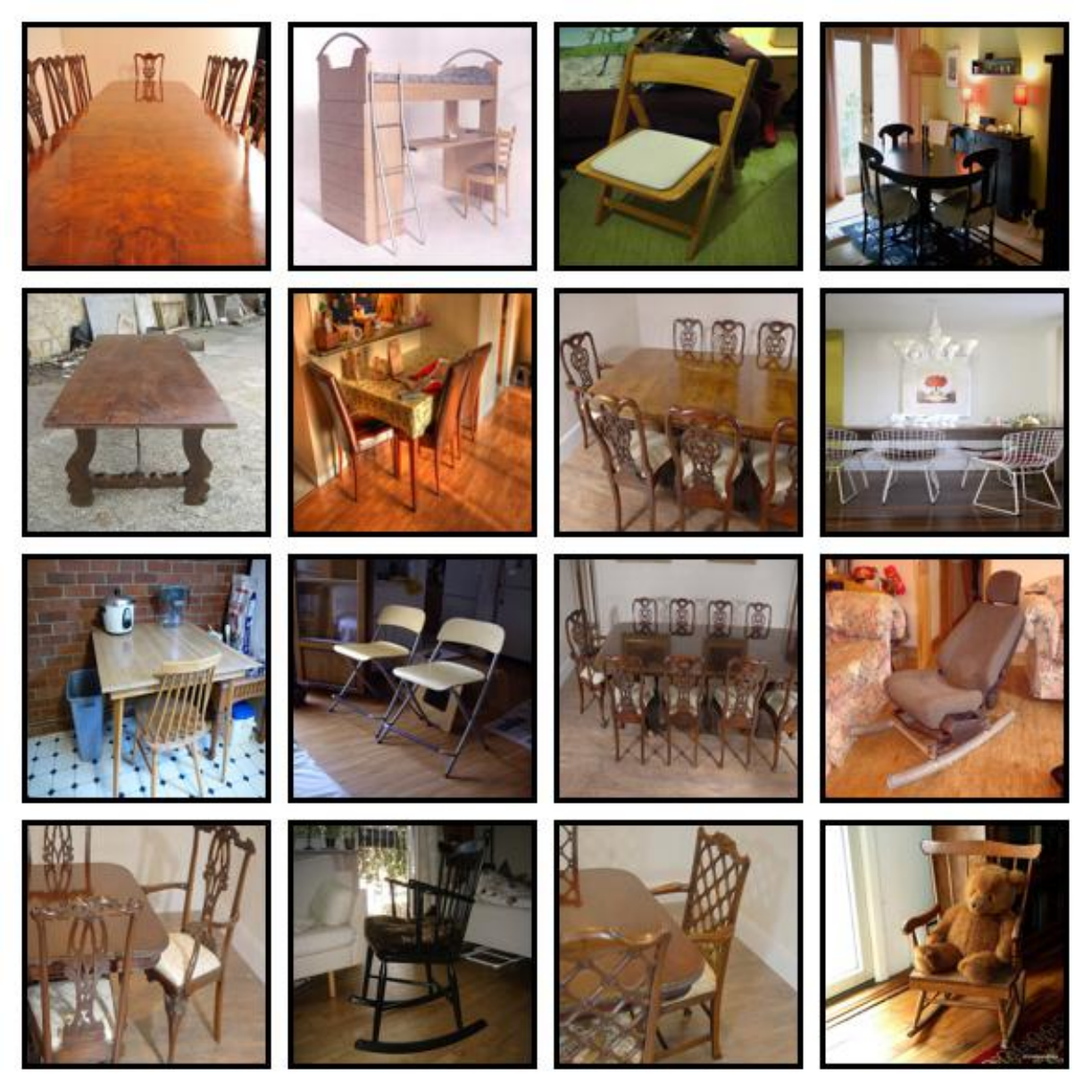} & \includegraphics[width=0.15\linewidth]{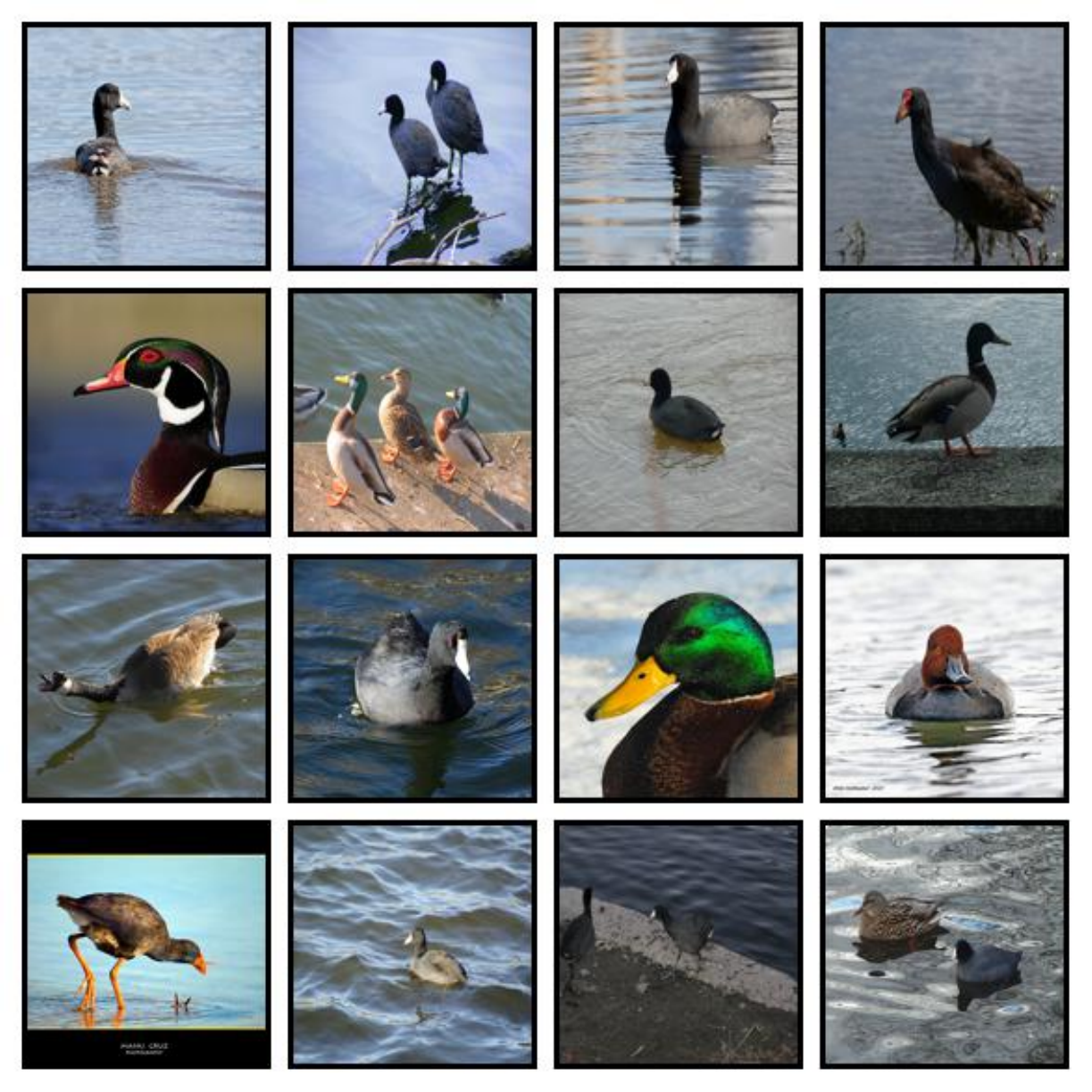} & \includegraphics[width=0.15\linewidth]{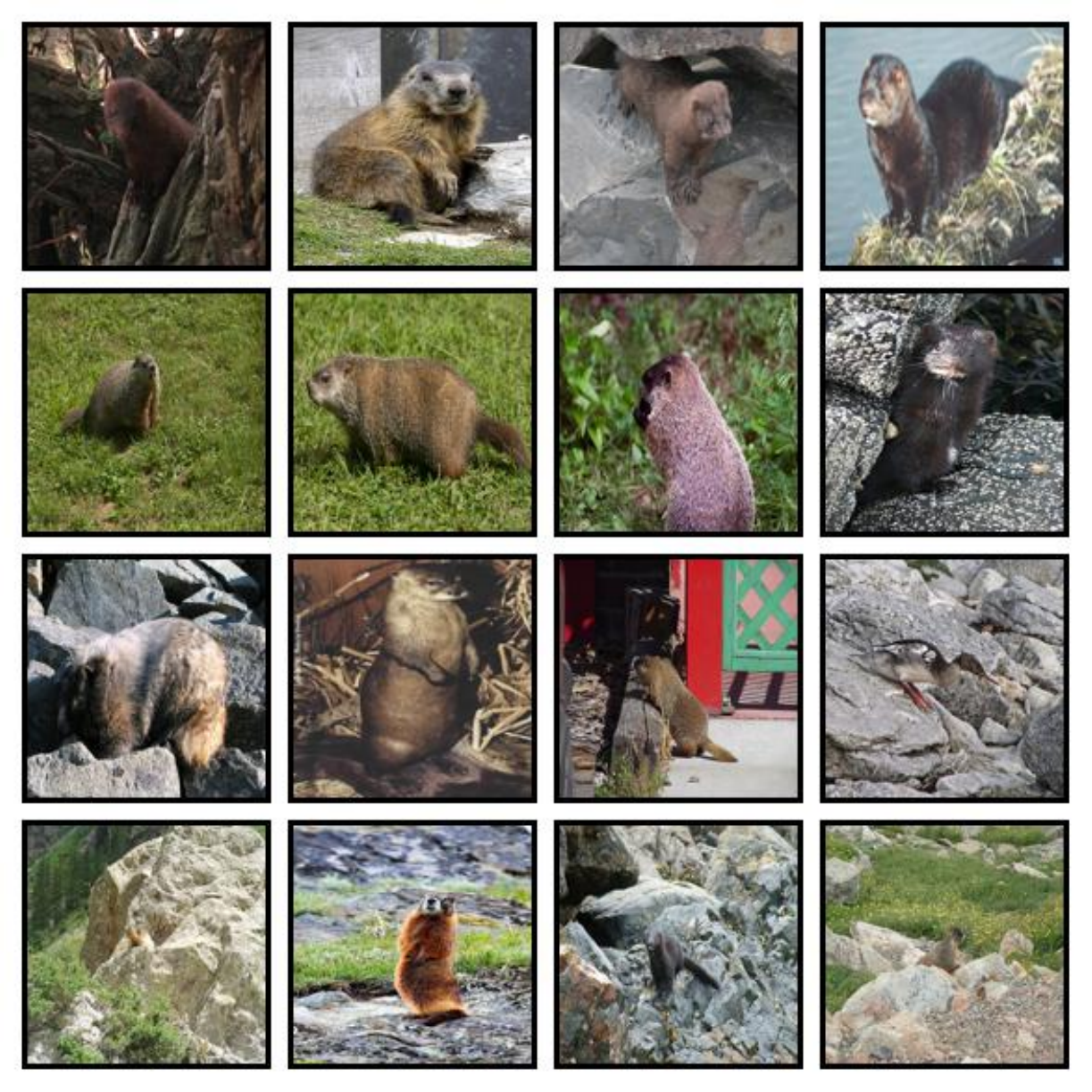} & \includegraphics[width=0.15\linewidth]{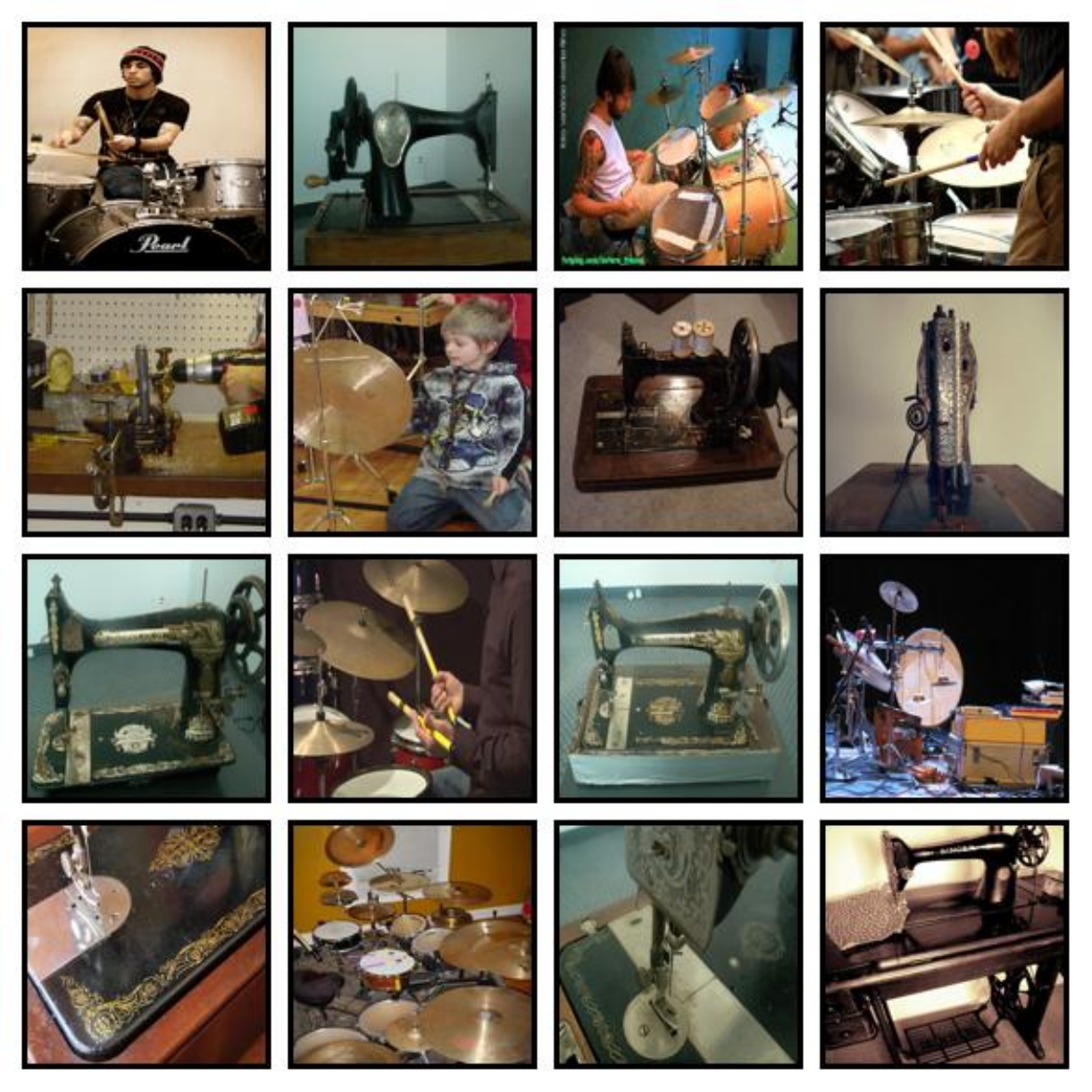} \\
  \includegraphics[width=0.15\linewidth]{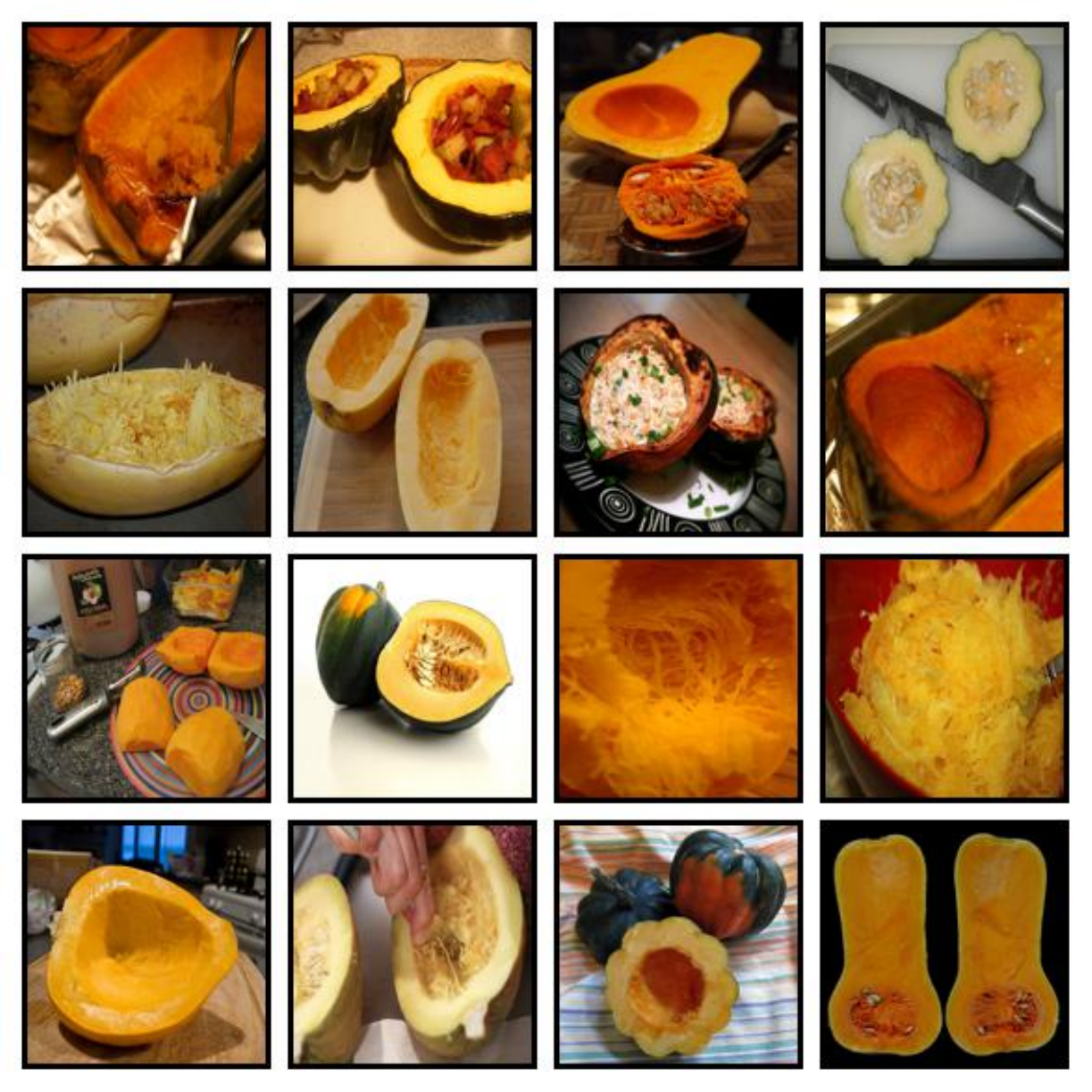} & \includegraphics[width=0.15\linewidth]{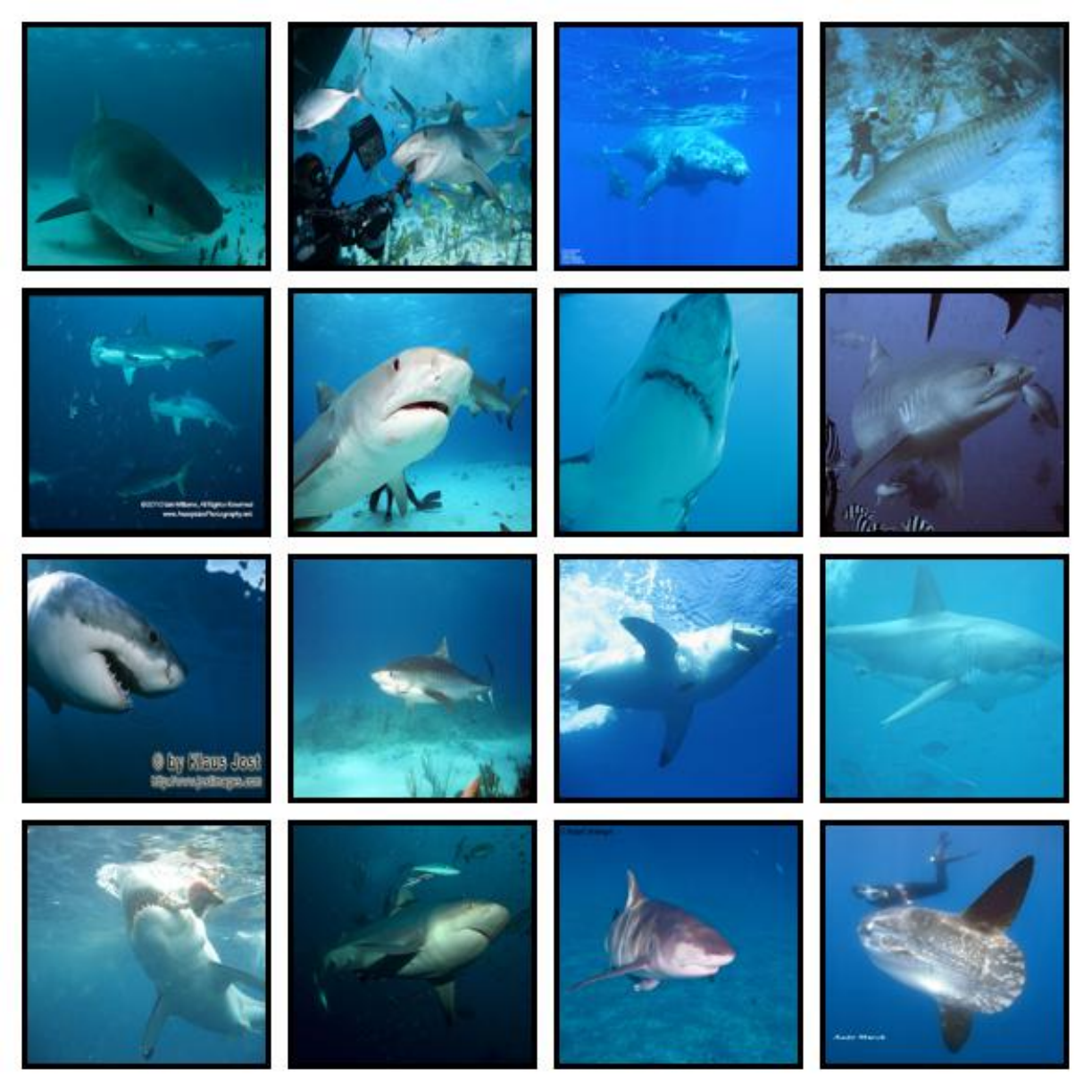} & \includegraphics[width=0.15\linewidth]{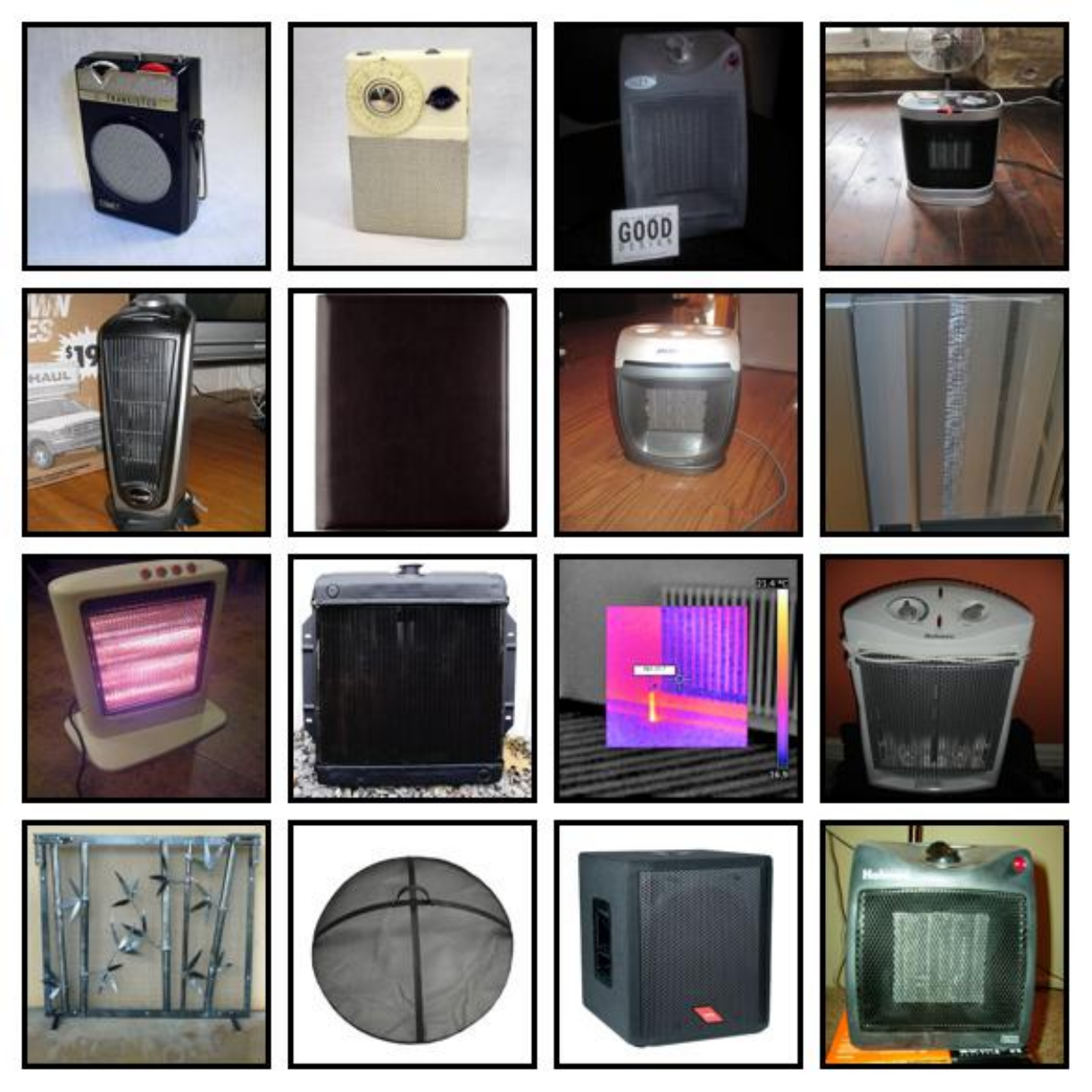} & \includegraphics[width=0.15\linewidth]{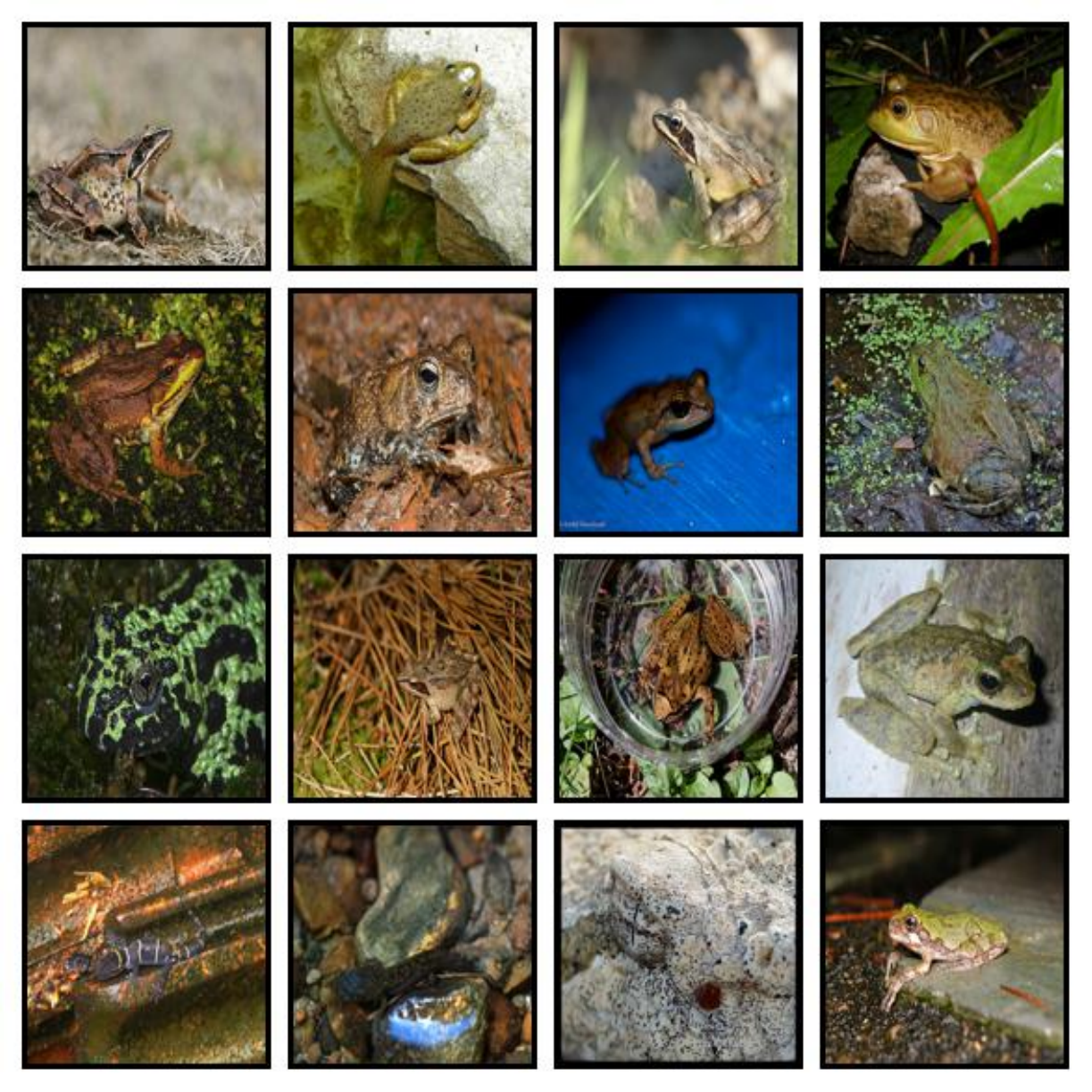} & \includegraphics[width=0.15\linewidth]{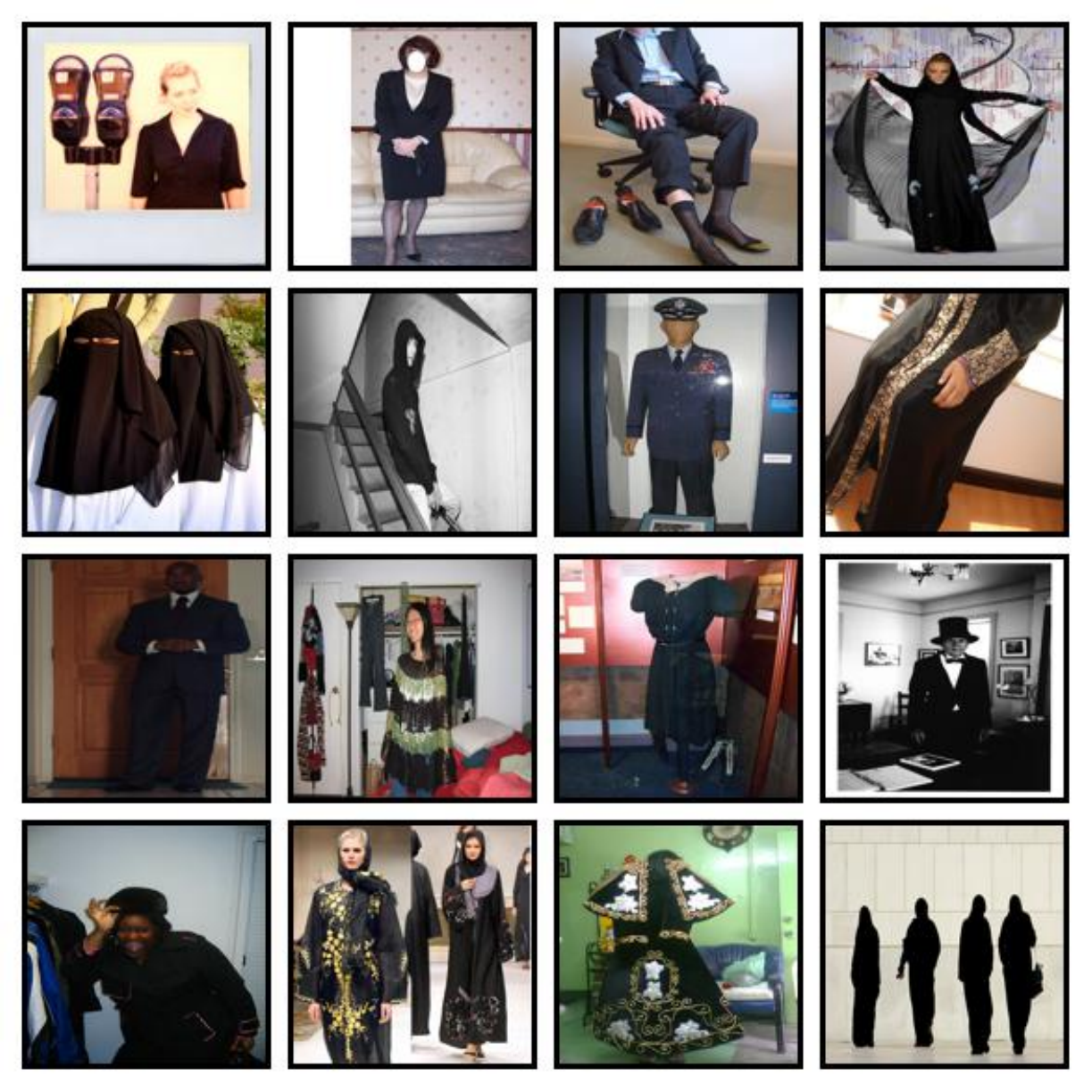} \\
  \includegraphics[width=0.15\linewidth]{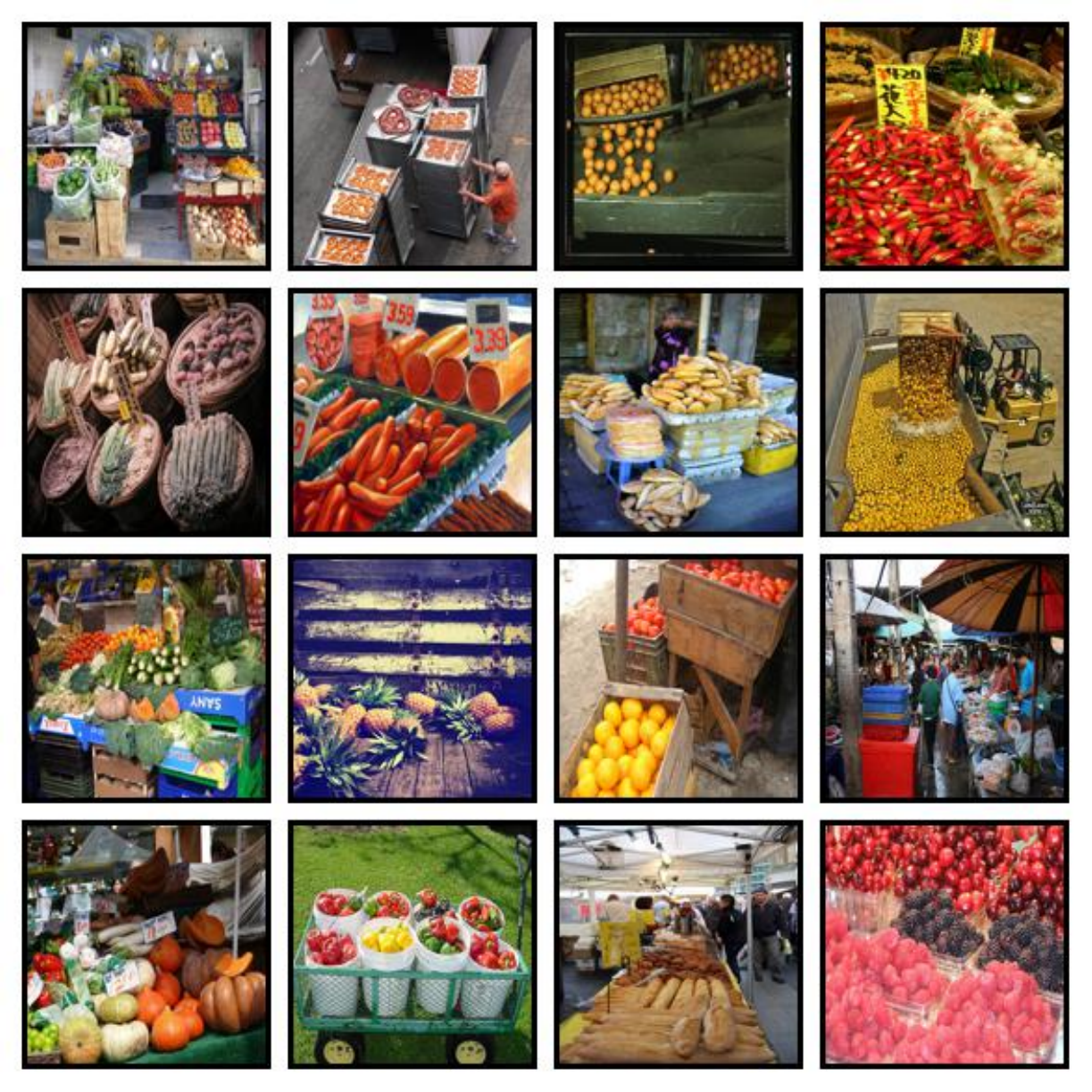} & \includegraphics[width=0.15\linewidth]{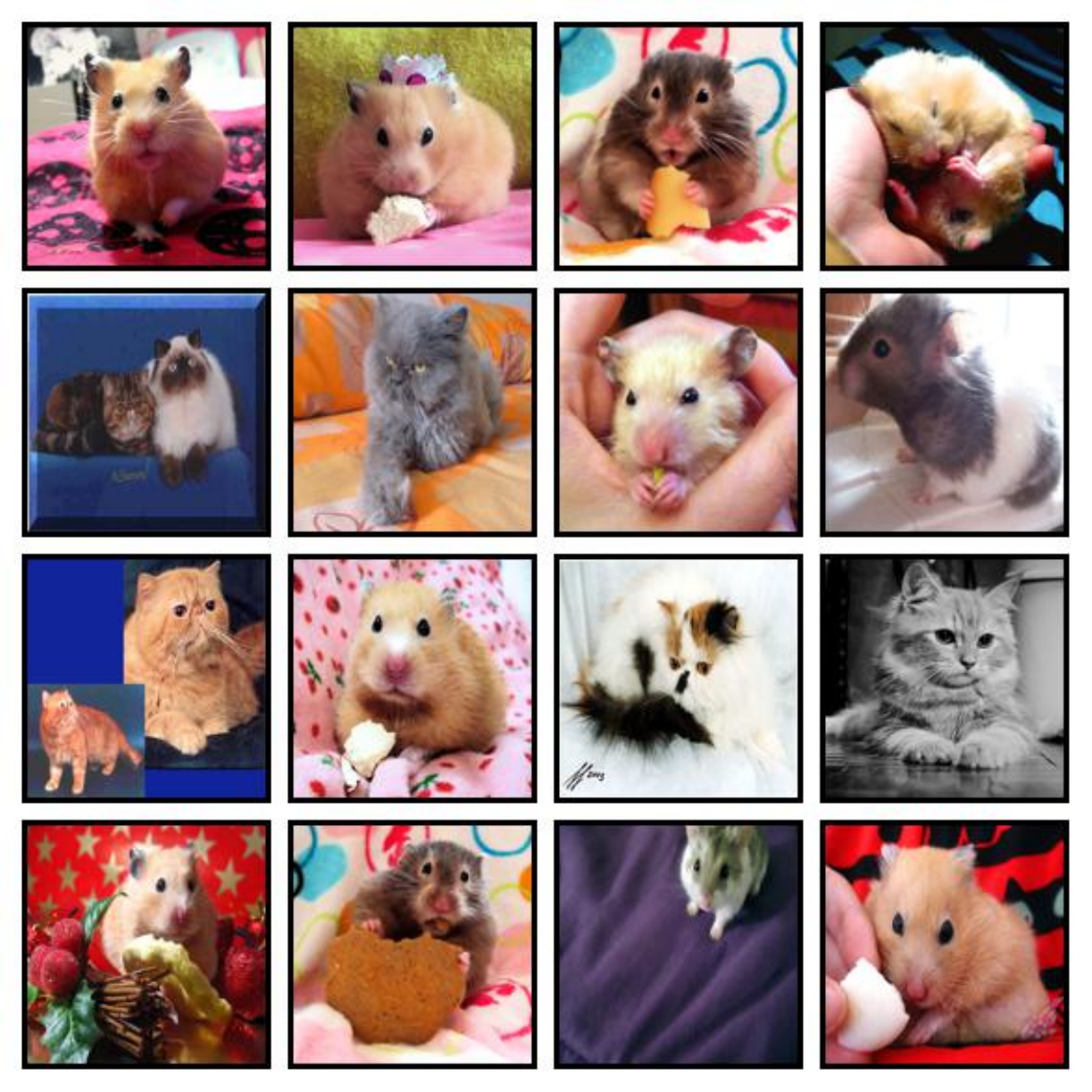} & \includegraphics[width=0.15\linewidth]{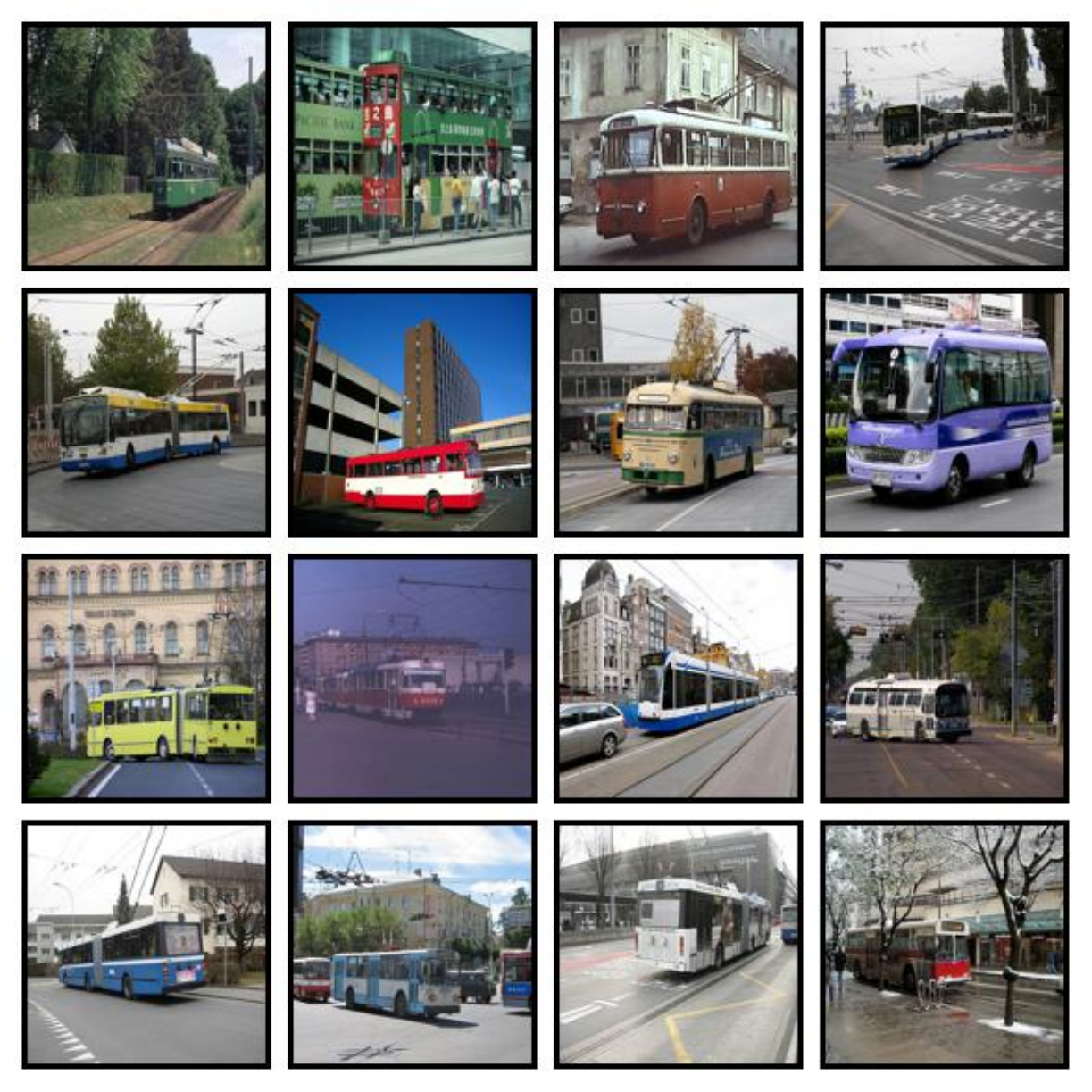} & \includegraphics[width=0.15\linewidth]{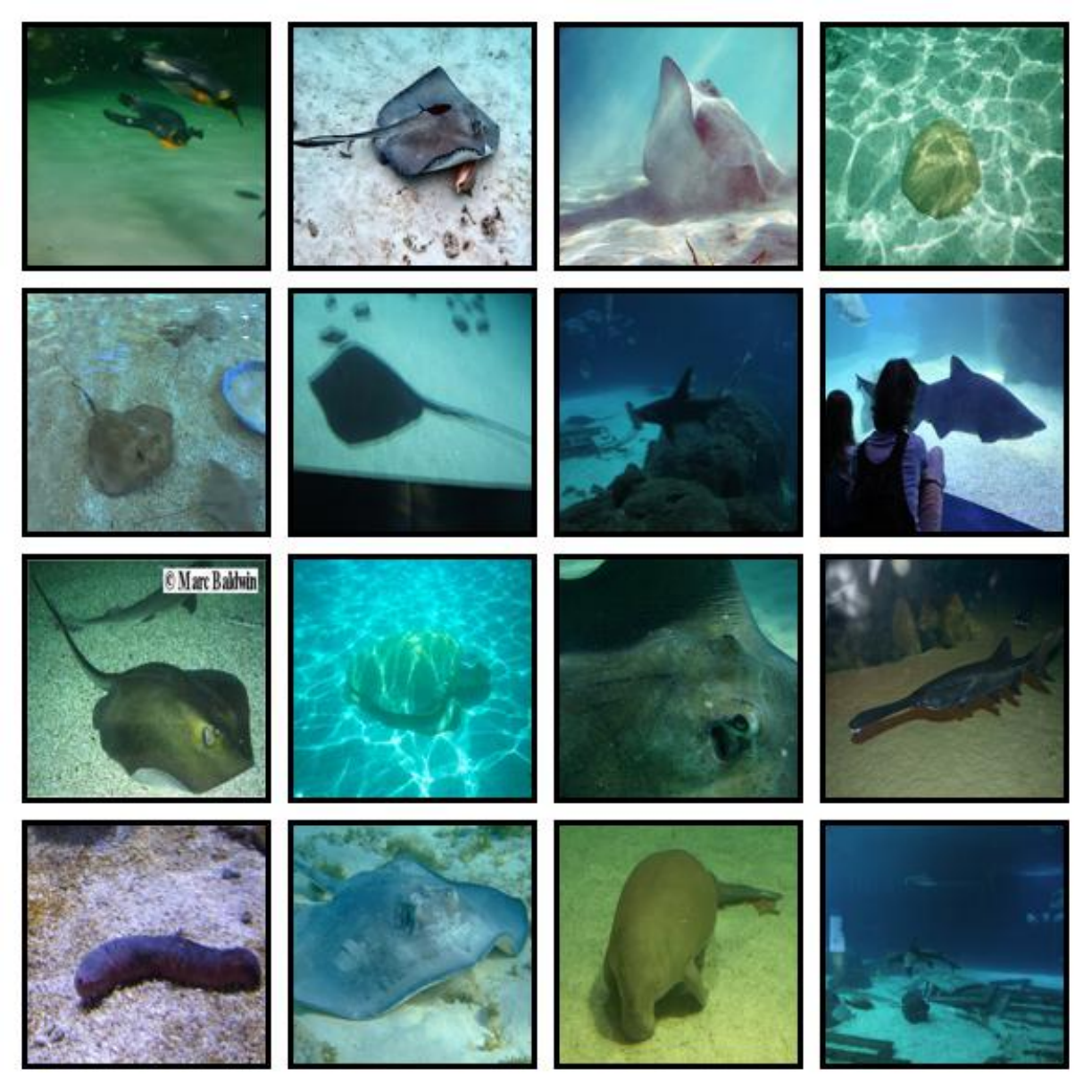} & \includegraphics[width=0.15\linewidth]{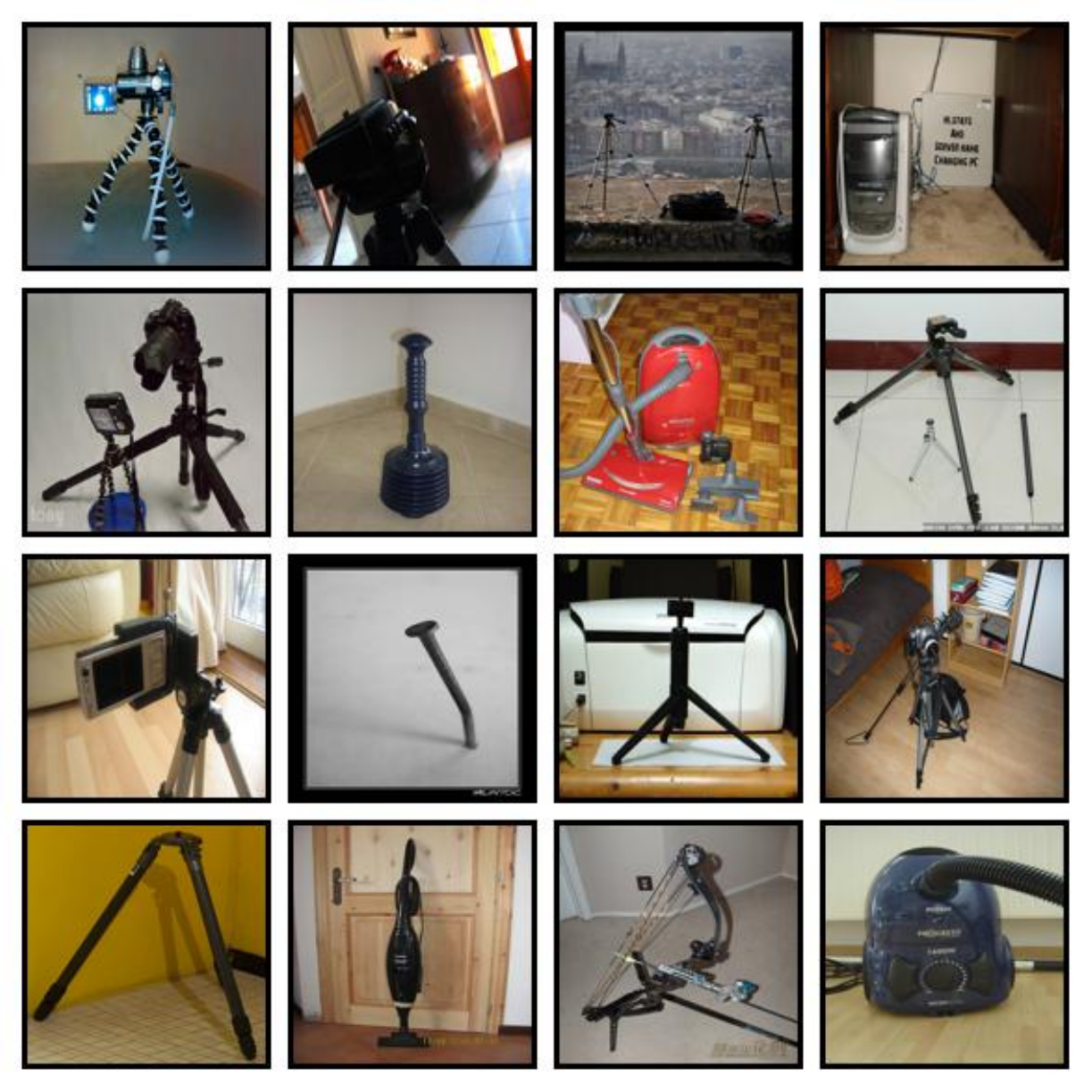} \\
  \includegraphics[width=0.15\linewidth]{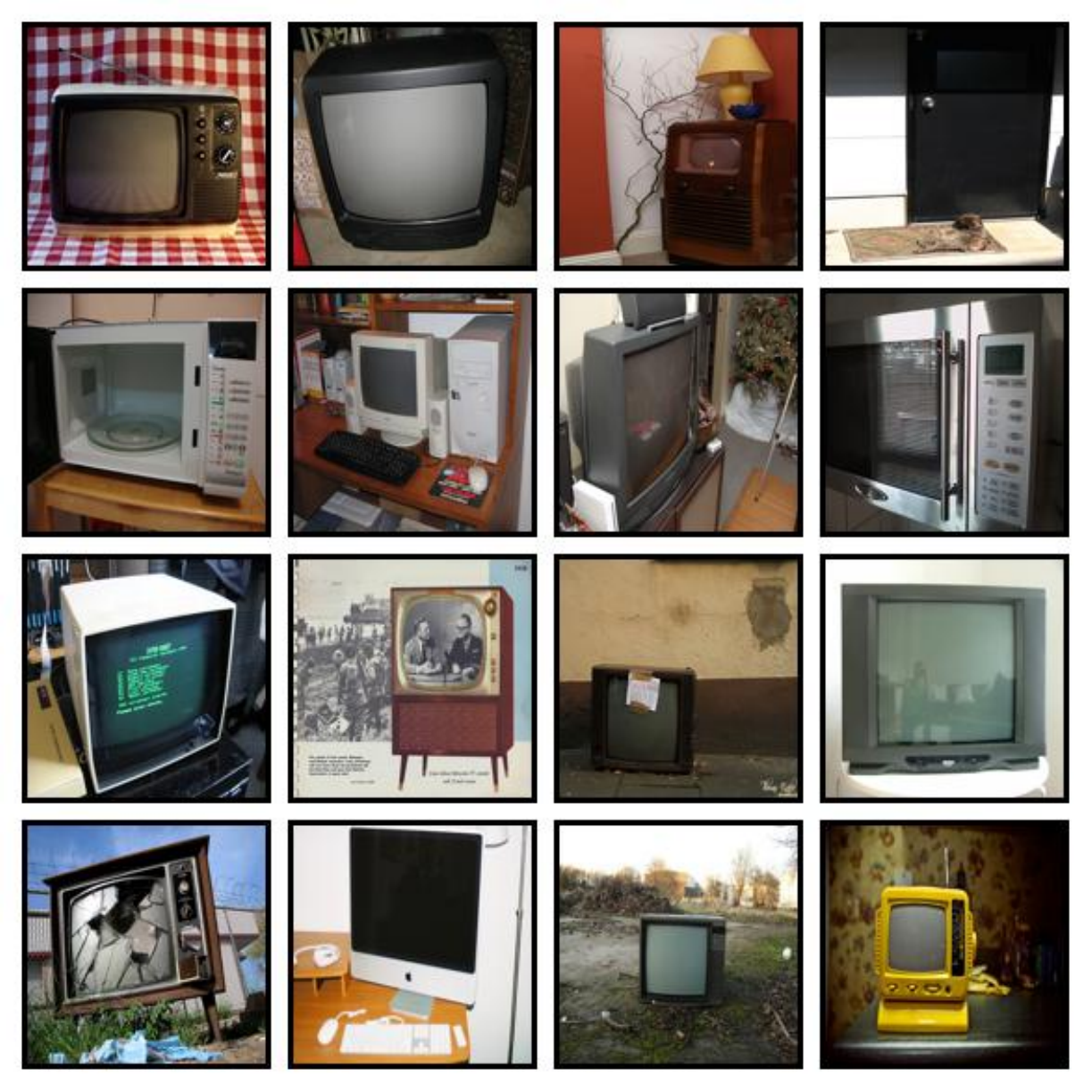} & \includegraphics[width=0.15\linewidth]{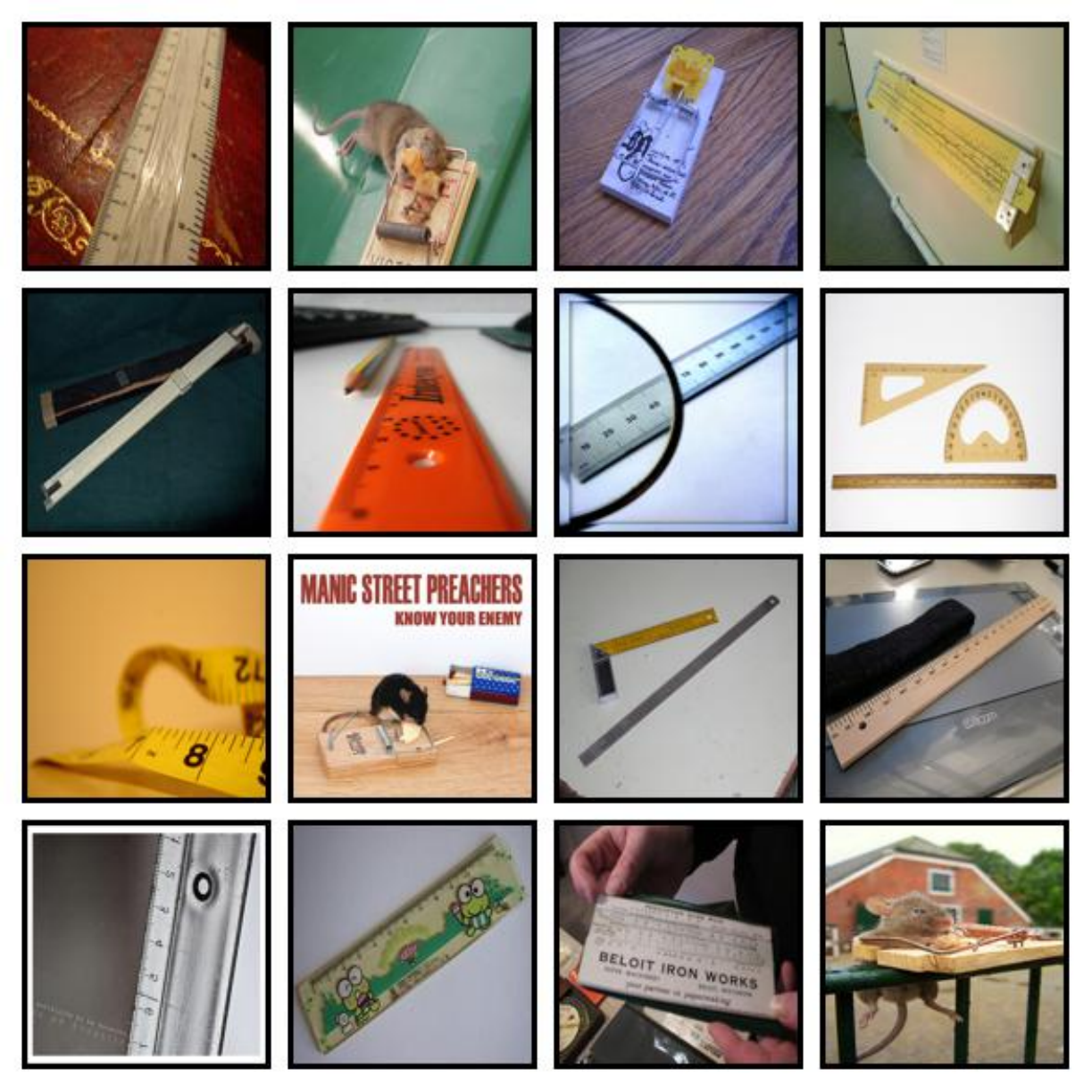} & \includegraphics[width=0.15\linewidth]{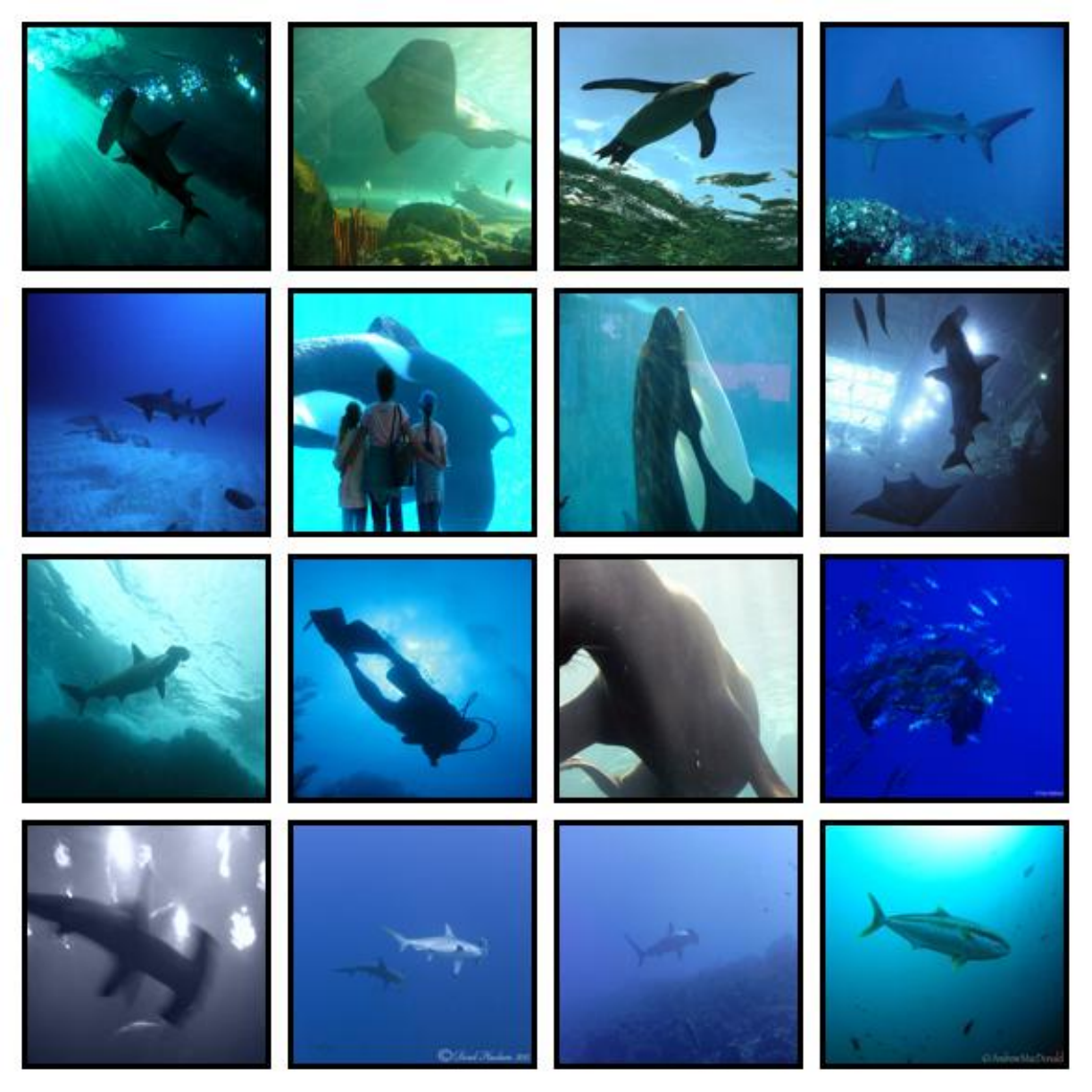} & \includegraphics[width=0.15\linewidth]{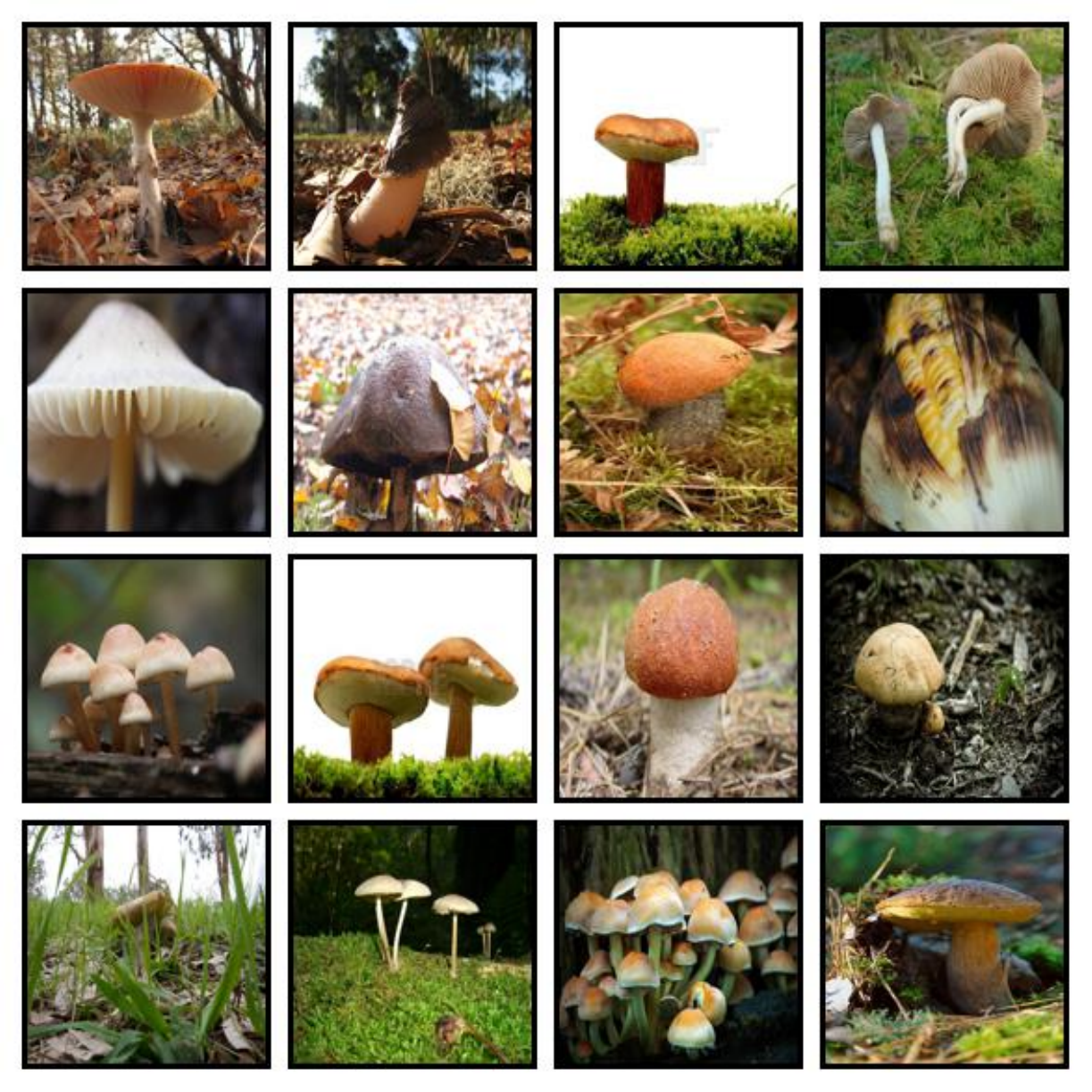} & \includegraphics[width=0.15\linewidth]{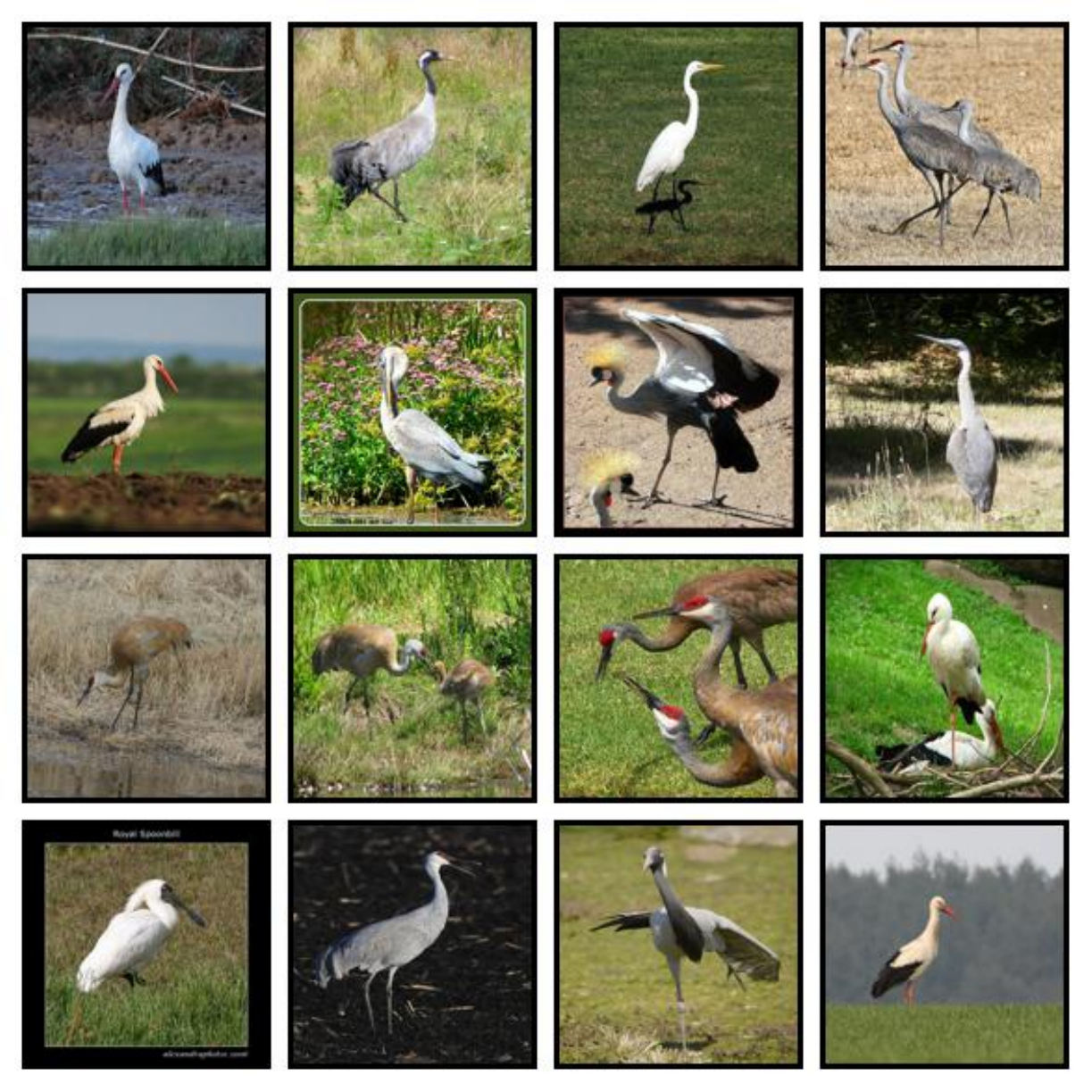} \\
  \includegraphics[width=0.15\linewidth]{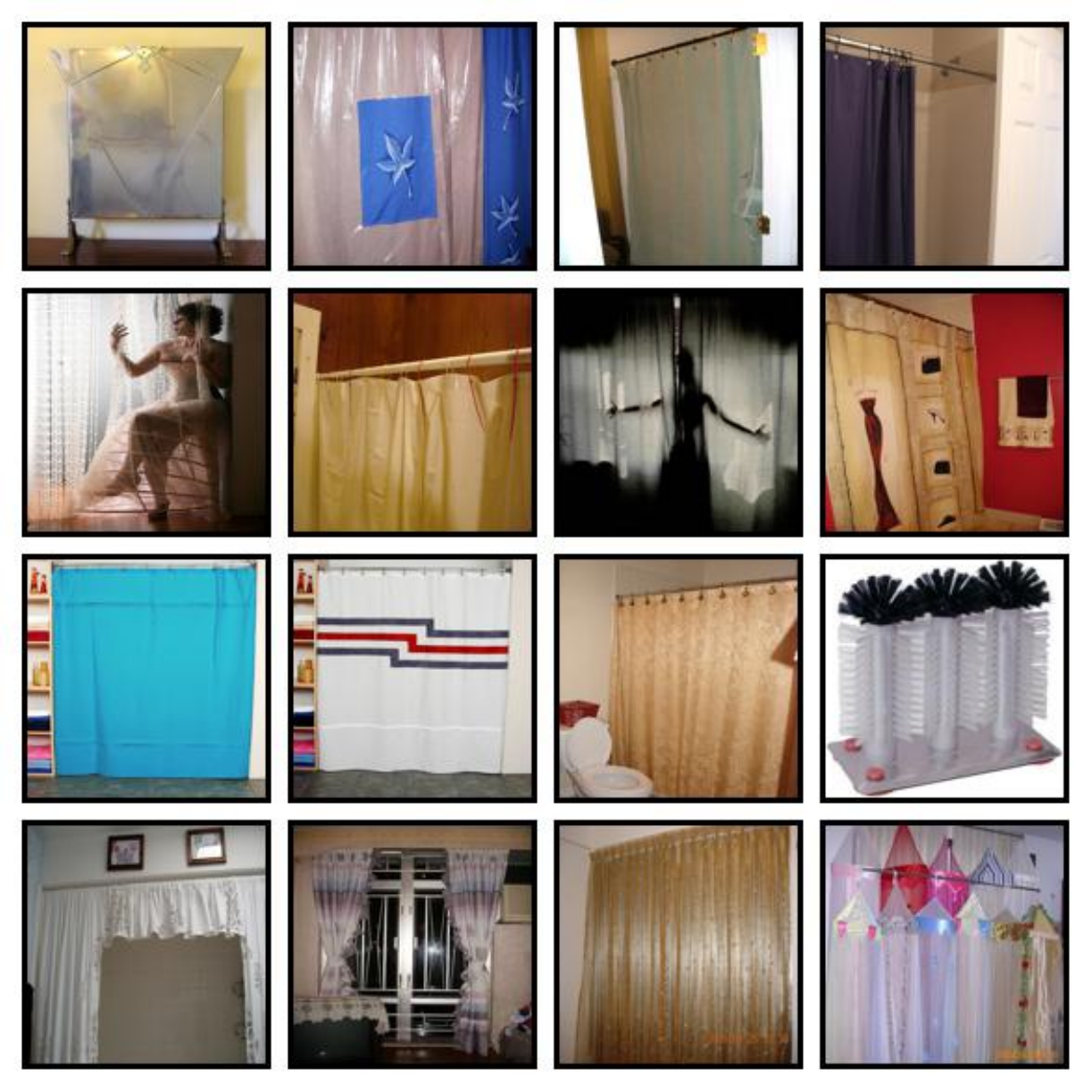} & \includegraphics[width=0.15\linewidth]{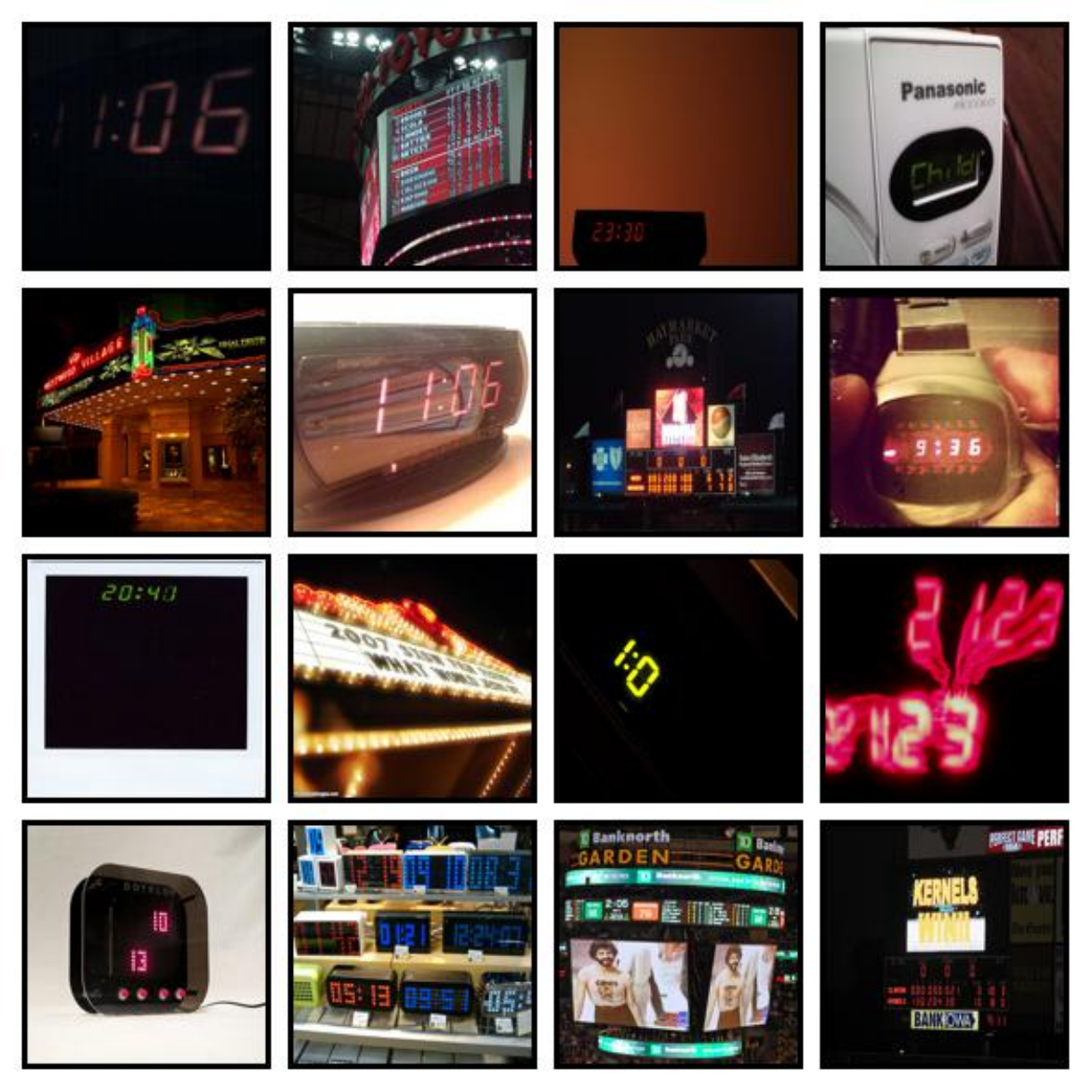} & \includegraphics[width=0.15\linewidth]{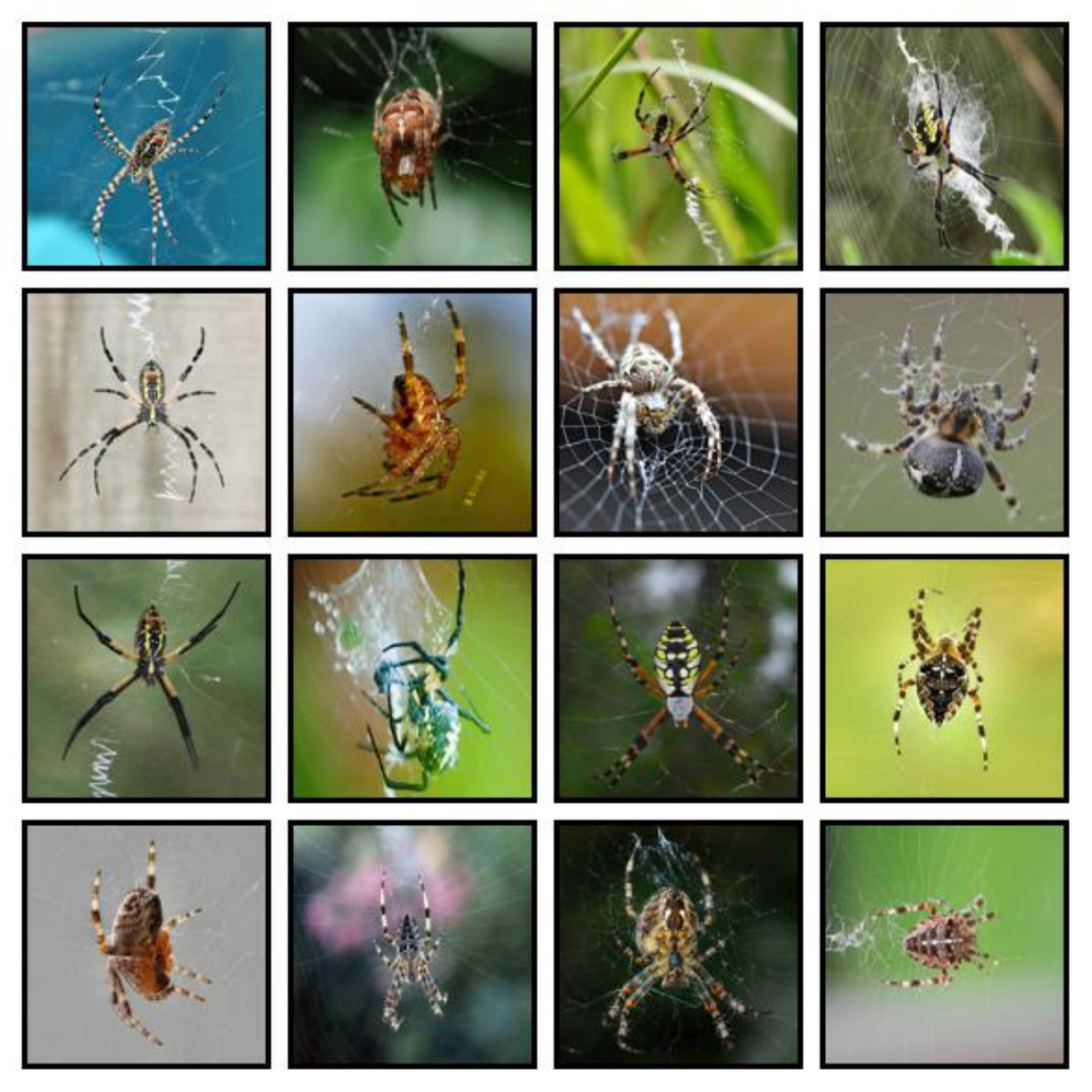} & \includegraphics[width=0.15\linewidth]{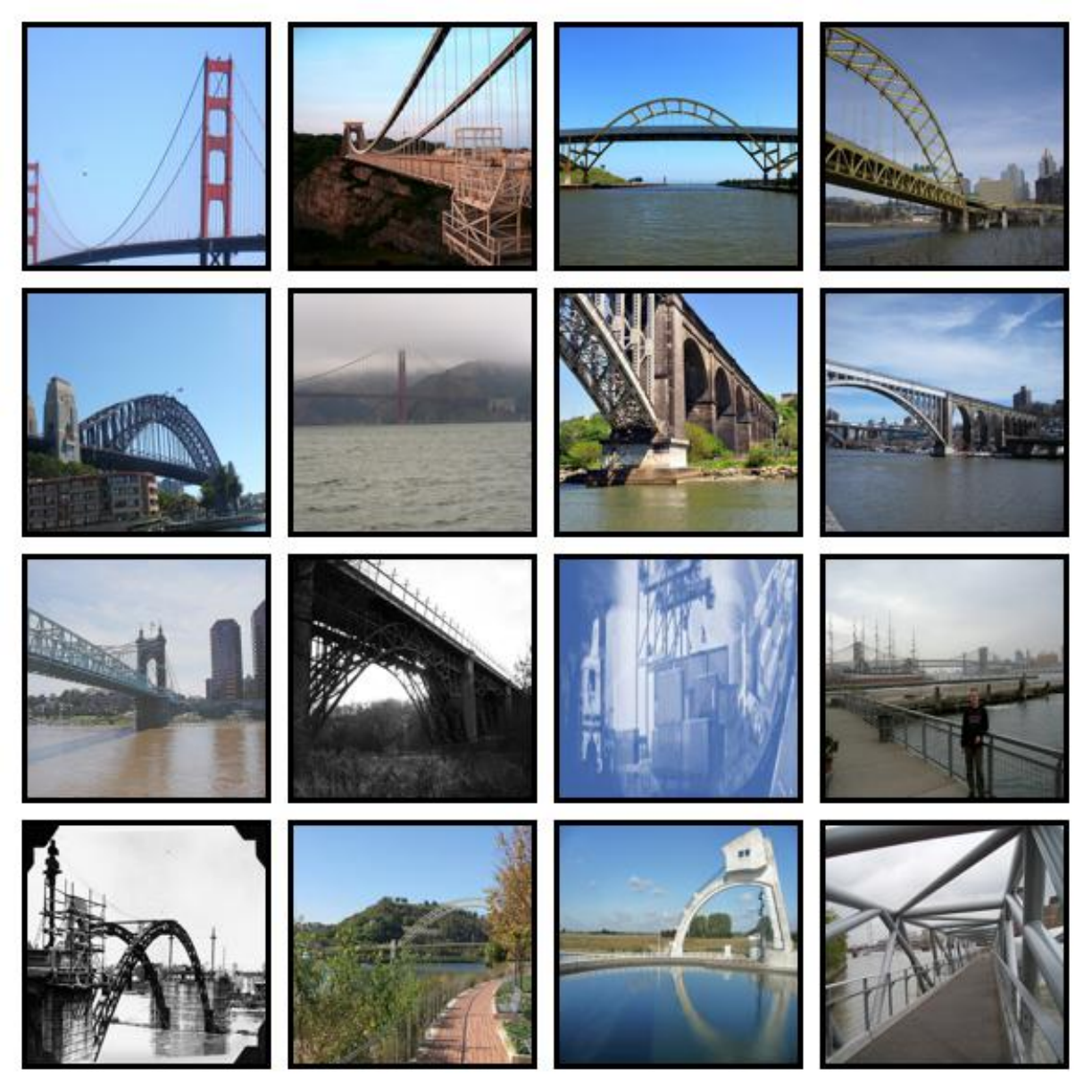} & \includegraphics[width=0.15\linewidth]{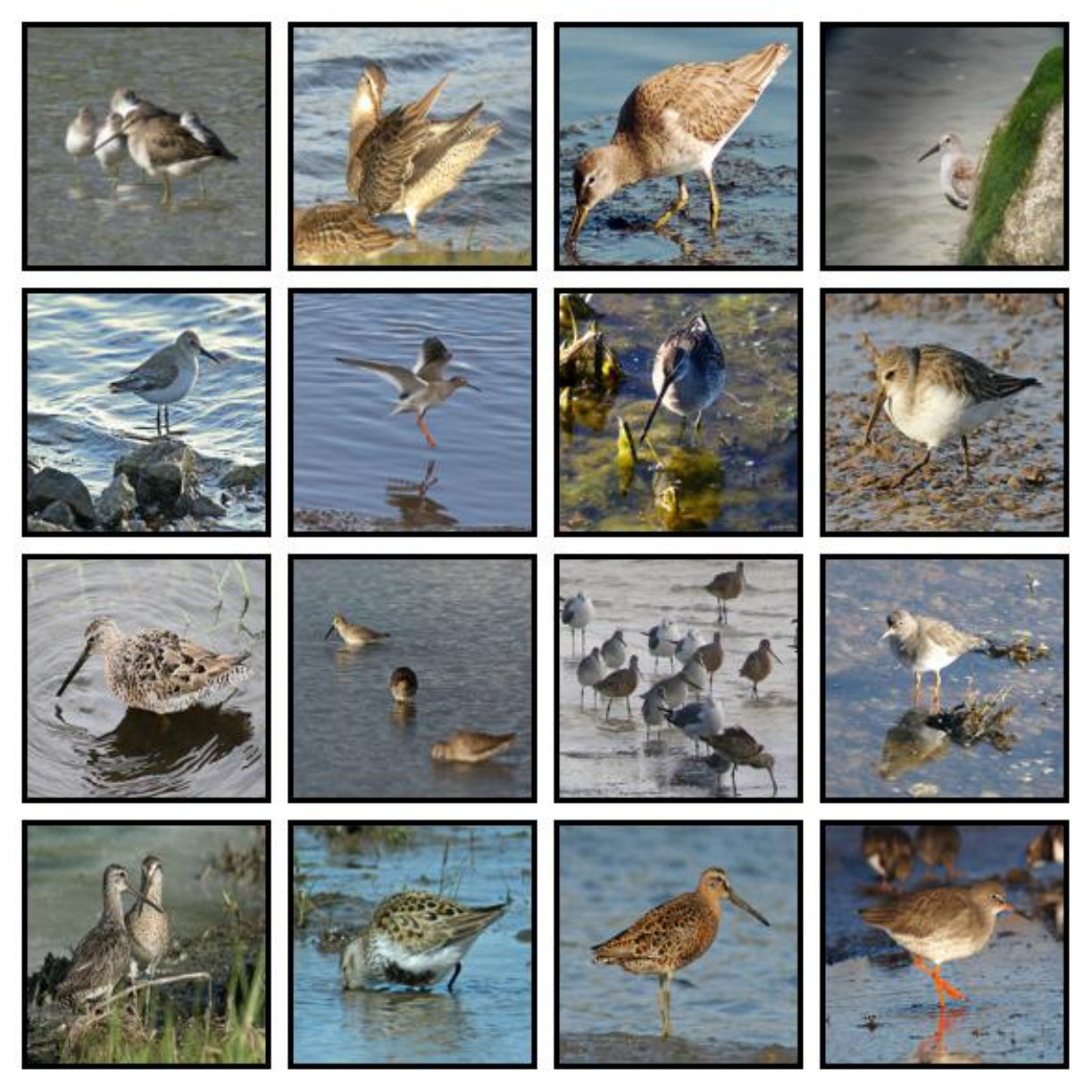} 
\end{tabular}
\caption{\textbf{\textit{Low} accuracy classes predicted by \textit{Self-Classifier} on ImageNet validation set (unseen during training)}. Images are sampled \textbf{randomly} from each predicted class. We see that even for low accuracy classes, unsupervised semantic grouping is reasonably good. A lot of the misclassified samples (with regards to ground truth) is due to subclasses from the same superclass that are clustered together (please see \cref{table:unsupervised_image_classification_superclasses} for empirical results). For example, different types of birds, monkeys, fruits, spyders, sea animals, helmets, screens (TV vs computer screen), etc. In practice, such clustering is good, yet is not aligned with ImageNet ground truth and thus the accuracy of such clusters is low.}
\label{fig:low_acc_classes}
\end{figure}

\newpage

\section{Theoretical Analysis}
\label{section:appendix_theoretical}
In this section, we show mathematically how \textit{Self-Classifier} avoids trivial solutions by design, i.e., a collapsing solution is simply not in the set of optimal solutions of our proposed loss function \cref{eq:ce_ours}. Furthermore, and more importantly, we show that by setting $p(y)$ manually, the required optimal solution can be defined by the user.

\begin{theorem}[Non-Zero Posterior Probability]
\label{theorem:appendix_non_zero_prob}
Let $B$ be a batch of $N$ samples with two views per sample, $(x_1, x_2) \in B$. Let $p(y)$ and $p(x)$ be the class and sample distributions, respectively, where $y \in [C]$. Let \cref{eq:L_sym} be the loss function. Then, each class $y \in [C]$ will have at least one sample $y \in [C]$ with non-zero posterior probability $p(x|y) > 0$ assigned into it, and each sample $x \in [N]$ will have at least one class  $y \in [C]$ with $p(x|y) > 0$.
\end{theorem}

\begin{proof}
Consider the non-negative denominators $\sum_{y \in [C]}{p(x_2|y)}$ and $\sum_{x_1 \in [B]}{p(y|x_1)}$ of the first and the second term of \cref{eq:ce_ours}, respectively. To avoid unbounded loss, both denominators must be strictly positive. Thus, for every $x_2 \in [N]$ there is $y \in [C]$ such as $p(x_2|y) > 0$, and for every $y \in [C]$ there is $x_1 \in [N]$ such that $p(y|x_1) > 0$.
Using the symmetrized objective loss function \cref{eq:L_sym} allows dropping the subscripts $_1$ and $_2$ of $x$.
\end{proof}

\begin{theorem}[Optimal Solution With Uniform Prior]
\label{theorem:appendix_custom_optimal_sol}
Let $B$ be a batch of $N$ samples with two views per sample, $(x_1, x_2) \in B$. Let $p(y)$ and $p(x)$ be the class and sample distribution, respectively, where $y \in [C]$. Then, the uniform probabilities $p(y) = \frac{1}{C}$ and $p(x) = \frac{1}{N}$ constitute a global minimizer of the loss \cref{eq:ce_ours}.
\end{theorem}

\begin{proof}
The first term $\frac{p(x_2|y)}{\sum_{\tilde{y}}{p(x_2|\tilde{y})}}$ summed over $y$ in \cref{eq:ce_ours} is normalized such that the probabilities assigned to each $x_2 \in [N]$ sum to one. Additionally, setting $p(y) = \frac{1}{C}$ and $p(x) = \frac{1}{N}$ renders the argument  $\frac{N}{C}\frac{p(y|x_1)}{\sum_{\tilde{x}_1}{p(y|\tilde{x}_1)}}$ of the logarithm in the second term in \cref{eq:ce_ours} normalized such that the probabilities assigned to each $y \in [C]$ sum to $\frac{N}{C}$. Thus, an optimal zero loss solution is one in which $\forall y \in [C]$, exactly $\frac{N}{C}$ samples are assigned a probability of one, and $\forall x \in [N]$, exactly $1$ class is assigned probability of one, which concludes the proof.
\end{proof}

\section{Unsupervised Classification Accuracy \& Training Loss Vs. Number of Epochs}
\label{section:acc_loss_vs_epochs}
In \cref{fig:clustering_acc} we plot the unsupervised classification accuracy and training loss vs. the number of training epochs of \textit{Self-Classifier}. This figure further illustrates the fact that the training objective of \textit{Self-Classifier} is aligned with semantically meaningful classification of unseen data (ImageNet validation set) despite being trained strictly with unlabeled data.

\begin{figure}[ht]
    \centering
    \includegraphics[width=0.8\linewidth]{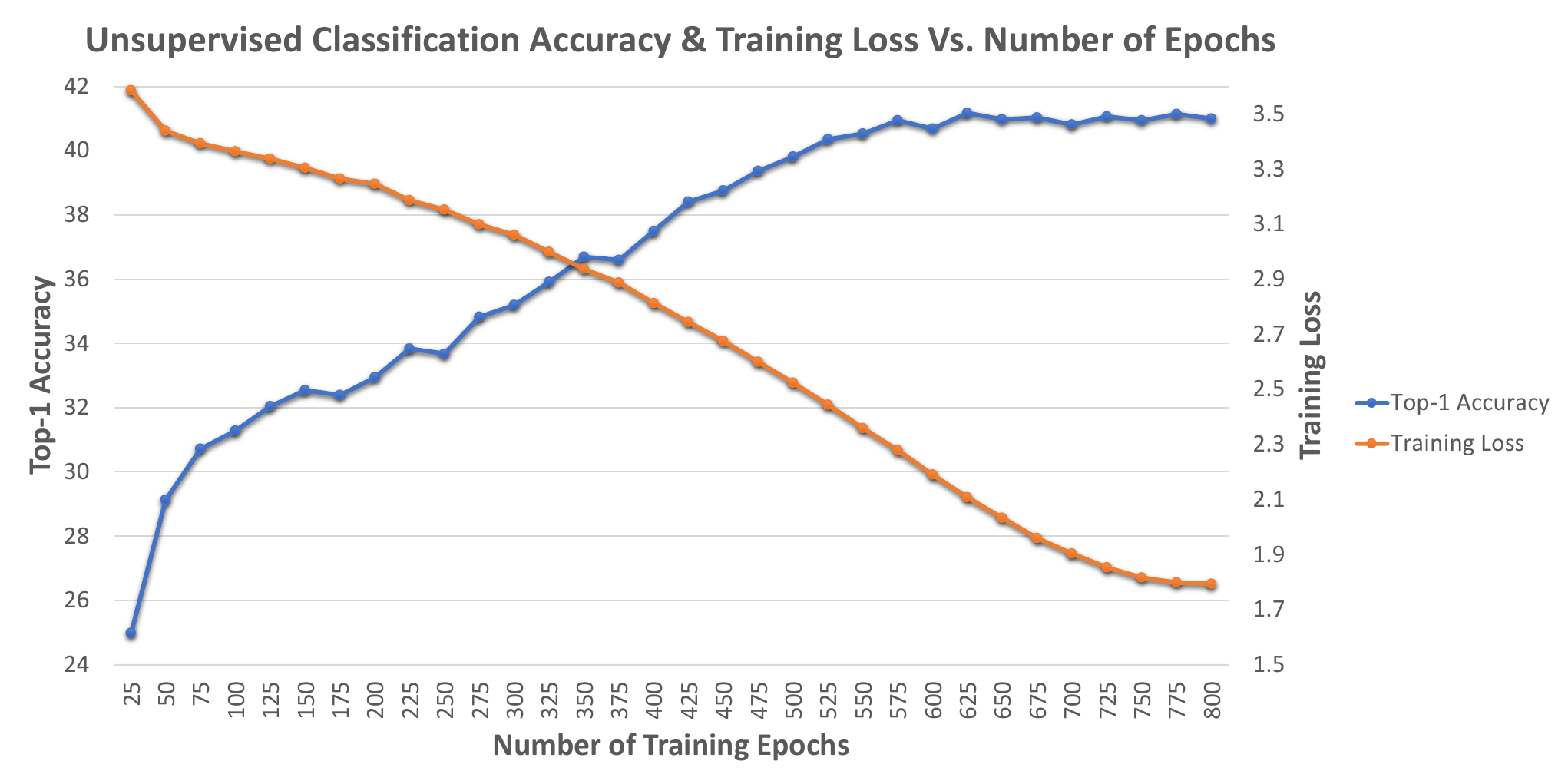}
    \caption{\textbf{Unsupervised classification accuracy \& training loss vs. number of training epochs.}}
    \label{fig:clustering_acc} 
\end{figure}

\newpage
\section{Superclasses Datasets}
\label{section:superclassess_datasets}

In \cref{table:unsupervised_image_classification_superclasses} we report results for different numbers of ImageNet superclasses (10, 29, 128, 466 and 591) resulting from cutting the default (WordNet) ImageNet hierarchy on different levels - levels 2 to 6, respectively. More specifically:
\begin{enumerate}
    \item Level 2 contains 10 superclasses -  \textit{furnishing; structure, place; fungus; conveyance, transport; plant, flora, plant life; apparel, toiletries; animal, animate being, beast, brute, creature, fauna; food, nutrient; paraphernalia; person.}
    \item Level 3 contains 29 superclasses - \textit{echinoderm; bird; wheeled vehicle; furniture, piece of furniture, article of furniture; soft furnishings, accessories; produce, green goods, green groceries, garden truck; mollusk, mollusc, shellfish; vascular plant, tracheophyte; man-made structure, construction; instrument; accessory, accoutrement, accouterment; geological formation, formation; appliance; mammal, mammalian; amphibian; garment; fungus; craft; reptile, reptilian; arthropod; cooked food, prepared food; fish; sled, sledge, sleigh; beverage, drink, drinkable, potable; person; toiletry, toilet articles; equipment; worm; coelenterate, cnidarian.}
    \item Level 4 contains 128 superclasses - \textit{dummy8; dummy6; seat; gastropod, univalve; building, edifice; spacecraft, ballistic capsule, space vehicle; medical instrument; motor vehicle, automotive vehicle; alcohol, alcoholic drink, alcoholic beverage, intoxicant, inebriant; train, railroad train; rodent, gnawer; lagomorph, gnawing mammal; bag; dummy53; dummy79; dummy0; anthozoan, actinozoan; piece of cloth, piece of material; dummy68; tool; screen; barrier; chiton, coat-of-mail shell, sea cradle, polyplacophore; baked goods; insect; neckwear; arachnid, arachnoid; person, individual, someone, somebody, mortal, soul; landing, landing place; bandage, patch; bony fish; area; gallinaceous bird, gallinacean; dummy77; dummy69; handwear, hand wear; dummy67; mountain, mount; coffee, java; bedclothes, bed clothing, bedding; dummy73; dummy14; facial accessories; dummy1; primate; sports equipment; aircraft; dryer, drier; proboscidean, proboscidian; kitchen utensil; centipede; wall unit; vegetable, veggie, veg; headdress, headgear; basidiomycete, basidiomycetous fungi; monotreme, egg-laying mammal; scientific instrument; aquatic mammal; crustacean; archosaur, archosaurian, archosaurian reptile; musical instrument, instrument; tracked vehicle; ascomycete; ungulate, hoofed mammal; chelonian, chelonian reptile; electronic equipment; dummy51; dummy71; dummy11; footwear, legwear; dummy39; dessert, sweet, afters; dummy56; cuculiform bird; fruit; door; carnivore; piciform bird; crocodilian reptile, crocodilian; dummy78; cosmetic; garment; aquatic bird; table; floor cover, floor covering; cartilaginous fish, chondrichthian; sled, sledge, sleigh; coraciiform bird; dummy10; serpentes; rig; vessel, watercraft; cycles; condiment; dummy13; flower; cephalopod, cephalopod mollusk; ratite, ratite bird, flightless bird; reef; spring, fountain, outflow, outpouring, natural spring; apodiform bird; dish; marsupial, pouched mammal; dummy15; tableware; bird of prey, raptor, raptorial bird; measuring instrument, measuring system, measuring device; lamp; passerine, passeriform bird; shore; photographic equipment; home appliance, household appliance; edentate; trilobite; dummy50; bedroom furniture; tank, storage tank; dummy12; armor; dummy7; saurian; weapon, arm, weapon system; dummy72; dummy57; cart; bar; dummy74; padding, cushioning.}
    
    where, \textit{dummy8 is European fire salamander, Salamandra salamandra; eft; common newt, Triturus vulgaris; spotted salamander, Ambystoma maculatum; axolotl, mud puppy, Ambystoma mexicanum. dummy6 is African grey, African gray, Psittacus erithacus; sulphur-crested cockatoo, Kakatoe galerita, Cacatua galerita; lorikeet; macaw. dummy52 is plastic bag; purse; backpack, back pack, knapsack, packsack, rucksack, haversack; mailbag, postbag. dummy53 is buckle. dummy79 is chest. dummy0 is cock. dummy68 is cliff, drop, drop-off. dummy77 is brass, memorial tablet, plaque; triumphal arch; megalith, megalithic structure. dummy69 is promontory, headland, head, foreland. dummy67 is fountain. dummy73 is viaduct; suspension bridge; steel arch bridge. dummy14 is flatworm, platyhelminth. dummy34 is mask; sunglasses, dark glasses, shades; gasmask, respirator, gas helmet; ski mask. dummy1 is hen. dummy19 is gyromitra. dummy51 is crutch. dummy71 is valley, vale. dummy11 is starfish, sea star. dummy39 is scabbard; holster. dummy56 is shower curtain; theater curtain, theatre curtain. dummy78 is totem pole; pedestal, plinth, footstall; obelisk. dummy10 is jellyfish. dummy9 is ringneck snake, ring-necked snake, ring snake; king snake, kingsnake; thunder snake, worm snake, Carphophis amoenus; green mamba; boa constrictor, Constrictor constrictor; vine snake; garter snake, grass snake; horned viper, cerastes, sand viper, horned asp, Cerastes cornutus; diamondback, diamondback rattlesnake, Crotalus adamanteus; hognose snake, puff adder, sand viper; sidewinder, horned rattlesnake, Crotalus cerastes; Indian cobra, Naja naja; sea snake; water snake; green snake, grass snake; night snake, Hypsiglena torquata; rock python, rock snake, Python sebae. dummy30 is mountain bike, all-terrain bike, off-roader; unicycle, monocycle; tricycle, trike, velocipede; bicycle-built-for-two, tandem bicycle, tandem. dummy13 is sea urchin. dummy15 is nematode, nematode worm, roundworm. dummy50 is handkerchief, hankie, hanky, hankey. dummy12 is sea cucumber, holothurian. dummy35 is chain mail, ring mail, mail, chain armor, chain armour, ring armor, ring armour; breastplate, aegis, egis; cuirass; bulletproof vest. dummy7 is tree frog, tree-frog; bullfrog, Rana catesbeiana; tailed frog, bell toad, ribbed toad, tailed toad, Ascaphus trui. dummy72 is traffic light, traffic signal, stoplight; scoreboard; street sign. dummy57 is umbrella. dummy74 is maze, labyrinth.}
    \item Level 5 contains 466 superclasses - \textit{jellyfish; owl, bird of Minerva, bird of night, hooter; coral fungus; screwdriver; rotisserie; sharpener; dummy45; flatworm, platyhelminth; car, railcar, railway car, railroad car; timepiece, timekeeper, horologe; bottle; chest of drawers, chest, bureau, dresser; eel; towel; fan; sandbar, sand bar; tripod; orchid, orchidaceous plant; squirrel; sturgeon; plate rack; hip, rose hip, rosehip; cicada, cicala; balloon; otter; plate; seashore, coast, seacoast, sea-coast; phasianid; rabbit, coney, cony; warthog; opener; plow, plough; bun, roll; dip; compass; lemur; rug, carpet, carpeting; waffle iron; alp; gymnastic apparatus, exerciser; wild dog; pudding, pud; junco, snowbird; harvestman, daddy longlegs, Phalangium opilio; column, pillar; horse cart, horse-cart; earthstar; mug; bookcase; leafhopper; lesser panda, red panda, panda, bear cat, cat bear, Ailurus fulgens; window shade; punch; mercantile establishment, retail store, sales outlet, outlet; bannister, banister, balustrade, balusters, handrail; espresso; cheetah, chetah, Acinonyx jubatus; clog, geta, patten, sabot; acorn; platypus, duckbill, duckbilled platypus, duck-billed platypus, Ornithorhynchus anatinus; mink; remote control, remote; percussion instrument, percussive instrument; hyena, hyaena; duck; dam, dike, dyke; parrot; wine, vino; alligator, gator; abacus; spatula; spectacles, specs, eyeglasses, glasses; curtain, drape, drapery, mantle, pall; memorial, monument; snipe; sock; buckeye, horse chestnut, conker; monitor; glove; sheath; mosquito net; brambling, Fringilla montifringilla; beaver; snorkel; armadillo; crayfish, crawfish, crawdad, crawdaddy; strawberry; space heater; cricket; custard apple; computer keyboard, keypad; telephone, phone, telephone set; black-footed ferret, ferret, Mustela nigripes; robin, American robin, Turdus migratorius; titmouse, tit; amphibian, amphibious vehicle; tape player; bobsled, bobsleigh, bob; toaster; penguin; cuckoo; magpie; bed; hamster; coral; cannon; mat; snail; home theater, home theatre; wind instrument, wind; goldfish, Carassius auratus; undergarment, unmentionable; patio, terrace; tank, army tank, armored combat vehicle, armoured combat vehicle; snowmobile; pin; oystercatcher, oyster catcher; mongoose; scarf; gyromitra; isopod; swan; adhesive bandage; firearm, piece, small-arm; water ouzel, dipper; bell cote, bell cot; boot; dummy46; snake, serpent, ophidian; albatross, mollymawk; damselfly; marmot; weasel; coffee maker; submarine, pigboat, sub, U-boat; saltshaker, salt shaker; cap; beetle; slug; lock; whale; wild boar, boar, Sus scrofa; ray; makeup, make-up, war paint; limpkin, Aramus pictus; geyser; measuring cup; lizard; jug; photocopier; radio, wireless; buckle; plunger, plumber's helper; hairpiece, false hair, postiche; potato, white potato, Irish potato, murphy, spud, tater; lion, king of beasts, Panthera leo; sloth, tree sloth; passenger train; shoe; dummy49; shovel; salmon; stork; helmet; bustard; daisy; wardrobe, closet, press; desk; puffer, pufferfish, blowfish, globefish; cooker; tights, leotards; hen; mouse, computer mouse; dwelling, home, domicile, abode, habitation, dwelling house; reservoir; binoculars, field glasses, opera glasses; echidna, spiny anteater, anteater; jacamar; bag; entertainment center; fly; bow; skunk, polecat, wood pussy; domestic cat, house cat, Felis domesticus, Felis catus; dogsled, dog sled, dog sleigh; stew; cattle, cows, kine, oxen, Bos taurus; lighter, light, igniter, ignitor; dock, dockage, docking facility; tiger, Panthera tigris; porcupine, hedgehog; cream, ointment, emollient; chain saw, chainsaw; crab; lynx, catamount; knife; necktie, tie; monkey; broccoli; bib; stove; missile; sewing machine; car, auto, automobile, machine, motorcar; screw; grasshopper, hopper; chair; hare; display, video display; crane; lobster; audio system, sound system; bus, autobus, coach, charabanc, double-decker, jitney, motorbus, motorcoach, omnibus, passenger vehicle; outbuilding; jackfruit, jak, jack; lens cap, lens cover; apple; projectile, missile; coral reef; chest; meerkat, mierkat; spoon; crutch; mushroom; squash; coat; sheep; cliff, drop, drop-off; pot; television, television system; jaguar, panther, Panthera onca, Felis onca; scale, weighing machine; llama; acarine; sea slug, nudibranch; valley, vale; oxcart; bee eater; body armor, body armour, suit of armor, suit of armour, coat of mail, cataphract; rapeseed; window screen; lawn mower, mower; cucumber, cuke; fox; butterfly; racket, racquet; cassette player; grouse; hand blower, blow dryer, blow drier, hair dryer, hair drier; giant panda, panda, panda bear, coon bear, Ailuropoda melanoleuca; power drill; volcano; seal; pomegranate; refrigerator, icebox; toucan; horse, Equus caballus; salamander; damselfish, demoiselle; sea urchin; stinkhorn, carrion fungus; sweater, jumper; sandpiper; stocking; dummy47; shoji; spoonbill; hummingbird; frozen dessert; stage; pizza, pizza pie; sea cow, sirenian mammal, sirenian; microscope; kangaroo; goldfinch, Carduelis carduelis; cougar, puma, catamount, mountain lion, painter, panther, Felis concolor; mantis, mantid; reel; fire screen, fireguard; iron, smoothing iron; jewelry, jewellery; bee; nail; heron; gar, garfish, garpike, billfish, Lepisosteus osseus; pelican; maze, labyrinth; projector; cracker; torch; phalanger, opossum, possum; sandwich; boat; fig; cauliflower; vulture; quilt, comforter, comfort, puff; cushion; factory, mill, manufacturing plant, manufactory; ear, spike, capitulum; CD player; pool table, billiard table, snooker table; barometer; ape; butterfly fish; modem; skirt; fountain; soup; sliding door; hippopotamus, hippo, river horse, Hippopotamus amphibius; indigo bunting, indigo finch, indigo bird, Passerina cyanea; groom, bridegroom; antelope; pepper; microwave, microwave oven; bowl; cabbage, chou; shirt; eagle, bird of Jove; wombat; paintbrush; Old World buffalo, buffalo; goose; stethoscope; microphone, mike; lacewing, lacewing fly; burrito; guinea pig, Cavia cobaya; elephant; washer, automatic washer, washing machine; scorpion; airship, dirigible; hair spray; handkerchief, hankie, hanky, hankey; spider; scuba diver; sofa, couch, lounge; dragonfly, darning needle, devil's darning needle, sewing needle, snake feeder, snake doctor, mosquito hawk, skeeter hawk; breakwater, groin, groyne, mole, bulwark, seawall, jetty; jinrikisha, ricksha, rickshaw; locomotive, engine, locomotive engine, railway locomotive; toilet seat; roof; knee pad; dishrag, dishcloth; keyboard instrument; hat, chapeau, lid; apron; streetcar, tram, tramcar, trolley, trolley car; tricycle, trike, velocipede; shark; table lamp; banana; jay; clip; ski; baby bed, baby's bed; ball; barracouta, snoek; weight, free weight, exercising weight; fence, fencing; nematode, nematode worm, roundworm; lampshade, lamp shade; unicycle, monocycle; hermit crab; camel; frog, toad, toad frog, anuran, batrachian, salientian; sauce; promontory, headland, head, foreland; bridge, span; motorcycle, bike; ballplayer, baseball player; artichoke, globe artichoke; nightwear, sleepwear, nightclothes; orange; airplane, aeroplane, plane; place of worship, house of prayer, house of God, house of worship; polecat, fitch, foulmart, foumart, Mustela putorius; potpie; sea cucumber, holothurian; oscilloscope, scope, cathode-ray oscilloscope, CRO; camera, photographic camera; bear; house finch, linnet, Carpodacus mexicanus; starfish, sea star; dining table, board; wild sheep; badger; mushroom; pen; carpenter's kit, tool kit; scorpaenid, scorpaenid fish; digital computer; zebra; pineapple, ananas; drilling platform, offshore rig; truck, motortruck; signboard, sign; cock; bison; vacuum, vacuum cleaner; nightingale, Luscinia megarhynchos; hornbill; military uniform; cabinet; perfume, essence; chambered nautilus, pearly nautilus, nautilus; lakeside, lakeshore; cardoon; snow leopard, ounce, Panthera uncia; meat loaf, meatloaf; dress, frock; wolf; dishwasher, dish washer, dishwashing machine; brace; tray; tower; eraser; ostrich, Struthio camelus; theater, theatre, house; glass, drinking glass; warplane, military plane; sea anemone, anemone; hog, pig, grunter, squealer, Sus scrofa; ant, emmet, pismire; bicycle, bike, wheel, cycle; crocodile; lemon; leopard, Panthera pardus; meter; dinosaur; suit, suit of clothes; hammer; goat, caprine animal; trouser, pant; cockroach, roach; half track; loaf of bread, loaf; conch; gallinule, marsh hen, water hen, swamphen; dog, domestic dog, Canis familiaris; turtle; gate; corn; flamingo; face mask; hawk; ladle; pan, cooking pan; cocktail shaker; umbrella; rail; space shuttle; printer; stringed instrument; syringe; swimsuit, swimwear, bathing suit, swimming costume, bathing costume; measuring stick, measure, measuring rod; handcart, pushcart, cart, go-cart; ship; plover; joystick; walking stick, walkingstick, stick insect; tench, Tinca tinca; telescope, scope.}
    
    where, dummy45 is library. dummy46 is planetarium. dummy49 is restaurant, eating house, eating place, eatery. dummy47 is prison, prison house. dummy20 is hen-of-the-woods, hen of the woods, Polyporus frondosus, Grifola frondosa; agaric; bolete.
    \item Level 6 contains 591 superclasses - \textit{German shepherd, German shepherd dog, German police dog, alsatian; hen-of-the-woods, hen of the woods, Polyporus frondosus, Grifola frondosa; Maltese dog, Maltese terrier, Maltese; pirate, pirate ship; cellular telephone, cellular phone, cellphone, cell, mobile phone; red fox, Vulpes vulpes; Dandie Dinmont, Dandie Dinmont terrier; sweet pepper; suspension bridge; spotted salamander, Ambystoma maculatum; kuvasz; coho, cohoe, coho salmon, blue jack, silver salmon, Oncorhynchus kisutch; violin, fiddle; gown; lory; schooner; organ, pipe organ; Saluki, gazelle hound; hot pot, hotpot; Border collie; briard; Windsor tie; kite; Norwich terrier; mask; china cabinet, china closet; water jug; tractor; bonnet, poke bonnet; vault; boxer; Bernese mountain dog; tarantula; Tibetan terrier, chrysanthemum dog; marmoset; ice cream, icecream; eggnog; cloak; king crab, Alaska crab, Alaskan king crab, Alaska king crab, Paralithodes camtschatica; python; rail fence; accordion, piano accordion, squeeze box; birdhouse; Samoyed, Samoyede; caldron, cauldron; winter squash; night snake, Hypsiglena torquata; fox squirrel, eastern fox squirrel, Sciurus niger; agaric; head cabbage; thunder snake, worm snake, Carphophis amoenus; Arabian camel, dromedary, Camelus dromedarius; padlock; stingray; Australian terrier; dalmatian, coach dog, carriage dog; shower curtain; spiny lobster, langouste, rock lobster, crawfish, crayfish, sea crawfish; minibike, motorbike; lipstick, lip rouge; fox terrier; Egyptian cat; egret; cornet, horn, trumpet, trump; tow truck, tow car, wrecker; ski mask; espresso maker; Rottweiler; pickelhaube; vizsla, Hungarian pointer; basenji; kit fox, Vulpes macrotis; frying pan, frypan, skillet; whiskey jug; automatic firearm, automatic gun, automatic weapon; weevil; safety pin; cardigan; coyote, prairie wolf, brush wolf, Canis latrans; Shih-Tzu; siamang, Hylobates syndactylus, Symphalangus syndactylus; redshank, Tringa totanus; Eskimo dog, husky; West Highland white terrier; dugong, Dugong dugon; timer; harmonica, mouth organ, harp, mouth harp; malamute, malemute, Alaskan malamute; laptop, laptop computer; red wolf, maned wolf, Canis rufus, Canis niger; bulbul; mamba; chiffonier, commode; shoe shop, shoe-shop, shoe store; axolotl, mud puppy, Ambystoma mexicanum; mosque; black and gold garden spider, Argiope aurantia; snowplow, snowplough; red wine; bulletproof vest; hognose snake, puff adder, sand viper; recreational vehicle, RV, R.V.; bagel, beigel; Ibizan hound, Ibizan Podenco; street sign; ballpoint, ballpoint pen, ballpen, Biro; lumbermill, sawmill; king snake, kingsnake; kelpie; rifle; cup; picket fence, paling; golf ball; bakery, bakeshop, bakehouse; trolleybus, trolley coach, trackless trolley; Gordon setter; indri, indris, Indri indri, Indri brevicaudatus; tailed frog, bell toad, ribbed toad, tailed toad, Ascaphus trui; lotion; fountain pen; black-and-tan coonhound; cobra; cowboy hat, ten-gallon hat; ringneck snake, ring-necked snake, ring snake; wreck; loudspeaker, speaker, speaker unit, loudspeaker system, speaker system; partridge; dial telephone, dial phone; electric fan, blower; bloodhound, sleuthhound; wallaby, brush kangaroo; iguanid, iguanid lizard; sailboat, sailing boat; dome; Greater Swiss Mountain dog; airliner; cab, hack, taxi, taxicab; macaque; sandal; sulphur butterfly, sulfur butterfly; spaghetti sauce, pasta sauce; bathing cap, swimming cap; gibbon, Hylobates lar; African elephant, Loxodonta africana; soup bowl; miniature pinscher; bittern; pullover, slipover; mobile home, manufactured home; purple gallinule; Band Aid; Italian greyhound; cockatoo; poncho; Doberman, Doberman pinscher; prairie chicken, prairie grouse, prairie fowl; hammerhead, hammerhead shark; capuchin, ringtail, Cebus capucinus; purse; lifeboat; poodle, poodle dog; French horn, horn; shower cap; grey whale, gray whale, devilfish, Eschrichtius gibbosus, Eschrichtius robustus; sloth bear, Melursus ursinus, Ursus ursinus; tabby, tabby cat; African hunting dog, hyena dog, Cape hunting dog, Lycaon pictus; drum, membranophone, tympan; church, church building; pencil sharpener; long-horned beetle, longicorn, longicorn beetle; canoe; Persian cat; folding chair; green snake, grass snake; prison, prison house; rock crab, Cancer irroratus; ocarina, sweet potato; gecko; box turtle, box tortoise; timber wolf, grey wolf, gray wolf, Canis lupus; overskirt; screen, CRT screen; wirehair, wirehaired terrier, wire-haired terrier; wig; English foxhound; tree frog, tree-frog; thatch, thatched roof; ibex, Capra ibex; pajama, pyjama, pj's, jammies; harp; pistol, handgun, side arm, shooting iron; Lhasa, Lhasa apso; brass, memorial tablet, plaque; Weimaraner; wooden spoon; tiger cat; terrapin; fire engine, fire truck; library; fireboat; bolo tie, bolo, bola tie, bola; lady's slipper, lady-slipper, ladies' slipper, slipper orchid; electric ray, crampfish, numbfish, torpedo; Norfolk terrier; vestment; pipe; coffee mug; racer, race car, racing car; scarabaeid beetle, scarabaeid, scarabaean; redbone; totem pole; miniskirt, mini; Boston bull, Boston terrier; swimming trunks, bathing trunks; barber chair; file, file cabinet, filing cabinet; grocery store, grocery, food market, market; limousine, limo; punching bag, punch bag, punching ball, punchball; face powder; balance beam, beam; warship, war vessel, combat ship; scoreboard; guacamole; hand-held computer, hand-held microcomputer; four-poster; neck brace; dowitcher; sports car, sport car; komondor; pay-phone, pay-station; wok; barrow, garden cart, lawn cart, wheelbarrow; fur coat; anguid lizard; Pekinese, Pekingese, Peke; grey fox, gray fox, Urocyon cinereoargenteus; little blue heron, Egretta caerulea; rugby ball; anemone fish; eating apple, dessert apple; maillot; Labrador retriever; raincoat, waterproof; Brittany spaniel; pickup, pickup truck; tick; vine snake; bikini, two-piece; langur; odometer, hodometer, mileometer, milometer; croquet ball; running shoe; convertible, sofa bed; theater curtain, theatre curtain; pug, pug-dog; pop bottle, soda bottle; breastplate, aegis, egis; kimono; white stork, Ciconia ciconia; ladybug, ladybeetle, lady beetle, ladybird, ladybird beetle; palace; koala, koala bear, kangaroo bear, native bear, Phascolarctos cinereus; true frog, ranid; Irish setter, red setter; schipperke; chameleon, chamaeleon; triumphal arch; tiger beetle; garden spider, Aranea diademata; Rhodesian ridgeback; great grey owl, great gray owl, Strix nebulosa; Crock Pot; golfcart, golf cart; maraca; soccer ball; colobus, colobus monkey; bath towel; bow tie, bow-tie, bowtie; steam locomotive; summer squash; water bottle; bicycle-built-for-two, tandem bicycle, tandem; Afghan hound, Afghan; ptarmigan; toy terrier; yurt; European fire salamander, Salamandra salamandra; water tower; red-backed sandpiper, dunlin, Erolia alpina; restaurant, eating house, eating place, eatery; stole; ping-pong ball; minivan; academic gown, academic robe, judge's robe; steel drum; gorilla, Gorilla gorilla; cello, violoncello; wood rabbit, cottontail, cottontail rabbit; rock beauty, Holocanthus tricolor; black grouse; toyshop; garter snake, grass snake; trailer truck, tractor trailer, trucking rig, rig, articulated lorry, semi; rubber eraser, rubber, pencil eraser; hamburger, beefburger, burger; black widow, Latrodectus mactans; goblet; quill, quill pen; school bus; sax, saxophone; hair slide; ram, tup; steel arch bridge; magnetic compass; Walker hound, Walker foxhound; lycaenid, lycaenid butterfly; Irish wolfhound; agamid, agamid lizard; cliff dwelling; impala, Aepyceros melampus; EntleBucher; sarong; jersey, T-shirt, tee shirt; fiddler crab; Scotch terrier, Scottish terrier, Scottie; curly-coated retriever; ice bear, polar bear, Ursus Maritimus, Thalarctos maritimus; sea lion; rule, ruler; crash helmet; medicine chest, medicine cabinet; shopping cart; chocolate sauce, chocolate syrup; gondola; barn; wolf spider, hunting spider; Yorkshire terrier; ruffed grouse, partridge, Bonasa umbellus; rocking chair, rocker; freight car; go-kart; Mexican hairless; bolete; obelisk; speedboat; horned viper, cerastes, sand viper, horned asp, Cerastes cornutus; barbell; Great Pyrenees; lab coat, laboratory coat; Model T; mashed potato; flat-coated retriever; bullet train, bullet; black stork, Ciconia nigra; mitten; greenhouse, nursery, glasshouse; Loafer; sunscreen, sunblock, sun blocker; passenger car, coach, carriage; coot; Sussex spaniel; Dungeness crab, Cancer magister; Leonberg; Madagascar cat, ring-tailed lemur, Lemur catta; tobacco shop, tobacconist shop, tobacconist; jean, blue jean, denim; parallel bars, bars; scabbard; Angora, Angora rabbit; combination lock; dumbbell; collie; watch, ticker; sundial; hartebeest; prayer rug, prayer mat; American black bear, black bear, Ursus americanus, Euarctos americanus; football helmet; pierid, pierid butterfly; Norwegian elkhound, elkhound; iPod; king penguin, Aptenodytes patagonica; chair of state; guenon, guenon monkey; water spaniel; shrine; traffic light, traffic signal, stoplight; Arctic fox, white fox, Alopex lagopus; Siamese cat, Siamese; notebook, notebook computer; toy spaniel; patas, hussar monkey, Erythrocebus patas; spider monkey, Ateles geoffroyi; cocker spaniel, English cocker spaniel, cocker; barbershop; chickadee; water buffalo, water ox, Asiatic buffalo, Bubalus bubalis; cinema, movie theater, movie theatre, movie house, picture palace; moving van; soft-coated wheaten terrier; forklift; Chihuahua; Pomeranian; corgi, Welsh corgi; ruddy turnstone, Arenaria interpres; bassinet; peacock; jeep, landrover; sea snake; lacertid lizard, lacertid; mountain bike, all-terrain bike, off-roader; cairn, cairn terrier; necklace; barn spider, Araneus cavaticus; Kerry blue terrier; Polaroid camera, Polaroid Land camera; macaw; bullterrier, bull terrier; Saint Bernard, St Bernard; desktop computer; sandglass; can opener, tin opener; mountain sheep; flute, transverse flute; chain mail, ring mail, mail, chain armor, chain armour, ring armor, ring armour; chow, chow chow; tile roof; bull mastiff; Indian elephant, Elephas maximus; American alligator, Alligator mississipiensis; paper towel; sorrel; howler monkey, howler; Japanese spaniel; cleaver, meat cleaver, chopper; minibus; Great Dane; pedestal, plinth, footstall; silky terrier, Sydney silky; proboscis monkey, Nasalis larvatus; Siberian husky; convertible; quail; police van, police wagon, paddy wagon, patrol wagon, wagon, black Maria; merganser, fish duck, sawbill, sheldrake; marimba, xylophone; Airedale, Airedale terrier; abaya; springer spaniel, springer; piano, pianoforte, forte-piano; African grey, African gray, Psittacus erithacus; beer bottle; hand glass, simple microscope, magnifying glass; titi, titi monkey; African crocodile, Nile crocodile, Crocodylus niloticus; lionfish; gong, tam-tam; mortarboard; griffon, Brussels griffon, Belgian griffon; Dutch oven; megalith, megalithic structure; newt, triton; Newfoundland, Newfoundland dog; bookshop, bookstore, bookstall; clock; beagle; plastic bag; pillow; orangutan, orang, orangutang, Pongo pygmaeus; Border terrier; coucal; sunglasses, dark glasses, shades; white wolf, Arctic wolf, Canis lupus tundrarum; danaid, danaid butterfly; passenger ship; feather boa, boa; chime, bell, gong; garbage truck, dustcart; consomme; rattlesnake, rattler; ringlet, ringlet butterfly; teapot; holster; baseball; guitar; bearskin, busby, shako; Irish terrier; reflex camera; water snake; golden retriever; ox; bald eagle, American eagle, Haliaeetus leucocephalus; trombone; doormat, welcome mat; bluetick; beacon, lighthouse, beacon light, pharos; trifle; bottle opener; gazelle; killer whale, killer, orca, grampus, sea wolf, Orcinus orca; hoopskirt, crinoline; brain coral; borzoi, Russian wolfhound; Scottish deerhound, deerhound; crib, cot; motor scooter, scooter; volleyball; affenpinscher, monkey pinscher, monkey dog; three-toed sloth, ai, Bradypus tridactylus; clumber, clumber spaniel; castle; planetarium; cowboy boot; mixing bowl; chimpanzee, chimp, Pan troglodytes; whippet; shed; gasmask, respirator, gas helmet; banjo; cradle; confectionery, confectionary, candy store; Old English sheepdog, bobtail; viaduct; yawl; English setter; German short-haired pointer; pretzel; Christmas stocking; radio telescope, radio reflector; sombrero; drake; cuirass; tiger shark, Galeocerdo cuvieri; leaf beetle, chrysomelid; ground beetle, carabid beetle; Belgian sheepdog, Belgian shepherd; dhole, Cuon alpinus; baboon; beach wagon, station wagon, wagon, estate car, beach waggon, station waggon, waggon; Bouvier des Flandres, Bouviers des Flandres; chainlink fence; basset, basset hound; Appenzeller; butcher shop, meat market; electric locomotive; maillot, tank suit; otterhound, otter hound; mailbag, postbag; ambulance; brown bear, bruin, Ursus arctos; Chesapeake Bay retriever; keeshond; true lobster; mastiff; basketball; bulldog, English bulldog; monitor, monitor lizard, varan; coffeepot; oboe, hautboy, hautbois; horizontal bar, high bar; black swan, Cygnus atratus; brassiere, bra, bandeau; schnauzer; great white shark, white shark, man-eater, man-eating shark, Carcharodon carcharias; wine bottle; sea turtle, marine turtle; nymphalid, nymphalid butterfly, brush-footed butterfly, four-footed butterfly; squirrel monkey, Saimiri sciureus; diaper, nappy, napkin; tennis ball; boa constrictor, Constrictor constrictor; ornithischian, ornithischian dinosaur; ice lolly, lolly, lollipop, popsicle; French loaf; monastery; Bedlington terrier; dingo, warrigal, warragal, Canis dingo; cargo ship, cargo vessel; backpack, back pack, knapsack, packsack, rucksack, haversack; venomous lizard; bassoon; teiid lizard, teiid; stone wall; bench; mud turtle; hotdog, hot dog, red hot; beer glass; turnstile; Shetland sheepdog, Shetland sheep dog, Shetland.}
\end{enumerate}

\newpage
\section{BREEDS dataset}
\label{section:breeds_datasets}

In \cref{table:unsupervised_image_classification_breeds} we report the results on four ImageNet subpopulation datasets of BREEDS \cite{santurkar2020breeds} (Entity13, Entity30, Living17, Nonliving26). These datasets are accompanied by class hierarchies re-calibrated by \cite{santurkar2020breeds} such that classes on same hierarchy level are of the same visual granularity. More specifically, 
\begin{enumerate}
    \item Entity13 contains 13 superclasses - \textit{garment; bird; reptile, reptilian; arthropod; mammal, mammalian; accessory, accoutrement, accouterment; craft; equipment; furniture, piece of furniture, article of furniture; instrument; man-made structure, construction; wheeled vehicle; produce, green goods, green groceries, garden truck.}
    \item Entity30 contains 30 superclasses - \textit{serpentes; passerine, passeriform bird; saurian; arachnid, arachnoid; aquatic bird; crustacean; carnivore; insect; ungulate, hoofed mammal; primate; bony fish; barrier; building, edifice; electronic equipment; footwear, legwear; garment; headdress, headgear; home appliance, household appliance; kitchen utensil; measuring instrument, measuring system, measuring device; motor vehicle, automotive vehicle; musical instrument, instrument; neckwear; sports equipment; tableware; tool; vessel, watercraft; dish; vegetable, veggie, veg; fruit.}
    \item Living17 contains 17 superclasses - \textit{salamander; turtle; lizard; snake, serpent, ophidian; spider; grouse; parrot; crab; dog, domestic dog, Canis familiaris; wolf; fox; domestic cat, house cat, Felis domesticus, Felis catus; bear; beetle; butterfly; ape; monkey.}
    \item Nonliving26 contains 26 superclasses - \textit{bag; ball; boat; body armor, body armour, suit of armor, suit of armour, coat of mail, cataphract; bottle; bus, autobus, coach, charabanc, double-decker, jitney, motorbus, motorcoach, omnibus, passenger vehicle; car, auto, automobile, machine, motorcar; chair; coat; digital computer; dwelling, home, domicile, abode, habitation, dwelling house; fence, fencing; hat, chapeau, lid; keyboard instrument; mercantile establishment, retail store, sales outlet, outlet; outbuilding; percussion instrument, percussive instrument; pot; roof; ship; skirt; stringed instrument; timepiece, timekeeper, horologe; truck, motortruck; wind instrument, wind; squash.}
\end{enumerate}

\end{document}